\documentclass{article} 
\usepackage{iclr2026_conference,times}

\usepackage{amsmath,amsfonts,bm}

\def\eqref#1{equation~\ref{#1}}

\def\1{\bm{1}}

\DeclareMathAlphabet{\mathsfit}{\encodingdefault}{\sfdefault}{m}{sl}
\SetMathAlphabet{\mathsfit}{bold}{\encodingdefault}{\sfdefault}{bx}{n}

\usepackage[toc,page,header]{appendix}
\usepackage{minitoc}

\usepackage{colortbl}
\usepackage{pifont}

\usepackage{caption}
\usepackage{subcaption}

\usepackage{titletoc}

\usepackage[pagebackref=true]{hyperref}
\definecolor{mydarkblue}{rgb}{0,0.08,0.45}
\hypersetup{colorlinks=true, linkcolor=mydarkblue, citecolor=mydarkblue, urlcolor=mydarkblue}
\renewcommand*{\backref}[1]{}  
\renewcommand*{\backrefalt}[4]{%
  \ifcase #1 %
  \or
    (Page #2)%
  \else
    (Pages #2)%
  \fi}
  
\usepackage{url}
\usepackage{graphicx}
\usepackage{amsmath, amssymb, amsthm}
\usepackage{mathtools, cases}

\usepackage{booktabs}
\usepackage{tcolorbox}
\usepackage{multirow}
\usepackage{algorithm}
\usepackage{algpseudocode}
\usepackage{subcaption}
\usepackage{wrapfig}
\usepackage{float}
\newtheorem{proposition}{Proposition}

\newtheorem{lemma}{Lemma}

\newtheorem{theorem}{Theorem}
\newtheorem{corollary}{Corollary}
\newtheorem{definition}{Definition}
\usepackage{orcidlink}
\usepackage[misc]{ifsym}

\definecolor{indigo}{rgb}{0.0, 0.25, 0.42}

\title{Learning Causal Normalizing Flows from Incomplete Data via Observed-Data Likelihood}

\author{%
Trung-Dung Hoang$^{1,2,4,5(\textrm{\Letter})}$, Alceu Bissoto$^{1,2,5}$, Tim Flühmann$^{1,2,4,5}$, David Herzig$^{2}$, \\
\textbf{Christos Nakas}$^{3,6}$, \textbf{Lia Bally}$^{2}$ \& \textbf{Lisa M. Koch}$^{1,2,5(\textrm{\Letter})}$ \\[0.3em]
\small $^{1}$Department of Digital Medicine, University of Bern, Switzerland \\
\small $^{2}$Department of Diabetes, Endocrinology, Nutritional Medicine and Metabolism (UDEM), \\
\small \phantom{$^{2}$}Inselspital, Bern University Hospital, University of Bern, Switzerland \\
\small $^{3}$Department of Clinical Chemistry, Inselspital, Bern University Hospital, University of Bern, Switzerland \\
\small $^{4}$Graduate School for Cellular and Biomedical Sciences (GCB), University of Bern, Switzerland \\
\small $^{5}$Diabetes Center Berne, Switzerland \\
\small $^{6}$Laboratory of Biometry, School of Agriculture, University of Thessaly, 384 46 Volos, Greece \\
\small \texttt{\{trung.hoang,lisa.koch\}@unibe.ch}
}

\iclrfinalcopy 
\begin{document}

\maketitle

\begin{abstract}

Causal Normalizing Flows (CNFs) enable causal inference from observational data given the causal structure, but they assume fully observed training data. We introduce MissCNF, which trains CNFs directly on incomplete data by maximizing the marginal likelihood of each partially observed sample, without discarding rows or constructing a completed dataset. Thanks to the causal structure encoded in the autoregressive factorization of CNFs, only missing variables in the ancestral closure of the observed set are integrated out, while the others are dropped without computation. We further establish the conditions under which MissCNF recovers the true joint distribution, and introduce \emph{causal-family positivity}, where identification is possible even when no record in the dataset is ever complete. We compare MissCNF with two common strategies for handling missing data: listwise deletion and impute-then-fit pipelines. Across eight synthetic causal benchmarks, three missingness mechanisms, and missing rates up to $90\%$, MissCNF achieves the lowest KL divergence in 23 of 24 nonlinear MCAR and MAR settings and in all nonlinear MNAR settings, as well as the lowest counterfactual error in 20 of 24 settings. On linear SCMs, where linear imputation performs best, MissCNF ranks in the top two in 22 of 24 settings.

\end{abstract}

\section{Introduction}
\label{sec:intro}

Randomized controlled trials remain the reference standard for estimating causal effects, but they are expensive, slow, and in many settings impossible to run. This has motivated a large body of work on causal inference from observational data, including Deep Structural Causal Models \citep{pawlowski2020deep}, Variational Graph Autoencoders \citep{pablo2022vaca}, and Diffusion-based approaches \citep{chao2024modeling}. Causal Normalizing Flows (CNFs) \citep{javaloy2023causal} parameterize the structural equations with an invertible flow masked according to the causal graph. CNFs have two key advantages: they yield exact likelihoods rather than bounds, and their density factorizes along the graph, so a model fitted to observational data answers interventional and counterfactual queries directly.

However, like many machine learning methods, CNFs were developed assuming complete training data. This is a significant limitation in domains such as healthcare, where missing data is unavoidable. One option is to discard incomplete records, also known as \textit{listwise deletion}, which wastes data and, more importantly, introduces bias whenever the probability of being fully observed depends on the values themselves, which Missing At Random (MAR) does not rule out. The other common option is first to fit an imputation model, then fit the causal model on its output as if it were the true data \citep{parra2026jointtreatmenteffectestimation}. Treating imputed values as observed data discards the uncertainty in the imputation stage and lets any error there propagate into the causal modelling stage.

In this work, we introduce \textbf{MissCNF}, which trains CNFs directly on missing data, without discarding rows or constructing any completed dataset, by computing the marginal likelihood of an arbitrary observed subset of variables. The proposed method can be traced back to \textit{Full Information Maximum Likelihood} (FIML), originally developed for the multivariate normal distribution \citep{anderson1957maximum} and linear-Gaussian Structural Equation Models \citep{Lee_1986,Muthén_Kaplan_Hollis_1987}, in which the marginal likelihood is available in closed form. In our setting, the Structural Causal Models represented by CNFs are not necessarily linear and Gaussian, so no closed form is available. We instead compute the required marginal likelihood by a Monte Carlo estimator that exploits the Directed Acyclic Graph (DAG) factorization: it samples only missing variables in the ancestral closure of that observed subset, while variables outside the ancestral closure are marginalized without simulation.

We then study what can be learned from the resulting observed-data likelihood. Under Missing At Random (MAR), the missingness mechanism is ignorable for likelihood-based inference \citep{rubin1976inference}, but ignorability alone does not guarantee that the complete-data distribution is identified: if a set of variables is never jointly observed in some region of their support, the observed data carry no information about their joint distribution there.
Recovering the complete-data distribution therefore requires some form of support-overlap assumption relating what is missing to what is observed \citep{naf2026goodimputationmarmissingness}. Exploiting the causal structure a CNF encodes, we show that identifying the complete joint distribution requires only a \emph{causal-family positivity} condition that holds even if no row is ever fully observed: every variable must be jointly observed with its causal parents in some pattern, with positive probability for almost every value in the relevant support. We further show that MissCNF already recovers, as a byproduct, every conditional an imputation model would need to target.
Identification under MNAR requires additional assumptions and is left to future work, though our experiments on eight synthetic causal graphs, across three missingness mechanisms and multiple missingness rates, show MissCNF still performs well in different regimes, including MNAR.

Our contributions are as follows: 
(1) We derive a marginal-likelihood objective that trains CNFs directly from incomplete data by integrating only over missing ancestral variables (Section~\ref{sec:misscnf});
(2) Under MAR, we separate likelihood ignorability from identification and establish a \emph{causal-family positivity} condition that can be strictly weaker than \emph{full-pattern positivity} and is enough to recover the full joint even when no row is fully observed (Section~\ref{sec:mar-consistent});
(3) We characterize the population target of impute-then-fit pipelines and show that MissCNF recovers every such conditional as a byproduct (Section~\ref{sec:imputation-based});
(4) We empirically validate the method on eight synthetic SCMs, showing consistent improvements in density estimation, counterfactual prediction, and average treatment effect estimation over baselines across missingness mechanisms and rates (Section~\ref{sec:experiments}). 

\section{Background}
\label{sec:background}

\paragraph{Structural causal models.}
A \textit{Structural Causal Model} (SCM) \citep{Pearl_2009} $\mathcal{M} = (\mathbf{f}, P_\mathbf{u})$ describes a data-generating process that transforms $d$ mutually independent exogenous noise variables $\mathbf{u} = \{u_i\}_{i=1}^d \sim P_\mathbf{u}$ into $d$ observed endogenous variables $\mathbf{x}$ through structural equations $\mathbf f = \{f_i\}_{i=1}^d$:
$$
x_i = f_i\big(\mathbf{x}_{\mathrm{pa}(i)}, u_i\big), \qquad i = 1,\dots,d,
$$
where $\mathrm{pa}(i)$ are the parents of $i$. Each SCM induces a causal graph $G=(V,E)$ on $V=\{1,\dots,d\}$, with $(j,i)\in E$ if $j \in \mathrm{pa}(i)$; we fix a topological order $\prec$ compatible with $G$ throughout, so $j\in\mathrm{pa}(i)\Rightarrow j\prec i$. 

\paragraph{Causal normalizing flows.} 

A \textit{Causal Normalizing Flow} (CNF) \citep{javaloy2023causal} instantiates an SCM $\mathcal{M}_\theta = (T_\theta, P_\mathbf{u})$ by representing the structural equations implicitly through a masked autoregressive normalizing flow $T_\theta$ \citep{germain2015made,papamakarios2017masked}. A CNF takes the causal graph $G$ as given and assumes \textit{causal sufficiency} \citep{spirtesgs93} such that there is no unobserved confounding. We inherit these assumptions unchanged in this work. We adopt the abductive direction with a single layer as it is causally consistent in both directions between $\mathbf{u}$ and $\mathbf{x}$, without requiring additional layers or regularization \citep{javaloy2023causal}. 
Let $h_i$ be a conditioner masked according to $G$, taking only the parents of $i$ as input, and $\tau(\cdot\,; \eta)$ be a transformer strictly monotonic and continuously differentiable in its first argument for every conditioner output $\eta$.
Each coordinate is then given by
$$
u_i = \tau\big(x_i\,;\,h_i(\mathbf{x}_{\mathrm{pa}(i)})\big), \qquad x_i = \tau^{-1}\big(u_i\,;\,h_i(\mathbf{x}_{\mathrm{pa}(i)})\big), \qquad i=1,\dots,d,
$$

Since the $u_i$ are mutually independent, $P_{\mathbf u}$ factorizes as $\prod_i p(u_i)$, and the resulting density $p_\theta(\mathbf x) \equiv p_\theta(x_1,\dots,x_d)$ factorizes according to $G$:
$$
p_\theta(x_1,\dots,x_d)=\prod_{i=1}^d p_\theta(x_i\mid \mathbf{x}_{\mathrm{pa}(i)}),\qquad
p_\theta(x_i\mid \mathbf{x}_{\mathrm{pa}(i)})=p(u_i)\left|\frac{\partial \tau}{\partial x_i}(x_i;h_i(\mathbf{x}_{\mathrm{pa}(i)}))\right|.
$$

For any index subset $O \subseteq V$ we write $p_{\theta,O}(\mathbf{x}_O)$ for the corresponding marginal of $p_\theta$.
\citet{javaloy2023causal} show that, given only the causal ordering, the family of CNFs is identifiable: any two CNFs matching the true observational distribution recover the same exogenous variables up to an invertible, component-wise transformation of variables $\mathbf{u}$. Consequently, fitting $T_\theta$ to observational data by maximum likelihood, $\max_\theta \log p_\theta(\mathbf{x})$, recovers the true SCM in this sense. This objective, however, presumes each $\mathbf{x}$ is fully observed, which might not always happen in many applications.

\paragraph{Missing mechanisms.} Let $\mathcal{R} \in \{0,1\}^d$ be a random missingness mask and $\mathbf{X} \in \mathbb{R}^d$ be a random sample, with $\mathcal{R}_i=1$ indicating that variable $X_i$ is observed. 
For a realization $R$ of $\mathcal{R}$, we write $O(R):= \{i: R_i = 1\}$ and $M(R):= V \setminus O(R)$ as the observed and missing index sets. We drop $R$ when it is clear from context, so that $\mathbf{x}_O$ and $\mathbf{x}_M$ denote the observed and missing coordinates of a realization $\mathbf{x}$. Because $R$, $O$, and $M$ map bijectively to one another, we use them interchangeably and refer to this shared representation as a \textbf{\textit{pattern}}.
The missingness mechanism is the conditional distribution $p_\mathrm{miss}(R \mid \mathbf{x})$, which can be one of the following \citep{rubin1976inference}:
\begin{itemize}
    \item \textit{Missing Completely At Random} (MCAR): $p_\mathrm{miss}(R \mid \mathbf{x}) = p_\mathrm{miss}(R)$, i.e., the mask is independent of $\mathbf{x}$ entirely.
    \item \textit{Missing At Random} (MAR): $p_\mathrm{miss}(R \mid \mathbf{x}) = p_\mathrm{miss}(R \mid \mathbf{x}_{O})$, i.e., the mask may depend only on the observed values. MCAR is the special case of MAR.
    \item \textit{Missing Not At Random} (MNAR): $p_\mathrm{miss}(R\mid \mathbf{x})$ depends on $\mathbf{x}_{M}$ as well.
\end{itemize}
We use $p_\theta(\mathbf{x})$ and $p^\ast(\mathbf{x})$ to denote the full joint densities of the fitted CNF and the true SCM, respectively, while $p_{\theta,O}(\mathbf{x}_O)$ and $p^\ast_O(\mathbf{x}_O)$ denote their marginals over a subset $O \subseteq V$. Together $\mathcal{M}^\ast$ and $p_\mathrm{miss}$ induce a joint density over $(\mathbf{X},\mathcal{R})$ via $p_{\mathrm{joint}}(\mathbf{x},R) := p^\ast(\mathbf{x})\,p_\mathrm{miss}(R\mid \mathbf{x})$.
Because complete rows are generally not observed, the distribution that a sample is actually drawn from is not $p_{\mathrm{joint}}$ but its marginal over the unobserved coordinates. The \emph{observed-data} density is thus $p_{\mathrm{obs}}(\mathbf{x}_O,R) := \int p_{\mathrm{joint}}(\mathbf{x}_O,\mathbf{x}_M,R)\,d\mathbf{x}_{M}$. 

\section{Method}
\label{sec:method}

We aim to fit a CNF $p_\theta(\mathbf{x})$ over all variables from partially observed data. We first introduce MissCNF and derive its training objective, then study the corresponding population objective under different missingness assumptions and establish conditions for recovering the complete-data distribution. Finally, we compare MissCNF with alternative approaches for learning from incomplete data.

\subsection{MissCNF}
\label{sec:misscnf}

\begin{tcolorbox}
\begin{proposition}
\label{prop:marginal-mc}
The computation of $p_{\theta,O}(\mathbf{x}_O)$ involves only the variables in the ancestral closure $A = \mathrm{anc}(O)$; equivalently, only the missing ancestral coordinates $M'=A\setminus O$ need to be integrated out:
$$p_{\theta,O}(\mathbf{x}_O) = \mathbb E_{\mathbf{u}_{M'} \sim \prod_{i\in M'} p(u_i)}\big[g(\mathbf{u}_{M'})\big],
\qquad
g(\mathbf{u}_{M'}; \mathbf{x}_O) := \prod_{i\in O} p(u_i)\left|\frac{\partial\tau}{\partial x_i}\right|,
$$
where, for each $i \in A$ processed in topological order, $u_i = \tau(x_i; h_i(\mathbf{x}_{\mathrm{pa}(i)}))$ if $i \in O$, and $x_i = \tau^{-1}(u_i; h_i(\mathbf{x}_{\mathrm{pa}(i)}))$ if $i \in M'$. (Proof in Appendix~\ref{app:proofs-method}.)
\end{proposition}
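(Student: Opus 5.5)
The plan has two steps: first reduce the marginal to the ancestral closure $A=\mathrm{anc}(O)$, then reparameterize the remaining integral in terms of exogenous noise. Throughout, take $A$ to contain $O$ together with all its ancestors, so that $A$ is \emph{ancestrally closed}: if $i\in A$ then $\mathrm{pa}(i)\subseteq A$.

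\textbf{Step 1: drop the non-ancestral variables.} Write $D=V\setminus A$. Start from the DAG factorization $p_\theta(\mathbf{x})=\prod_{i} p_\theta(x_i\mid\mathbf{x}_{\mathrm{pa}(i)})$ and integrate out $\mathbf{x}_D$ first. No parent set of a node in $A$ intersects $D$, so the factors with $i\in A$ do not depend on $\mathbf{x}_D$ and come outside the integral. The remaining factors $\prod_{i\in D}p_\theta(x_i\mid \mathbf{x}_{\mathrm{pa}(i)})$ are then integrated over $\mathbf{x}_D$. Do this one variable at a time, in \emph{reverse} topological order within $D$. The current last node $j$ appears only in its own factor, because any child of $j$ lies in $D$ and later in the order, so it has already been integrated. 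Each conditional density integrates to one, which follows from the change of variables $u_j=\tau(x_j;\eta)$: $\tau$ is strictly monotone and $C^1$, and $p(u_j)$ is a density. Repeating this gives $p_{\theta,A}(\mathbf{x}_A)=\prod_{i\in A}p_\theta(x_i\mid \mathbf{x}_{\mathrm{pa}(i)})$. Hence $p_{\theta,O}(\mathbf{x}_O)=\int \prod_{i\in A} p_\theta(x_i\mid\mathbf{x}_{\mathrm{pa}(i)})\,d\mathbf{x}_{M'}$. This already proves the first claim, that only $A$ is involved.

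\textbf{Step 2: reparameterize the missing ancestors.} Split the product into factors with $i\in O$ and factors with $i\in M'$. For each $i\in M'$, substitute $x_i=\tau^{-1}(u_i;h_i(\mathbf{x}_{\mathrm{pa}(i)}))$, processing the nodes of $A$ in topological order. When node $i$ is reached, its parents lie in $A$ and are already determined: either they are observed, or they were produced from earlier $u$'s. The map $\mathbf{u}_{M'}\mapsto\mathbf{x}_{M'}$, with $\mathbf{x}_O$ held fixed, is therefore triangular. Its Jacobian determinant is $\prod_{i\in M'}\partial x_i/\partial u_i$, and this map is a $C^1$ bijection onto its image. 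Under the substitution, each factor $p(u_i)\,|\partial\tau/\partial x_i|$ for $i\in M'$ combines with $dx_i=|\partial\tau/\partial x_i|^{-1}du_i$ to give exactly $p(u_i)\,du_i$. The observed factors become $g(\mathbf{u}_{M'};\mathbf{x}_O)$, with $u_i=\tau(x_i;\cdot)$ evaluated at parents that are now functions of $\mathbf{u}_{M'}$. The integral is then $\int g\,\prod_{i\in M'}p(u_i)\,d\mathbf{u}_{M'}=\mathbb{E}[g]$, which is the stated formula.

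\textbf{Main obstacle.} The delicate part is justifying the multivariate change of variables in Step 2. The substitution for $x_i$ depends on previously substituted variables, so it has to be organized as an iterated integral in topological order. Alternatively, one can argue directly that the triangular map is a diffeomorphism with diagonal Jacobian. After that, the cancellation of the Jacobians is routine. Fubini/Tonelli applies throughout because all integrands are nonnegative. A minor point is that $\tau^{-1}$ may have a restricted range; this is handled by noting that the image of $\tau$ carries all the mass of $p(u_i)$.
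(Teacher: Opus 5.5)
Your proposal is correct and follows the paper's proof essentially step for step. The paper likewise integrates out $V\setminus A$ in reverse topological order, using that no node outside $A$ has a child in $A$, and then substitutes $x_i=\tau^{-1}(u_i;h_i(\mathbf{x}_{\mathrm{pa}(i)}))$ for $i\in M'$ in topological order so that $p_\theta(x_i\mid\mathbf{x}_{\mathrm{pa}(i)})\,dx_i=p(u_i)\,du_i$; your iterated-integral (triangular) justification of this substitution and your remark that $\tau(\cdot\,;\eta)$ must carry all the mass of $p(u_i)$ make explicit points the paper uses only implicitly.
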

\end{tcolorbox} 

\newpage

\begin{wrapfigure}{r}{0.55\textwidth}
    \centering
    \includegraphics[width=0.55\textwidth]{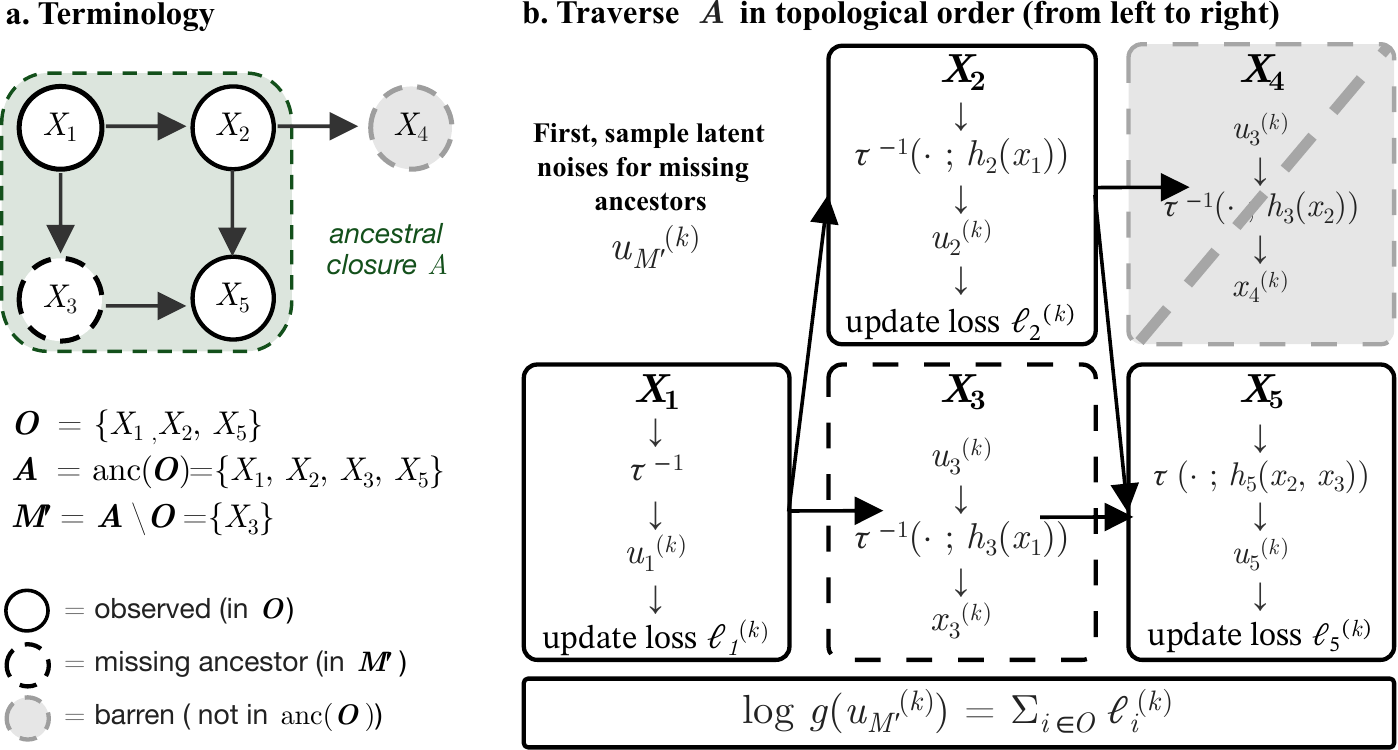}
    \caption{
(a) With $O=\{1,2,5\}$ observed, $A$ is $\{1,2,3,5\}$. $X_3$ is a missing ancestor and must be integrated out. $X_4$ is barren and is dropped without integration.
(b) One MC sample of Algorithm~\ref{alg:marginal}. Nodes in $A$ are processed in topological order. Observed nodes are mapped forward to their noise and added to the log-likelihood. The missing $X_3$ is generated from the noise $u_3^{(k)}$ and passed on to its children. $X_4$ is skipped.}
    \label{fig:misscnf-overview}
\end{wrapfigure}

For an observed/missing split $O\sqcup M=V$ of a given sample, the full-data likelihood $p_\theta(\mathbf{x})$ cannot be evaluated because $\mathbf{x}_M$ is unobserved. The sample therefore contributes the marginal likelihood
$$
p_{\theta,O}(\mathbf{x}_O)=\int p_\theta(x_1,\dots,x_d)\, d\mathbf{x}_M .
$$
Let $A:= \mathrm{anc}(O)$ be $O$ with all its ancestors, and $M':=A\setminus O$ the missing nodes inside that set (Figure~\ref{fig:misscnf-overview}a). Any node outside $A$ is a descendant-or-unrelated node with respect to $O$ (\textit{barren}). Barren nodes vanish from the calculation, since integrating a barren node out of the joint density yields 1. The only remaining integral is over $M'$, the missing ancestors. Rather than integrating over $\mathbf{x}_{M'}$ directly, we integrate over its noise $\mathbf{u}_{M'}$. This is a reparameterization: the noise terms are independent draws from the base distribution, which is easy to sample, and each $x_i$ for $i$ in $M'$ is obtained by pushing $u_i$ through the flow given its (already computed) parents (Figure~\ref{fig:misscnf-overview}b). We formally state this in Proposition~\ref{prop:marginal-mc}.

\begin{algorithm}[htbp]
\small
\caption{Computing $\hat p_{\theta,O}(\mathbf{x}_O)$ for a masked autoregressive CNF}
\label{alg:marginal}
\begin{algorithmic}[1]
\Require Observed values $\mathbf{x}_O$; ancestral set $A = \mathrm{anc}(O)$, missing-within-$A$ set $M' = A \setminus O$; topological order $\prec$ restricted to $A$; number of samples $K$
\State Draw $\mathbf{u}_{M'}^{(k)} \sim \prod_{i\in M'} p(u_i)$ i.i.d.\ for $k = 1,\dots,K$
\For{$k = 1, \dots, K$}
    \For{$i \in A$ in topological order $\prec$} \Comment{parents of $i$ already resolved}
        \If{$i \in O$}
            \State $u_i \gets \tau(x_i\,;\,h_i(\mathbf{x}_{\mathrm{pa}(i)}))$ \Comment{observed: evaluate forward}
            \State $\ell_i^{(k)} \gets \log p(u_i) + \log\left|\partial \tau / \partial x_i\right|$
        \Else \Comment{$i \in M'$: missing, already sampled as $u_i = u_i^{(k)}$}
            \State $x_i \gets \tau^{-1}(u_i^{(k)}\,;\,h_i(\mathbf{x}_{\mathrm{pa}(i)}))$ \Comment{invert to propagate to children}
        \EndIf
    \EndFor
    \State $\log g(\mathbf{u}_{M'}^{(k)}; \mathbf{x}_O) \gets \sum_{i \in O} \ell_i^{(k)}$
\EndFor
\State \Return $\log \hat p_{\theta,O}(\mathbf{x}_O) \gets \operatorname*{logsumexp}_{n=1}^K \log g(\mathbf{u}_{M'}^{(k)}) - \log K$
\end{algorithmic}
\end{algorithm}

We estimate the expectation by Monte Carlo (MC) estimation. Algorithm~\ref{alg:marginal} gives the full procedure: draw $K$ samples of $\mathbf{u}_{M'}$ from the base distribution, propagate each through the flow in topological order to obtain $g$, and average in log-space. If no missing node is an ancestor of an observed node, $M'$ is empty, and $K = 0$ samples are needed.
As we use an abductive autoregressive CNF with an $L=1$ layer, as in ~\cite{javaloy2023causal}, the full-density evaluation costs $\mathcal{O}(L)$, while sampling requires $\mathcal{O}(dL)$ due to autoregressive inversion \citep{javaloy2023causal,papamakarios2021normalizingflowsprobabilisticmodeling}. Computing the marginal density in the algorithm requires $K$ MC samples; therefore, it gives $\mathcal{O}(KdL)$ total computational work.

Given this recipe for computing $\hat p_{\theta,O}(\mathbf{x}_O)$ for any observed subset $O$, we can now train a CNF directly on incomplete data by maximizing the likelihood it assigns to whatever each row actually shows. Given a training set of $n$ i.i.d.\ samples $D^{(j)} = (\mathbf{x}^{(j)}_{O_j}, O_j)$, we define the training objective as
$
\mathcal{L}_{n,K}(\theta) := -\frac{1}{n}\sum_{i=1}^n \log \hat p_{\theta,O_j}\big(\mathbf{x}^{(j)}_{O_j}\big).
$
As both the number of MC samples $K$ in Algorithm~\ref{alg:marginal} and the training set size $n$ grow, $\hat p_{\theta,O} \to p_{\theta,O}$ and the empirical loss converges to the population risk

\[
\mathcal{L}(\theta) = \mathbb{E}_{(\mathbf{X}_{O(\mathcal{R})},\mathcal{R})\sim p_{\mathrm{obs}}} \big[ -\log p_{\theta,O(\mathcal{R})}(\mathbf{X}_{O(\mathcal{R})}) \big] ~.
\]
The objective uses only the observed entries of each row, so it can be applied to any missingness pattern. Whether its minimizer recovers $p^\ast$, however, depends on the missingness mechanism, which we study in Section~\ref{sec:mar-consistent}.

\subsection{MAR: Ignorability and Identification}
\label{sec:mar-consistent}

The population risk in Section~\ref{sec:misscnf} uses only the model marginal $p_{\theta,O}(\mathbf{x}_O)$, rather than explicitly modeling the missingness pattern $R$. We first show that, under MAR, this is equivalent to optimizing the joint likelihood of the observed values and missingness pattern, up to a term independent of $\theta$. 

For any candidate complete-data model $p_\theta$, let $q_\theta$ denote the distribution over $(\mathbf{X}_{O(\mathcal{R})},\mathcal{R})$ obtained by applying the fixed missingness mechanism $p_\mathrm{miss}$. Lemma~\ref{lem:mar-pullout} shows that MAR factorizes this observed-data density into the model marginal $p_{\theta, O}(\mathbf{x}_O)$ and a missingness term independent of $\theta$. This is the classical likelihood-ignorability property under MAR \citep{rubin1976inference}.

\begin{tcolorbox}
\begin{lemma}[MAR pull-out identity, for any candidate $\theta$]
\label{lem:mar-pullout}
Fix a pattern $O \subseteq V$ and let $R = \mathbf 1_O$. For \emph{any} $\theta$, define
$$
q_\theta(\mathbf{x}_O, R) := \int p_\theta(\mathbf{x}_O,\mathbf{x}_{M})\, p_\mathrm{miss}(R\mid \mathbf{x}_O, \mathbf{x}_{M})\, d\mathbf{x}_M,
$$
the observed-data density that would arise if $\mathbf{x}$ were generated by the candidate model $p_\theta$ while $R$ is generated by the true, fixed mechanism $p_\mathrm{miss}$. Under MAR, $q_\theta$ simplifies for \emph{every} $\theta$:
$$
q_\theta(\mathbf{x}_O,R) = p_{\theta,O}(\mathbf{x}_O)\, p_\mathrm{miss}(R \mid \mathbf{x}_O).
$$
In particular, if the model is realizable ($p_{\theta_0} = p^\ast$ for some $\theta_0$), then $p_{obs}(\mathbf{x}_O,R) = q_{\theta_0}(\mathbf{x}_O,R) = p^\ast_O(\mathbf{x}_O)\,p_\mathrm{miss}(R\mid \mathbf{x}_O)$. (Proof in Appendix~\ref{app:proofs-convergence}.)
\end{lemma}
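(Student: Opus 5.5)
The argument is a direct substitution of the MAR assumption into the integral defining $q_\theta$. No earlier result is needed beyond the definitions. Fix $O$, $R=\mathbf 1_O$ and an arbitrary $\theta$. Under MAR, $p_\mathrm{miss}(R\mid \mathbf{x}_O,\mathbf{x}_M)=p_\mathrm{miss}(R\mid \mathbf{x}_O)$ for every value of $\mathbf{x}_M$. The key point is that this is a property of the mechanism alone, as a function of $(\mathbf{x},R)$, and does not involve the distribution of $\mathbf{x}$. It therefore holds regardless of whether $\mathbf{x}$ is drawn from $p^\ast$ or from the candidate $p_\theta$.

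With this in hand, I factor the missingness term out of the integral over $\mathbf{x}_M$, since it does not depend on the integration variable:
$$
q_\theta(\mathbf{x}_O,R)=p_\mathrm{miss}(R\mid \mathbf{x}_O)\int p_\theta(\mathbf{x}_O,\mathbf{x}_M)\,d\mathbf{x}_M = p_\mathrm{miss}(R\mid \mathbf{x}_O)\,p_{\theta,O}(\mathbf{x}_O).
$$
The last equality is simply the definition of the marginal $p_{\theta,O}$. It exists because $p_\theta$ is a proper density, being a product of normalized conditionals as in Section~\ref{sec:background}, so Tonelli applies to the nonnegative integrand.

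For the realizable case, I take $\theta_0$ with $p_{\theta_0}=p^\ast$. Then $q_{\theta_0}(\mathbf{x}_O,R)=\int p^\ast(\mathbf{x})\,p_\mathrm{miss}(R\mid\mathbf{x})\,d\mathbf{x}_M=\int p_\mathrm{joint}(\mathbf{x}_O,\mathbf{x}_M,R)\,d\mathbf{x}_M$, which is exactly $p_\mathrm{obs}(\mathbf{x}_O,R)$ by definition. Applying the identity just shown with $\theta=\theta_0$ gives $p^\ast_O(\mathbf{x}_O)\,p_\mathrm{miss}(R\mid\mathbf{x}_O)$.

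The only step needing care is interpretive rather than technical: MAR must be read as a pointwise (or at least almost-everywhere in $\mathbf{x}_M$) identity on the mechanism, not merely as a statement holding $p^\ast$-almost surely. If it only held $p^\ast$-a.s., then the factorization for a candidate $p_\theta$ could fail on sets that $p^\ast$ treats as null but $p_\theta$ does not. I would state this reading explicitly. I would also note that it is harmless here, because CNF densities are positive wherever the base density is positive, so $p_\theta$ and $p^\ast$ share the same null sets under the standing assumptions.
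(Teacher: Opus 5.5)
Your proof is correct and follows the paper's argument: by MAR you pull $p_\mathrm{miss}(R\mid\mathbf{x}_O)$ out of the $\mathbf{x}_M$-integral, and with $p_{\theta_0}=p^\ast$ the integral defining $q_{\theta_0}$ is exactly $p_\mathrm{obs}$. Your closing caveat is unnecessary, because the paper defines MAR as a pointwise identity, and its justification is slightly overstated: strict monotonicity of $\tau$ does not force $\partial\tau/\partial x_i>0$ everywhere, and $p^\ast$ is a CNF density only under realizability.
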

\end{tcolorbox}
Since $\log q_{\theta}(\mathbf{x}_O,R) = \log p_{\theta,O}(\mathbf{x}_O) + \log p_\mathrm{miss}(R\mid \mathbf{x}_O)$ and the second term does not depend on $\theta$, minimizing $\mathcal{L}$ is equivalent to maximum likelihood on the observed values and patterns,
$$
\arg\min_\theta \mathcal L(\theta) = \arg\min_\theta \mathbb E_{p_{\mathrm{obs}}}\big[-\log q_{\theta}(\mathbf{X}_{O(\mathcal{R})},\mathcal{R})\big],
$$
so $p_\mathrm{miss}$ can be dropped from the objective without changing the minimizer.

\begin{tcolorbox}
\begin{theorem}[MAR identification of the population risk, on the identified region]
\label{thm:mar-consistency}
Suppose MAR holds, and the CNF family is realizable ($p_{\theta_0} = p^\ast$ for some $\theta_0$). Define $Z_O := \{\mathbf{x}_O : p_\mathrm{miss}(\mathbf 1_O\mid \mathbf{x}_O) = 0\}$ the set of values of $\mathbf{x}_O$ that pattern $O$ never reveals. Then $\theta_0$ is a global minimizer of $\mathcal L(\theta)$, and \emph{every} global minimizer $\theta^\ast$ satisfies
$$
p_{\theta^\ast,O}(\mathbf{x}_O) = p^\ast_O(\mathbf{x}_O) \quad \text{for $p^\ast_O$-almost every\ } \mathbf{x}_O \notin Z_O,\quad \text{simultaneously for every } O.
$$
(Proof in Appendix~\ref{app:proofs-convergence}.)
\end{theorem}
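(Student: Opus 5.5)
The plan is to reduce the population risk to a KL divergence between observed-data densities, using Lemma~\ref{lem:mar-pullout} and realizability. By that lemma, for every $\theta$ we have $q_\theta(\mathbf{x}_O,R)=p_{\theta,O}(\mathbf{x}_O)\,p_\mathrm{miss}(R\mid\mathbf{x}_O)$, and $p_{\mathrm{obs}}=q_{\theta_0}$. It follows that
\[
\mathcal L(\theta)-\mathcal L(\theta_0)=\mathbb E_{p_{\mathrm{obs}}}\!\left[\log\frac{q_{\theta_0}(\mathbf X_O,\mathcal R)}{q_\theta(\mathbf X_O,\mathcal R)}\right]=\mathrm{KL}\big(p_{\mathrm{obs}}\,\|\,q_\theta\big)\ge 0 .
\]
The missingness factor cancels wherever it is positive, and that is exactly where $p_{\mathrm{obs}}$ puts mass.

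The first step is to check that this KL expression is legitimate. The term $q_\theta$ is a probability density on the space $\bigsqcup_O(\mathbb R^{|O|}\times\{\mathbf 1_O\})$, since summing over $R$ and integrating $\mathbf{x}_O$ gives $\int p_\theta\sum_R p_\mathrm{miss}=1$. I will also assume $\mathcal L(\theta_0)$ is finite, which is implicit in calling $\theta_0$ a minimizer. Gibbs' inequality then shows that $\theta_0$ is a global minimizer.

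The second step handles any other global minimizer $\theta^\ast$. Since $\mathcal L(\theta^\ast)=\mathcal L(\theta_0)$, the KL divergence is zero, so $q_{\theta^\ast}=p_{\mathrm{obs}}$ almost everywhere with respect to the reference measure (Lebesgue on each pattern's coordinates times counting measure on patterns). Because there are finitely many patterns, this gives, for each fixed $O$ simultaneously,
\[
p_{\theta^\ast,O}(\mathbf{x}_O)\,p_\mathrm{miss}(\mathbf 1_O\mid\mathbf{x}_O)=p^\ast_O(\mathbf{x}_O)\,p_\mathrm{miss}(\mathbf 1_O\mid\mathbf{x}_O)\quad\text{for Lebesgue-a.e. }\mathbf{x}_O .
\]
Off $Z_O$ the factor $p_\mathrm{miss}$ is strictly positive, so we can divide by it. That gives the equality Lebesgue-a.e. on $Z_O^c$, and hence $p^\ast_O$-a.e. on $Z_O^c$ by absolute continuity. "Simultaneously for every $O$" follows because the exceptional null sets are finitely many and each lives in its own pattern's coordinate space.

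The main obstacle is measure-theoretic care in the KL step rather than any hard estimate. One issue is that $\mathbb E[-\log q_\theta]$ may be $+\infty$ for some $\theta$, which is harmless for minimization. The other is that equality of cross-entropies must yield a.e. equality of densities. I would handle the second point by writing $\mathcal L(\theta)-\mathcal L(\theta_0)$ explicitly as a KL divergence and invoking its strict positivity unless the two measures coincide. This needs only that $q_\theta$ integrates to at most one, which holds even without checking that $p_{\theta,O}$ is normalized on $Z_O^c$. Finally, I note that no conclusion is available on $Z_O$, since the objective carries no information about $p_{\theta,O}$ there; this is why the statement is restricted to $\mathbf{x}_O\notin Z_O$.
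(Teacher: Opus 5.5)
Your proposal is correct and follows the paper's proof essentially step for step. You rewrite $\mathcal L(\theta)-\mathcal L(\theta_0)$ via Lemma~\ref{lem:mar-pullout} as $\mathrm{KL}(p_{\mathrm{obs}}\,\|\,q_\theta)\ge 0$, use that this KL vanishes at any global minimizer, and divide by $p_\mathrm{miss}(\mathbf 1_O\mid\mathbf{x}_O)>0$ off $Z_O$; your extra care (normalization of $q_\theta$, finiteness of $\mathcal L(\theta_0)$, Lebesgue-a.e.\ rather than $p_{\mathrm{obs}}$-a.e.\ equality) only makes explicit details the paper leaves tacit.
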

\end{tcolorbox}

Theorem~\ref{thm:mar-consistency} says that the trained model reproduces the true distribution of each observed pattern, but only where that pattern actually reveals data. It does not tell us what the model does on values a pattern never shows. 
To allow the model to recover the true distribution, we need one of the following conditions to hold.

\begin{definition}[Positivity conditions]
\label{def:positivity}
We say \emph{full-pattern positivity} holds if $p^\ast(Z_V)=0$, i.e.\ some rows are fully observed and $p_\mathrm{miss}(\mathbf 1_V\mid \mathbf{X}=\mathbf{x})>0$ for $p^\ast$-almost every $\mathbf{x}$. We say \emph{causal-family positivity} holds if, for every node $i\in V$, there exists a pattern $O$ with \emph{causal-family} of $ i = \{i\}\cup\mathrm{pa}(i)\subseteq O$, and $p^\ast_{O}(Z_{O})=0$, i.e.\ $p_\mathrm{miss}(\mathbf 1_{O}\mid \mathbf{X}_{O}=\mathbf{x}_{O})>0$ for $p^\ast$-almost every $\mathbf{x}_{O}$ (as formalized in Corollary~\ref{cor:coverage}).
\end{definition}
\newpage
\begin{wrapfigure}{r}{0.45\textwidth}
    \centering

    \includegraphics[width=0.45\textwidth]{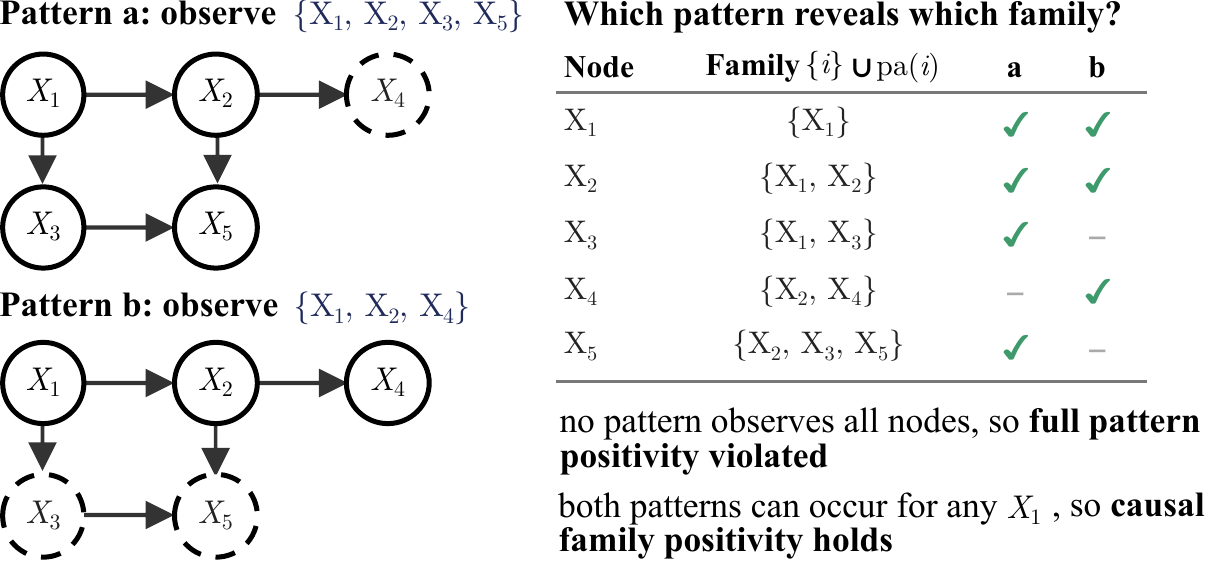}
    \caption{\textbf{Left:} the two observed patterns. \textbf{Right:} the causal family $\{i\}\cup\mathrm{pa}(i)$ of each node and the patterns that contain it. No pattern observes all of $V$, but every family is observed by at least one pattern.}
    \label{fig:positivity}
\end{wrapfigure}
\paragraph{Two routes to full joint recovery.} Theorem~\ref{thm:mar-consistency} identifies each marginal $p_{\theta,O}(\mathbf{x}_O)$ outside its unidentified region $Z_O$, pattern by pattern.
Taking $O=V$ gives the direct route to obtain $p_{\theta^\ast} = p^\ast$, full-pattern positivity: every $\mathbf{x}$ has some chance of being fully observed. The second route exploits the CNF's causal structure. Because the same model marginal over any subset is shared across every observation pattern containing that subset, a causal family may be identified through one pattern even if another pattern provides no support there. Corollary~\ref{cor:coverage} shows that positivity need only be checked locally, once per node against its own causal family, rather than globally against the full vector $\mathbf{x}$. 
Figure~\ref{fig:positivity} gives an example. Only two patterns occur: pattern~a observes $\{X_1,X_2,X_3,X_5\}$ and pattern~b observes $\{X_1,X_2,X_4\}$. No row is ever complete, so full-pattern positivity is violated. Suppose $X_1$ is always observed and the pattern is chosen with probability $\mathrm{sigmoid}(x_1)$ for a and $1-\mathrm{sigmoid}(x_1)$ for b. The mechanism then depends only on $X_1$, which both patterns observe, so it is MAR. Since both probabilities are positive for every $x_1$, $Z_{O_a}=Z_{O_b}=\emptyset$. Each causal family is contained in at least one pattern (right panel), so causal-family positivity holds, and the joint is still recovered. 

\begin{tcolorbox}
\begin{corollary}[Joint recovery via causal-family positivity, without requiring full-pattern positivity in general]
\label{cor:coverage}
Suppose $p_{\theta^\ast}$ and $p^\ast$ both factorize according to $G$ (Section~\ref{sec:background}). Under causal-family positivity (Definition~\ref{def:positivity}) and $p^\ast_{\mathrm{pa}(i)}(\mathbf{x}_{\mathrm{pa}(i)})>0$ on the relevant domain for each $i$, every global minimizer $\theta^\ast$ of $\mathcal L$ satisfies $p_{\theta^\ast}(\mathbf{x}) = p^\ast(\mathbf{x})$ for $p^\ast$-almost every\ $\mathbf{x}$. (Proof in Appendix~\ref{app:proofs-convergence}.)
\end{corollary}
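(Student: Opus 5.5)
The plan is to combine Theorem~\ref{thm:mar-consistency} with the Markov factorization that both $p_{\theta^\ast}$ and $p^\ast$ share. For each node $i$, causal-family positivity provides a pattern $O_i$ with $F_i := \{i\}\cup\mathrm{pa}(i)\subseteq O_i$ and $p^\ast_{O_i}(Z_{O_i})=0$. Theorem~\ref{thm:mar-consistency} then gives $p_{\theta^\ast,O_i}(\mathbf{x}_{O_i}) = p^\ast_{O_i}(\mathbf{x}_{O_i})$ for $p^\ast_{O_i}$-almost every $\mathbf{x}_{O_i}$. Since $Z_{O_i}$ is $p^\ast_{O_i}$-null, the exceptional set only removes a null set.

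The first step upgrades this to equality of the two densities as functions, not merely on the support of $p^\ast_{O_i}$. Equality $p^\ast_{O_i}$-a.e.\ means the two agree where $p^\ast_{O_i}>0$. Because $p_{\theta^\ast,O_i}$ is a probability density and it matches $p^\ast_{O_i}$ on that set, which has full $p^\ast_{O_i}$-mass $1$, we get $\int_{\{p^\ast_{O_i}>0\}} p_{\theta^\ast,O_i} = 1$. Hence $p_{\theta^\ast,O_i}=0$ Lebesgue-a.e.\ off that set. So the two marginals coincide Lebesgue-a.e.\ on the whole space. If I wanted to avoid this detail, stating the conclusion $p^\ast$-a.e.\ would suffice, but the mass argument is cleaner and needed later.

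Next I marginalize. Integrating both sides over $\mathbf{x}_{O_i\setminus F_i}$ (Fubini, nonnegative integrands) gives $p_{\theta^\ast,F_i} = p^\ast_{F_i}$ Lebesgue-a.e. Integrating out $x_i$ as well gives $p_{\theta^\ast,\mathrm{pa}(i)} = p^\ast_{\mathrm{pa}(i)}$ a.e. On the set where $p^\ast_{\mathrm{pa}(i)}>0$, which by the stated assumption is the relevant domain, dividing yields
\[
p_{\theta^\ast}(x_i\mid\mathbf{x}_{\mathrm{pa}(i)}) = p^\ast(x_i\mid \mathbf{x}_{\mathrm{pa}(i)}) \quad\text{for a.e. } \mathbf{x}_{F_i}\text{ with } p^\ast_{\mathrm{pa}(i)}(\mathbf{x}_{\mathrm{pa}(i)})>0.
\]
Here I use that in a $G$-factorizing density the marginal over $F_i$ has conditional $p(x_i\mid\mathbf{x}_{\mathrm{pa}(i)})$ equal to the factor appearing in the factorization. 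This holds for $p_{\theta^\ast}$ by the CNF construction in Section~\ref{sec:background}, and for $p^\ast$ by assumption.

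Finally I multiply the factors over $i$ in the topological order. The set of $\mathbf{x}$ where some conditional identity fails is contained in a finite union of sets of the form $\{\mathbf{x}: \mathbf{x}_{F_i}\in N_i\}$ with $N_i$ Lebesgue-null in the $F_i$ coordinates. I must show each such set is $p^\ast$-null. This holds because $p^\ast_{F_i}$ is absolutely continuous, so $p^\ast(\mathbf{x}_{F_i}\in N_i)=0$. The same argument applies to the set where $p^\ast_{\mathrm{pa}(i)}=0$. Off these null sets the products agree, which gives $p_{\theta^\ast}(\mathbf{x})=p^\ast(\mathbf{x})$ for $p^\ast$-a.e.\ $\mathbf{x}$. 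Should a version statement be wanted, the mass argument shows $p_{\theta^\ast}$ puts no mass outside $\mathrm{supp}\,p^\ast$ either.

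I expect the main obstacle to be the null-set bookkeeping. The "a.e." from Theorem~\ref{thm:mar-consistency} is with respect to different marginals for different patterns, and dividing by parent marginals requires positivity. My first attempt would handle this with the mass-$1$ argument above, which converts every statement to Lebesgue-a.e.\ equality of densities. After that, marginalization and the product step are routine.
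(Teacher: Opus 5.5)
Your proposal is correct and follows the paper's proof: you apply Theorem~\ref{thm:mar-consistency} to the pattern covering each causal family $F_i$, marginalize to $F_i$ and then to $\mathrm{pa}(i)$, divide to get the conditionals, lift the exceptional sets to a single $p^\ast$-null union, and multiply using the $G$-factorization. Your mass-one argument, which upgrades $p^\ast_{O_i}$-a.e.\ equality to Lebesgue-a.e.\ equality, is exactly what justifies the paper's terser claim that marginalizing over $\mathbf{x}_{o_i\setminus F_i}$ preserves the a.e.\ equality; one small correction is that $\{p^\ast_{\mathrm{pa}(i)}=0\}$ is $p^\ast$-null because the density vanishes there, not by the Lebesgue-null argument you invoke for $N_i$.
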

\end{tcolorbox}

\paragraph{Missing Not At Random (MNAR).}
\label{sec:mnar-discussion}

Under MNAR, the missingness probability may depend on values that are themselves unobserved, so the MAR factorization used in Theorem~\ref{thm:mar-consistency} no longer holds, and the distribution of the observed entries alone does not generally identify $p^\ast(\mathbf{x})$. 
Nevertheless, our estimator remains well-defined under MNAR and continues to optimize its marginal-likelihood objective based on $p_{\theta, O}(\mathbf{x}_O)$, without constructing any completed dataset. This does not remove MNAR-induced selection bias, but it avoids compounding it with a second, imputation-based stage. We evaluate the method empirically under MNAR in Section~\ref{sec:mnar}, while leaving formal identification under additional MNAR assumptions to future work.

\subsection{Comparison to Alternative Estimators}
\label{sec:comparison}

\paragraph{Listwise deletion.}
\label{sec:listwise-deletion}

Listwise deletion discards every row with at least one missing coordinate and fits the model only on complete cases. It therefore targets
$$
p^\ast(\mathbf{x} \mid \mathcal{R} = \mathbf 1_V) \propto p^\ast(\mathbf{x})\,p_\mathrm{miss}(\mathbf 1_V\mid \mathbf{x})
$$
rather than \(p^\ast(\mathbf{x})\) itself. Under MCAR, $p_\mathrm{miss}(\mathbf 1_V\mid \mathbf{x})$ is constant, and the two coincide; under general MAR, the complete-case probability may vary with $\mathbf{x}$, so the complete-case distribution is generally reweighted. Moreover, listwise deletion requires complete samples, whereas Corollary~\ref{cor:coverage} shows that MissCNF can recover the full joint even when no row is fully observed under the causal-family positivity  (Corollary~\ref{cor:coverage}).

\paragraph{Imputation-then-fit Approaches}
\label{sec:imputation-based}
A common alternative is impute-then-fit: an imputation model $q(\mathbf{x}_M\mid \mathbf{x}_O, R)$ first completes the missing coordinates, after which a downstream model is fit to the completed data. The completed rows follow
$$
p_{\mathrm{comp}}(\mathbf{x}) = \sum_R p_\mathrm{obs}(\mathbf{x}_{O}, R)\, q\!\left(\mathbf{x}_{M}\mid \mathbf{x}_{O},R\right),
$$
so maximum likelihood on the completed data targets $p_{\mathrm{comp}}$, not directly $p^\ast$. Under MAR, the oracle conditional $q(\mathbf{x}_M\mid \mathbf{x}_O, R)=p^\ast(\mathbf{x}_M \mid \mathbf{x}_O)$ is sufficient for $p_{\mathrm{comp}}=p^\ast$; otherwise, the downstream target generally inherits errors from the imputation model. 

MissCNF avoids the intermediate model entirely: under causal-family positivity, it recovers the full joint and therefore every conditional required for imputation,
$
p_{\theta^\ast}(\mathbf{x}_B\mid \mathbf{x}_A) = p^\ast(\mathbf{x}_B\mid \mathbf{x}_A),
$
for any partition $V=A\sqcup B$, wherever the conditional is defined (Cororally~\ref{cor:conditional-recovery}, Appendix~\ref{app:proofs-imputation}).
Issues specific to single and multiple imputation, including uncertainty understatement, model pooling, and compatibility, are discussed in Appendix~\ref{app:discussion-imputation}.

\section{Experiments}
\label{sec:experiments}

\subsection{Setup}
\label{sec:exp_setup}
We evaluate on eight synthetic SCMs used in prior work  \citep{pablo2022vaca,javaloy2023causal,chao2024modeling}: Chain, Triangle, Collider, and Fork, each with a linear and a nonlinear version (Appendix~\ref{app:datasets}). Following \citet{javaloy2023causal}, we use 20,000 training, 2,500 validation, and 2,500 test samples for each dataset.

We consider three missingness mechanisms, \textbf{MCAR}, \textbf{MAR}, and \textbf{self-masking MNAR} based on previous work \citep{muzellec2020missing}. For MCAR and MAR, we use missingness rates of 30\%, 60\%, and 90\%. For MNAR, we use 30\% and 60\%. The missingness is only applied to the training and validation sets. The root nodes are never masked to make sure MAR always has observed conditioning variables. The exception is Experiment~3, where we use a MAR variant to test causal-family coverage. Under MCAR, each variable is masked independently. Under MAR and self-masking MNAR, the masking probability of a non-root node is the sigmoid of a linear function of its conditions, which are its root parents (or of all roots if it has none) for MAR and the variable's own value for self-masking MNAR. Full definitions of the missingness mechanisms are given in Appendix~\ref{app:missingness}.

We compare MissCNF against \textbf{listwise deletion} and impute-then-fit pipelines using five non-causal imputers: \textbf{Mean imputation}, \textbf{MissForest} \citep{stekhoven2012missforest}, \textbf{MICE} with single imputation \citep{vanbuuren2011mice}, and the more recent methods \textbf{KPI} \citep{wang2025kpi} and \textbf{DiffPuter} \citep{zhang2025diffputer}, which report state-of-the-art imputation accuracy against other deep imputers such as GAIN \citep{yoon2018gain}, MIWAE \citep{mattei2019miwae}, and ReMasker \citep{du2024remasker}. We also use one causal imputer, MIRACLE \citep{kyono2021miracle}. While MIRACLE learns the causal graph itself, we give it the true graph instead and call this variant \textbf{Oracle-graph MIRACLE (O-MIRACLE)}. We additionally train the same CNF on complete data as a reference. All methods use the same downstream causal normalizing flow and training budget; baseline and implementation details are provided in Appendices ~\ref{app:baselines} and ~\ref{app:implementation}.

Each experiment is repeated over five random seeds affecting the data, missingness mask, and model initialization. Following \citet{javaloy2023causal}, we evaluate on the complete test set using symmetric \textbf{KL} divergence for observational fit, \textbf{$\text{RMSE}_{\text{ATE}}$} for interventional accuracy, and \textbf{$\text{RMSE}_{\text{CF}}$} for counterfactual prediction. More details are given in Appendix~\ref{app:metrics}. 
We report KL as the main metric: under the identifiability conditions of \citet{javaloy2023causal}, a causal NF that matches the observational distribution also recovers the interventional and counterfactual distributions. Nevertheless, finite-sample differences in KL need not translate directly into lower ATE or CF error. We therefore report KL in the main text and provide the full results in Appendix~\ref{app:full_results}.
\vspace{-0.1cm}
\subsection{Main result: Performance under MCAR and MAR missingness}

\label{sec:exp1}
\begin{figure}[htbp]
    \centering
    \includegraphics[]{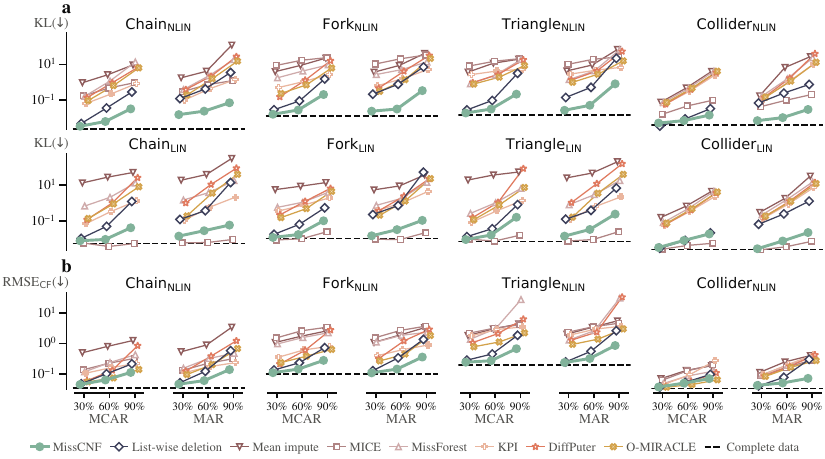}
    \caption{Results under MCAR and MAR at 30\%, 60\%, and 90\% missing rates (mean $\pm$ std over 5 seeds; lower is better). (a) Symmetric KL divergence to the ground-truth distribution on all SCMs. (b) $\text{RMSE}_{\text{CF}}$ on the nonlinear SCMs. The dashed line marks the model trained on complete data.}
    \vspace{-0.1cm}
    \label{fig:tier1_grid_kl_distance}
\end{figure}

We first analyse model identifiability under MCAR and MAR missingness and report the KL in Figure~\ref{fig:tier1_grid_kl_distance}a. 
For nonlinear SCMs, MissCNF attains the lowest symmetric KL divergence in 23 of 24 settings of tasks, mechanisms and missing rates.  In contrast, on linear SCMs, MICE achieves the lowest KL in 23 of 24 settings, whereas MissCNF remains competitive, ranking among the top two in 22 of 24 settings
This advantage is not limited to comparisons with non-causal imputers: even O-MIRACLE does not outperform MissCNF overall. At the same time, the strong performance of MICE on the linear benchmarks shows that impute-then-fit can be highly effective when the underlying mechanisms are simple. We investigate this dependence on nonlinearity more directly in Appendix~\ref{app:full_results_nonlinearity} and confirm that the performance of MICE decreased rapidly while MissCNF remains stable as the degree of nonlinearity increases. 
The results also illustrate the sensitivity of listwise deletion to the missingness mechanism. Across all tasks and missing rates, its average KL deteriorates by $14.0\times$ from MCAR to MAR, compared with a $2.0\times$ increase for MissCNF. This is consistent with complete-case analysis becoming biased when the probability that a row is fully observed depends on observed variables (Section~\ref{sec:listwise-deletion}).
Figure~\ref{fig:tier1_grid_kl_distance}b reports $\text{RMSE}_{\text{CF}}$ on the nonlinear SCMs. The results follow the KL results: MissCNF has the lowest error in 20 of 24 settings. Results on linear SCMs and for $\text{RMSE}_{\text{ATE}}$ are in Appendix~\ref{app:full_results_exp1}. They show the same pattern, but the gap between methods is smaller for the ATE.

\vspace{-0.1cm}
\subsection{Robustness to Weaker Recovery Conditions}
\label{sec:weaker-recovery}

In this section, we test the recovery conditions (full-pattern positivity, causal-family positivity) developed in Section~\ref{sec:mar-consistent}. 
Then we move to MNAR missingness, where the observed-data distribution no longer generally identifies the full-data distribution.

\vspace{-0.1cm}
\subsubsection{From Causal-Family Positivity to Non-Identification}
\label{sec:tier2-tier3}

In contrast to the previous experiment, where full-pattern positivity holds, we first violate it by retaining causal-family positivity in the absence of any fully observed samples. We use Chain and Fork SCMs with two observation patterns $O_1$ and $O_2$ in each that share an always-observed node $x_{\text{cut}}$ (Appendix~\ref{app:missingness_mar_variant}). Which pattern is observed depends only on $x_{\text{cut}}$, so the mechanism is MAR. Under \emph{Family positivity}, the pattern probability ($O_1$ vs. $O_2$) is a sigmoid of the standardized $x_{\text{cut}}$, so both patterns occur on the whole support, every causal family is covered, and $p^*$ is identifiable. Under \emph{Family positivity violated}, we additionally never observe $O_1$ when $x_{\text{cut}}$ is below its median $\tau$, so $p^*$ is not identifiable on $Z_{O_1} = \{\mathbf{x} : x_{\text{cut}} < \tau\}$. As a \emph{full positivity} reference, we use the previous MAR mechanism at a 50\% missing rate, which matches the number of observed entries. Figure~\ref{fig:tier2_tier3_kl} reports the KL; full results are in Appendix~\ref{app:full_results_tier2_vs_tier3}.

\begin{figure}[htbp]
    \centering
    \includegraphics[width=\linewidth]{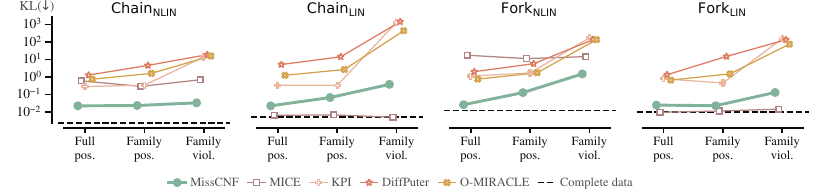}
   \caption{KL to the ground-truth distribution on Chain and Fork graphs (linear and nonlinear) under three regimes: \textit{Full positivity}, \textit{Family positivity}, and \textit{Family positivity violated}. The dashed line marks the model trained on complete data.
    }
    \vspace{-0.1cm}
    \label{fig:tier2_tier3_kl}
\end{figure}

Under \textit{Family positivity}, MissCNF matches the full-positivity reference on Chain$_{\text{NLIN}}$ (0.024 vs.\ 0.023) and Fork$_{\text{LIN}}$ (0.023 vs.\ 0.025). On Chain$_{\text{LIN}}$ and Fork$_{\text{NLIN}}$, its KL increases (0.068 vs.\ 0.023 and 0.13 vs.\ 0.027), but it stays below KPI, DiffPuter, and O-MIRACLE on every task. The gap can be explained by how much of the marginal likelihood requires Monte Carlo marginalization. In Chain, coverage holds requires $p(x_2, x_3)$, which integrates over $x_1$, on half of the data. The full-positivity reference requires $p(x_1, x_3)$, which integrates over $x_2$, only on a quarter of the data.

Under \textit{Family positivity violated}, the KL of MissCNF rises overall as expected. We further locate the error in Figure~\ref{fig:tier3_boundary} (Appendix~\ref{app:full_results_tier2_vs_tier3}) and confirm the error lies inside $Z_{O_1}$ and stays close to zero outside it, which matches our claim in Theorem~\ref{thm:mar-consistency}. 

\subsubsection{MNAR Missingness}
\label{sec:mnar}

\begin{table}[htbp]
\centering
\scriptsize
\setlength{\tabcolsep}{1.5pt}
\resizebox{\linewidth}{!}{%
\begin{tabular}{lcccccccc}
\toprule
Method & Chain$_{\text{LIN}}$ & Fork$_{\text{LIN}}$ & Collider$_{\text{LIN}}$ & Triangle$_{\text{LIN}}$ & Chain$_{\text{NLIN}}$ & Fork$_{\text{NLIN}}$ & Collider$_{\text{NLIN}}$ & Triangle$_{\text{NLIN}}$ \\
\midrule
MICE & \textbf{0.058}{\tiny$\pm$0.046} & \textbf{0.030}{\tiny$\pm$0.010} & \textbf{0.110}{\tiny$\pm$0.080} & \textbf{0.014}{\tiny$\pm$0.010} & 0.931{\tiny$\pm$0.279} & 19.116{\tiny$\pm$0.236} & \underline{0.251}{\tiny$\pm$0.146} & 16.060{\tiny$\pm$0.589} \\
KPI & 0.685{\tiny$\pm$0.220} & 0.997{\tiny$\pm$0.158} & 0.731{\tiny$\pm$0.349} & 0.650{\tiny$\pm$0.268} & \underline{0.687}{\tiny$\pm$0.301} & \underline{1.574}{\tiny$\pm$0.136} & 0.785{\tiny$\pm$0.399} & 5.596{\tiny$\pm$0.778} \\
O-MIRACLE & 3.808{\tiny$\pm$4.190} & 1.011{\tiny$\pm$0.386} & 0.956{\tiny$\pm$0.399} & 3.936{\tiny$\pm$4.829} & 2.749{\tiny$\pm$2.780} & 1.700{\tiny$\pm$0.423} & 0.921{\tiny$\pm$0.372} & \underline{3.839}{\tiny$\pm$2.393} \\
\textbf{MissCNF} & \underline{0.083}{\tiny$\pm$0.052} & \underline{0.048}{\tiny$\pm$0.025} & \underline{0.117}{\tiny$\pm$0.094} & \underline{0.031}{\tiny$\pm$0.016} & \textbf{0.121}{\tiny$\pm$0.042} & \textbf{0.066}{\tiny$\pm$0.030} & \textbf{0.177}{\tiny$\pm$0.127} & \textbf{0.288}{\tiny$\pm$0.280} \\
\bottomrule
\end{tabular}}
\caption{KL to the ground-truth distribution under self-masking MNAR at 60\% missingness. Mean $\pm$ std over 5 seeds. Lower is better. Bold = best, underline = second-best per task.}
\label{tab:mnar60_kl}
\end{table}
Finally, we consider self-masking MNAR missingness at missing rates of $30\%$ and $60\%$, where the probability that a variable is missing may depend on its own value, making the full-data distribution not generally identified from the observed data. Table~\ref{tab:mnar60_kl} reports KL at $60\%$; full results are in Appendix~\ref{app:full_results_mnar}. Despite the absence of a general identification guarantee, the linear--nonlinear crossover observed under MCAR and MAR remains clear. On the nonlinear SCMs, MissCNF achieves the lowest KL in all 8 combinations of task and missing rate. On the linear SCMs, MICE is best in all 8 settings, and MissCNF is second-best in each. MICE degrades sharply on nonlinear Fork and Triangle (KL above 16 at $60\%$), consistent with the misspecification failure of non-causal-aware imputers.
The experiment shows that MissCNF remains empirically robust to these MNAR perturbations although no guarantee holds in this regime.
\subsection{Monte Carlo Approximation and Computational Cost}
\label{sec:mc-computation}

\begin{wrapfigure}{r}{0.5\linewidth}
    \centering
    \vspace{-0.45cm}
    \includegraphics[width=\linewidth]{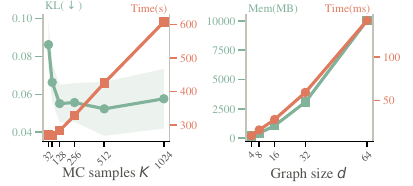}
    \caption{Scaling of MC samples and compute.}
    \label{fig:tiermc_ablation}
    \vspace{-0.05cm}
\end{wrapfigure}

We study the effect of the number of Monte Carlo samples used to estimate the observed-data marginal likelihood on $\text{Triangle}_{\text{NLIN}}$ under MAR and a missing rate of 60\%. 
The running time is nearly constant up to $K=128$, where KL plateaus, and grows linearly in $K$ beyond that (Figure~\ref{fig:tiermc_ablation}, complete results in Appendix~\ref{app:full_results_mc_ablation}). 
The linear growth in $K$ is consistent with the $\mathcal{O}(KdL)$ per-step cost of Section~\ref{sec:method}. Peak memory reaches $10$\,GB at $d=64$, which fits well on a single GPU.

\section{Conclusions and Limitations}
\label{sec:ending}

We introduced MissCNF, which trains causal normalizing flows directly on incomplete data. Unlike listwise deletion and impute-then-fit pipelines, MissCNF maximizes the observed-data likelihood of every sample without constructing any completed dataset. This can be done thanks to the autoregressive structure of CNFs: missing variables are integrated out by Monte Carlo, and only those in the ancestral closure of the observed set require sampling at all.

We analyse two concepts which are often conflated: ignorability and identifiability. Ignorability under MAR establishes that the observed-data likelihood is the right objective to maximize. The central contribution of our work is establishing the conditions for identifiability, which is where causal structure pays off. We show that recovering the full joint requires only that each variable be observed across its support together with its parents in some pattern, a condition that can hold even when no record in the dataset is ever complete. Where this condition fails, our theory identifies which region is unrecoverable.

Empirically, MissCNF performs best where causal normalizing flows are motivated in the first place. On nonlinear SCMs it consistently achieves the lowest symmetric KL divergence, including under self-masking MNAR, where no identification guarantee applies. On linear SCMs, however, a strong imputer can recover the relevant conditionals accurately.

Finally, because the objective relies only on the autoregressive structure, our method can be easily extended to CNF extensions such as DeCaFlow~\citep{almodovar2025decaflow}, which accounts for latent confounding, and TSCNF~\citep{yeom2026estimating}, which handles longitudinal data.

\textbf{Limitations.} We evaluate on synthetic data since the metrics require a known ground-truth joint distribution and known interventional and counterfactual outcomes. Validation on real-world data will be an important next step. The Monte Carlo likelihood estimate becomes expensive as the number of missing ancestral variables grows. Finally, MissCNF inherits the assumptions of CNFs: a known causal graph and no unobserved confounding. Combining MissCNF with DeCaFlow is a natural way to remove the second assumption, and we leave this to future work.

\section*{Acknowledgments}
This project was supported by the Diabetes Center Berne and strategic funding of the medical faculty of the University of Bern. Calculations were performed on UBELIX, the HPC cluster at the University of Bern.

\clearpage

\bibliography{iclr2026_conference}
\bibliographystyle{iclr2026_conference}

\appendix
\newpage
\begin{center}
    \hrule 
    \color{indigo}
    \startcontents[sections]\vbox{\vspace{4mm}\sc \LARGE Learning Causal Normalizing Flows from Incomplete Data via Observed-Data Likelihood \\\sc\small \textbf{Additional Material}} \vspace{5mm} \hrule height .5pt
    \printcontents[sections]{l}{0}{\setcounter{tocdepth}{2}}
\end{center}
\newpage
\section{Proofs}
\label{app:proofs}

\subsection{Marginal Likelihood and Training Objective}
\label{app:proofs-method}

\begin{tcolorbox}
\textbf{Proposition~\ref{prop:marginal-mc}}
The computation of $p_{\theta,O}(\mathbf{x}_O)$ involves only the variables in the ancestral closure $A = \mathrm{anc}(O)$; equivalently, only the missing ancestral coordinates $M'=A\setminus O$ need to be integrated out:
$$p_{\theta,O}(\mathbf{x}_O) = \mathbb E_{\mathbf{u}_{M'} \sim \prod_{i\in M'} p(u_i)}\big[g(\mathbf{u}_{M'})\big],
\qquad
g(\mathbf{u}_{M'}; \mathbf{x}_O) := \prod_{i\in O} p(u_i)\left|\frac{\partial\tau}{\partial x_i}\right|,
$$
where, for each $i \in A$ processed in topological order, $u_i = \tau(x_i; h_i(\mathbf{x}_{\mathrm{pa}(i)}))$ if $i \in O$, and $x_i = \tau^{-1}(u_i; h_i(\mathbf{x}_{\mathrm{pa}(i)}))$ if $i \in M'$.
\end{tcolorbox}
\begin{proof}[Proof of Proposition~\ref{prop:marginal-mc}]
Let $B := V \setminus A$ denote the barren nodes. We first show $\int \prod_{i\in B} p_\theta(x_i\mid \mathbf{x}_{\mathrm{pa}(i)})\, d\mathbf{x}_B = 1$ for any fixed $\mathbf{x}_{V\setminus B}$. 

No node in $B$ has a child in $A$: if $i\in B$ had a child $j\in A=\mathrm{anc}(O)$, then $i$ would itself be an ancestor of some node in $O$, so $i\in A$, contradicting $i\in B$. Hence every child of a node in $B$ is itself in $B$. Processing $B$ in reverse topological order (children before parents), when we reach node $i \in B$, every factor $p_\theta(x_j \mid \mathbf{x}_{\mathrm{pa}(j)})$ where $i$ is a parent of $j$ has already been integrated out, so $x_i$ appears only in its own factor:
$$
\int p_\theta(x_i\mid \mathbf{x}_{\mathrm{pa}(i)})\, dx_i=\int p(u_i)\,du_i=1,
$$
regardless of $\mathbf{x}_{\mathrm{pa}(i)}$. Repeating over all of $B$ removes the whole product.

Splitting the product over $V=A\sqcup B$ and integrating out $x_B$ first (the integrand is a nonnegative density, so the order of integration is free),
\begin{align*}
p_{\theta,O}(\mathbf{x}_{O})
&=\int  \left[\prod_{i\in A} p_\theta(x_i\mid \mathbf{x}_{\mathrm{pa}(i)})\right] \left(\int \prod_{i\in B} p_\theta(x_i\mid \mathbf{x}_{\mathrm{pa}(i)})\, d\mathbf{x}_B\right) d\mathbf{x}_{M'}\\
&= \int \prod_{i\in A} p_\theta(x_i\mid \mathbf{x}_{\mathrm{pa}(i)})\; d\mathbf{x}_{M'}.
\end{align*}

It remains to rewrite the integral over $\mathbf{x}_{M'}$ as an expectation over $\mathbf{u}_{M'}$. Since $\mathrm{pa}(i)\subseteq A$ for every $i\in A$, we process $A$ in a topological order $\prec$, so that when node $i$ is reached, every parent $\mathbf{x}_{\mathrm{pa}(i)}$ is already resolved either as observed data, or as a value already substituted via an earlier node's $u_j$, then:

\begin{itemize}
    \item If $i\in M'$, $x_i$ is an integration variable; since $\tau(\cdot\,;h_i(\mathbf{x}_{\mathrm{pa}(i)}))$ is a bijection, the change of variables $x_i = \tau^{-1}(u_i;h_i(\mathbf{x}_{\mathrm{pa}(i)}))$ cancels the density and measure Jacobians exactly, $p_\theta(x_i\mid \mathbf{x}_{\mathrm{pa}(i)})\,dx_i=p(u_i)\,du_i$.
    \item Otherwise, $i\in O$, $x_i$ is instead fixed data, so we simply re-express its density via the forward change-of-variables formula, $p_\theta(x_i\mid \mathbf{x}_{\mathrm{pa}(i)}) = p(u_i)\left|\partial \tau/\partial x_i\right|$. 
\end{itemize}

Applying this to every $i\in A$ in topological order and reassembling the product gives
$$
p_{\theta,O}(\mathbf{x}_O)=\int 
\underbrace{\prod_{i\in O} p(u_i)\left|\frac{\partial\tau}{\partial x_i}\right|}_{g(\mathbf{u}_{M'}; \mathbf{x}_O)}
\;\;
\prod_{i\in M'} p(u_i)\, du_i
= \mathbb E_{\mathbf{u}_{M'} \sim \prod_{i\in M'} p(u_i)}\big[g(\mathbf{u}_{M'}; \mathbf{x}_O)\big]. \qedhere
$$
\end{proof}

\subsection{MAR: Ignorability and Identification}
\label{app:proofs-convergence}

\begin{tcolorbox}
\textbf{Lemma~\ref{lem:mar-pullout}} 
Fix a pattern $O \subseteq V$ and let $R = \mathbf 1_O$. For \emph{any} $\theta$, define
$$
q_\theta(\mathbf{x}_O, R) := \int p_\theta(\mathbf{x}_O,\mathbf{x}_{M})\, p_\mathrm{miss}(R\mid \mathbf{x}_O, \mathbf{x}_{M})\, d\mathbf{x}_M,
$$
the observed-data density that would arise if $\mathbf{x}$ were generated by the candidate model $p_\theta$ while $R$ is generated by the true, fixed mechanism $p_\mathrm{miss}$. Under MAR, $q_\theta$ simplifies for \emph{every} $\theta$:
$$
q_\theta(\mathbf{x}_O,R) = p_{\theta,O}(\mathbf{x}_O)\, p_\mathrm{miss}(R \mid \mathbf{x}_O).
$$
In particular, if the model is realizable ($p_{\theta_0} = p^\ast$ for some $\theta_0$), then $p_{obs}(\mathbf{x}_O,R) = q_{\theta_0}(\mathbf{x}_O,R) = p^\ast_O(\mathbf{x}_O)\,p_\mathrm{miss}(R\mid \mathbf{x}_O)$. 
\end{tcolorbox}
\begin{proof}[Proof of Lemma~\ref{lem:mar-pullout}]

Under MAR, $p_\mathrm{miss}(R \mid \mathbf{x}) = p_\mathrm{miss}(R \mid \mathbf{x}_O)$ does not depend on the variable of integration $\mathbf{x}_{M}$, so it may be pulled outside the integral defining $q_\theta$:
\begin{align*}
    q_\theta(\mathbf{x}_O,R) &= \int p_\theta(\mathbf{x}_{O},\mathbf{x}_{M})\,p_\mathrm{miss}(R \mid \mathbf{x}_O)\,d\mathbf{x}_{M} \\
    &= p_\mathrm{miss}(R \mid \mathbf{x}_O)\int p_\theta(\mathbf{x}_{O},\mathbf{x}_{M})\,d\mathbf{x}_{M} \\ &= p_\mathrm{miss}(R \mid \mathbf{x}_O)\,p_{\theta,O}(\mathbf{x}_{O}).
\end{align*}

For any missingness mechanism, substituting $p_\theta=p^\ast$ into the definition of $q_\theta$ gives
\begin{align*}
    q_\theta(\mathbf{x}_O,R) &= \int p^\ast(\mathbf{x}_{O},\mathbf{x}_{M})\,p_\mathrm{miss}(R \mid \mathbf{x}_{O},\mathbf{x}_{M})\,d\mathbf{x}_{M} \\
    &= \int p_{joint}(\mathbf{x}_{O},\mathbf{x}_{M},R)\,d\mathbf{x}_{M} \\ &=p_{obs}(\mathbf{x}_O,R).
\end{align*}
matching the definition of $p_{obs}$ in Section~\ref{sec:background}.

Combining both parts at $\theta=\theta_0$ with $p_{\theta_0}=p^\ast$ gives $q_{\theta_0}(\mathbf{x}_O,R)=p_{obs}(\mathbf{x}_O,R)$ and  $q_{\theta_0}(\mathbf{x}_O,R) = p^\ast_O(\mathbf{x}_O)\,p_\mathrm{miss}(R \mid \mathbf{x}_O)$, so $p_{obs}(\mathbf{x}_O,R) = p^\ast_O(\mathbf{x}_O)\,p_\mathrm{miss}(R \mid \mathbf{x}_O)$. \qedhere
\end{proof}

\begin{tcolorbox}
\textbf{Theorem~\ref{thm:mar-consistency}} 
Suppose MAR holds, and the CNF family is realizable ($p_{\theta_0} = p^\ast$ for some $\theta_0$). Define $Z_O := \{\mathbf{x}_{O} : p_\mathrm{miss}(\mathbf 1_O\mid \mathbf{x}_{O}) = 0\}$ the set of values of $\mathbf{x}_{O}$ that pattern $O$ never reveals. Then $\theta_0$ is a global minimizer of $\mathcal L(\theta)$, and \emph{every} global minimizer $\theta^\ast$ satisfies
$$
p_{\theta^\ast,O}(\mathbf{x}_{O}) = p^\ast_O(\mathbf{x}_O) \quad \text{for $p^\ast_O$-almost every\ } \mathbf{x}_{O} \notin Z_O, \quad \text{simultaneously for every } O.
$$
\end{tcolorbox}
\begin{proof}[Proof of Theorem~\ref{thm:mar-consistency}]
By definition, we have $\mathcal L(\theta) = \mathbb E_{(\mathbf{X}_{O(\mathcal{R})},\mathcal{R}) \sim p_{obs}}\big[-\log p_{\theta,O(\mathcal{R})}(\mathbf{X}_{O(\mathcal{R})})\big]$. Moreover, by Lemma~\ref{lem:mar-pullout}, $p_{obs}(\mathbf{x}_O,R) = p_{O}^\ast(\mathbf{x}_O)\,p_\mathrm{miss}(R\mid\mathbf{x}_O)$, so wherever $p_{obs}(\mathbf{x}_O,R) > 0$ we necessarily have $p_\mathrm{miss}(R\mid \mathbf{x}_O) > 0$; thus $\log p_\mathrm{miss}(R\mid\mathbf{x}_O)$ is finite $p_{obs}$-almost every $(\mathbf{x}_O,R)$. Therefore, for any $\theta$, add and subtract $\log p_\mathrm{miss}(R \mid \mathbf{x}_O)$ inside the expectation defining $\mathcal L(\theta)-\mathcal L(\theta_0)$:
\begin{align*}
\mathcal L(\theta) - \mathcal L(\theta_0)
&= \mathbb E_{p_{obs}}\Big[\log\frac{p_{\theta_0,O}(\mathbf{x}_{O})}{p_{\theta,O}(\mathbf{x}_{O})}\Big] \\
&= \mathbb E_{p_{obs}}\Big[\log\frac{p_{\theta_0,O}(\mathbf{x}_{O})\,p_\mathrm{miss}(R \mid \mathbf{x}_O)}{p_{\theta,O}(\mathbf{x}_{O})\,p_\mathrm{miss}(R \mid \mathbf{x}_O)}\Big] \\
&= \mathbb E_{p_{obs}}\Big[\log\frac{q_{\theta_0}(\mathbf{x}_O,R)}{q_\theta(\mathbf{x}_O,R)}\Big] \\
&= \mathbb E_{p_{obs}}\Big[\log\frac{p_{obs}(\mathbf{x}_O,R)}{q_\theta(\mathbf{x}_O,R)}\Big] \\
&= \mathrm{KL}\big(p_{obs}\,\big\|\,q_\theta\big) \;\ge\; 0,
\end{align*}
using $q_{\theta_0}=p_{obs}$ by Lemma~\ref{lem:mar-pullout} in the second-to-last step. Since $\mathcal L(\theta)\ge\mathcal L(\theta_0)$ for every $\theta$, $\theta_0$ is a global minimizer. Equality $\mathcal L(\theta^\ast)=\mathcal L(\theta_0)$ holds iff the KL term vanishes, i.e.\ $q_{\theta^\ast}(\mathbf{x}_O,R) = p_{obs}(\mathbf{x}_O,R)$ for $p_{obs}$-almost every\ $(\mathbf{x}_O,R)$, i.e.
$$
p_{\theta^\ast,O}(\mathbf{x}_{O})\,p_\mathrm{miss}(R \mid \mathbf{x}_O) = p^\ast_O(\mathbf{x}_O)\,p_\mathrm{miss}(R \mid \mathbf{x}_O) \qquad \text{for $p_{obs}$-almost every\ } (\mathbf{x}_O,R).
$$
Wherever $p_{obs}(\mathbf{x}_O,R)>0$ we have $p_\mathrm{miss}(R \mid \mathbf{x}_O)>0$, so dividing gives $p_{\theta^\ast,O}(\mathbf{x}_{O})=p^\ast_O(\mathbf{x}_O)$ for $p_{obs}$-almost every\ $(\mathbf{x}_O,R)$, hence for $p^\ast_O$-almost every\ $\mathbf{x}_{O}$. Since $Z_O$ is exactly the complement of $\{\mathbf{x}_{O}:p_\mathrm{miss}(\mathbf 1_O\mid \mathbf{x}_{O})>0\}$, this is equivalent to $p_{\theta^\ast,O}(\mathbf{x}_{O})=p^\ast_O(\mathbf{x}_O)$ for $p^\ast_O$-almost every\ $\mathbf{x}_{O}\notin Z_O$, simultaneously for every $O$. 

\end{proof}

\begin{tcolorbox}
\textbf{Corollary~\ref{cor:coverage}}
Suppose $p_{\theta^\ast}$ and $p^\ast$ both factorize according to $G$ (Section~\ref{sec:background}). Under causal-family overlap (Definition~\ref{def:positivity}) and $p^\ast_{\mathrm{pa}(i)}(\mathbf{x}_{\mathrm{pa}(i)})>0$ on the relevant domain for each $i$, every global minimizer $\theta^\ast$ of $\mathcal L$ satisfies $p_{\theta^\ast}(\mathbf{x}) = p^\ast(\mathbf{x})$ for $p^\ast$-almost every $\mathbf{x}$.
\end{tcolorbox}
\begin{proof}[Proof of Corollary~\ref{cor:coverage}]
Fix $i$ and let $F_i:=\{i\}\cup\mathrm{pa}(i)\subseteq o_i$. By Theorem~\ref{thm:mar-consistency}, $p_{\theta^\ast,o_i}(\mathbf{x}_{o_i}) = p^\ast_{o_i}(\mathbf{x}_{o_i})$ for $p^\ast_{o_i}$-almost every\ $\mathbf{x}_{o_i}\notin Z_{o_i}$; by hypothesis $p^\ast_{o_i}(Z_{o_i})=0$, so this equality holds for $p^\ast_{o_i}$-almost every\ $\mathbf{x}_{o_i}$. Both sides are nonnegative densities in $\mathbf{x}_{o_i}$, so marginalizing over $\mathbf{x}_{o_i\setminus F_i}$ preserves the almost every\ equality, giving
$$
p_{\theta^\ast,F_i}(\mathbf{x}_{F_i}) = p^\ast_{F_i}(\mathbf{x}_{F_i}) \qquad \text{for $p^\ast_{F_i}$-almost every\ } \mathbf{x}_{F_i}.
$$
Marginalizing once more over $x_i$ gives
$$
p_{\theta^\ast,\mathrm{pa}(i)}(\mathbf{x}_{\mathrm{pa}(i)}) = p^\ast_{\mathrm{pa}(i)}(\mathbf{x}_{\mathrm{pa}(i)}) \qquad \text{for $p^\ast_{\mathrm{pa}(i)}$-almost every\ } \mathbf{x}_{\mathrm{pa}(i)}.
$$
Hence, wherever $p^\ast_{\mathrm{pa}(i)}(\mathbf{x}_{\mathrm{pa}(i)})>0$,
$$
p_{\theta^\ast}(x_i\mid \mathbf{x}_{\mathrm{pa}(i)}) = \frac{p_{\theta^\ast,F_i}(\mathbf{x}_{F_i})}{p_{\theta^\ast,\mathrm{pa}(i)}(\mathbf{x}_{\mathrm{pa}(i)})} = \frac{p^\ast_{F_i}(\mathbf{x}_{F_i})}{p^\ast_{\mathrm{pa}(i)}(\mathbf{x}_{\mathrm{pa}(i)})} = p^\ast(x_i\mid \mathbf{x}_{\mathrm{pa}(i)})
$$
for $p^\ast_{F_i}$-almost every $\mathbf{x}_{F_i}$. Let $N_i\subseteq\mathcal X_{F_i}$ denote the exceptional set, where $\mathcal X_{F_i}$ denote the sample space of $\mathbf{X}_{F_i}$, so $p^\ast_{F_i}(N_i)=0$.

To combine these $d$ statements for $d$ variables, lift each $N_i$ to the full sample space:
$$
\widetilde N_i := \{\mathbf{x}\in\mathcal X : \mathbf{x}_{F_i}\in N_i\}.
$$
Since $p^\ast_{F_i}(N_i)=0$, the definition of the marginal gives $p^\ast(\widetilde N_i) = p^\ast_{F_i}(N_i) = 0$. As $V$ is finite, $\widetilde N := \bigcup_{i=1}^d \widetilde N_i$ is also $p^\ast$-null. Off $\widetilde N$, every conditional equality holds simultaneously, so, using that both $p_{\theta^\ast}$ and $p^\ast$ factorize according to $G$,
$$
p_{\theta^\ast}(\mathbf{x}) = \prod_{i=1}^d p_{\theta^\ast}(x_i\mid \mathbf{x}_{\mathrm{pa}(i)}) = \prod_{i=1}^d p^\ast(x_i\mid \mathbf{x}_{\mathrm{pa}(i)}) = p^\ast(\mathbf{x})
$$
for $p^\ast$-almost every\ $\mathbf{x}$.
\end{proof}

\subsection{Recovery of Imputation Conditionals}
\label{app:proofs-imputation}

\begin{tcolorbox}
\begin{corollary}[Recovery of arbitrary imputation conditionals]
\label{cor:conditional-recovery}
Suppose the hypotheses of Corollary~\ref{cor:coverage} hold. Then every global minimizer $\theta^\ast$ of $\mathcal L$ satisfies
$$
p_{\theta^\ast}(\mathbf{x}_B\mid \mathbf{x}_A)
=
p^\ast(\mathbf{x}_B\mid \mathbf{x}_A)
$$
for any partition $V=A\sqcup B$, for $p^\ast_A$-almost every $\mathbf{x}_A$ for which the conditional is defined.
\end{corollary}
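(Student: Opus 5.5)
The plan is to reduce the statement to Corollary~\ref{cor:coverage} and then check that conditioning preserves an almost-everywhere equality of joint densities. Under the assumed hypotheses, Corollary~\ref{cor:coverage} gives $p_{\theta^\ast}(\mathbf{x}) = p^\ast(\mathbf{x})$ for $p^\ast$-almost every $\mathbf{x}$. Let $N$ be the exceptional set, so $p^\ast(N)=0$. Fix a partition $V=A\sqcup B$. By definition, both conditionals are ratios of the joint to the $A$-marginal:
$$
p_{\theta^\ast}(\mathbf{x}_B\mid \mathbf{x}_A)=\frac{p_{\theta^\ast}(\mathbf{x})}{p_{\theta^\ast,A}(\mathbf{x}_A)}, \qquad p^\ast(\mathbf{x}_B\mid \mathbf{x}_A)=\frac{p^\ast(\mathbf{x})}{p^\ast_A(\mathbf{x}_A)}.
$$
It therefore suffices to show two things: the numerators agree for almost every $\mathbf{x}_B$ given almost every $\mathbf{x}_A$, and the denominators agree for $p^\ast_A$-almost every $\mathbf{x}_A$.

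\emph{Denominators.} Marginalize the joint equality over $\mathbf{x}_B$, as in the proof of Corollary~\ref{cor:coverage}. The one subtlety is that $p^\ast$-a.e.\ equality says nothing about the region where $p^\ast=0$. There $p_{\theta^\ast}$ could in principle place mass, and integrating over $\mathbf{x}_B$ would then pick it up. I would rule this out by a total-mass argument. Since $\int p_{\theta^\ast}\,d\mathbf{x} = 1 = \int p^\ast\,d\mathbf{x}$, and the two densities agree on $\{p^\ast>0\}\setminus N$, which carries all of $p^\ast$'s mass, $p_{\theta^\ast}$ must assign mass zero to $\{p^\ast=0\}\cup N$. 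Hence the two densities agree Lebesgue-a.e., not just $p^\ast$-a.e. Fubini then gives $p_{\theta^\ast,A}(\mathbf{x}_A)=p^\ast_A(\mathbf{x}_A)$ for Lebesgue-a.e.\ $\mathbf{x}_A$, and in particular for $p^\ast_A$-a.e.\ $\mathbf{x}_A$. I expect this step to be the main obstacle, and the mass argument is the fix I would try first.

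\emph{Numerators.} Apply Fubini to the Lebesgue-null set $\{p_{\theta^\ast}\neq p^\ast\}$. For Lebesgue-a.e.\ $\mathbf{x}_A$, its section in $\mathbf{x}_B$ is null. So for $p^\ast_A$-a.e.\ $\mathbf{x}_A$, the map $\mathbf{x}_B\mapsto p_{\theta^\ast}(\mathbf{x}_A,\mathbf{x}_B)$ equals $\mathbf{x}_B\mapsto p^\ast(\mathbf{x}_A,\mathbf{x}_B)$ for a.e.\ $\mathbf{x}_B$.

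\emph{Combining.} Take the intersection of the two full-measure sets of $\mathbf{x}_A$ and restrict to $\mathbf{x}_A$ with $p^\ast_A(\mathbf{x}_A)>0$, which is exactly where the conditional is defined. On this set the denominators coincide and are positive, and the numerators coincide as functions of $\mathbf{x}_B$ almost everywhere. Dividing gives $p_{\theta^\ast}(\mathbf{x}_B\mid\mathbf{x}_A)=p^\ast(\mathbf{x}_B\mid\mathbf{x}_A)$ as conditional densities, for $p^\ast_A$-almost every $\mathbf{x}_A$ at which the conditional is defined.
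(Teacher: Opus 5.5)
Your proof is correct and takes the same route as the paper: invoke Corollary~\ref{cor:coverage} for $p^\ast$-a.e.\ equality of the joints, marginalize over $\mathbf{x}_B$ to equate the $A$-marginals, and divide wherever the common marginal is positive. Your total-mass argument, which upgrades $p^\ast$-a.e.\ equality to Lebesgue-a.e.\ equality before applying Fubini, makes explicit a step the paper's proof takes for granted. Without it, $p^\ast$-a.e.\ equality of the joints would only yield $p_{\theta^\ast,A}\ge p^\ast_A$.
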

\end{tcolorbox}

\begin{proof}[Proof of Corollary~\ref{cor:conditional-recovery}]
By Corollary~\ref{cor:coverage},
$$
p_{\theta^\ast}(\mathbf{x})=p^\ast(\mathbf{x})
$$
for $p^\ast$-almost every $\mathbf{x}$. Fix any partition $V=A\sqcup B$. Marginalizing over $\mathbf{x}_B$ gives
$$
p_{\theta^\ast,A}(\mathbf{x}_A)
=
p^\ast_A(\mathbf{x}_A)
$$
for $p^\ast_A$-almost every $\mathbf{x}_A$. Wherever this common marginal is positive,
$$
p_{\theta^\ast}(\mathbf{x}_B\mid \mathbf{x}_A)
=
\frac{p_{\theta^\ast}(\mathbf{x}_A,\mathbf{x}_B)}
     {p_{\theta^\ast,A}(\mathbf{x}_A)}
=
\frac{p^\ast(\mathbf{x}_A,\mathbf{x}_B)}
     {p^\ast_A(\mathbf{x}_A)}
=
p^\ast(\mathbf{x}_B\mid \mathbf{x}_A).
$$
\end{proof}

\subsection{Discussion of Imputation-Based Alternatives}
\label{app:discussion-imputation}

\paragraph{Single imputation and Multiple imputation.}
Single imputation replaces a conditional distribution by a single point estimate. Even the oracle conditional mean $\mu(\mathbf{x}_O)=\mathbb E_{p^\ast}[\mathbf{x}_M\mid \mathbf{x}_O]$ loses variation. The completed data can have the correct conditional mean and still the wrong joint distribution. This is a classical motivation for multiple imputation \citep{rubin1987multiple}.

A stochastic imputer such as MICE or a joint generative imputer can generate $m>1$ completed datasets. Classical multiple imputation then fits the downstream analysis separately to each completed dataset and combines the resulting estimands and uncertainty estimates using Rubin-style rules \citep{rubin1987multiple}. In our setting, however, the downstream object is an entire generative model rather than a finite-dimensional estimand. Generating $m$ completed datasets would therefore produce
$$
p_{\theta^{(1)}},\ldots,p_{\theta^{(m)}}.
$$
To the best of our knowledge, there is no standard Rubin-style rule for combining these learned densities or their parameters. A natural predictive construction is the equally weighted mixture
$$
\bar p(\mathbf{x})
=
\frac{1}{m}\sum_{j=1}^m p_{\theta^{(j)}}(\mathbf{x}),
$$
equivalently sampling an index $J$ uniformly and then sampling from $p_{\theta^{(J)}}$. This is a valid mixture distribution, but it is not the classical Rubin combination procedure and requires training, storing, and evaluating $m$ downstream CNFs. Our main experiments therefore use $m=1$ for all imputation baselines
We leave the consideration of multiple imputation to future work.

\paragraph{Compatibility of fully conditional specification.}
The form of the imputation model matters independently of whether one or multiple completed datasets are generated. Fully conditional specification methods such as MICE specify a collection of conditional models, one for each partially observed variable. These conditionals need not, in general, correspond to any single coherent joint distribution; this is the classical compatibility problem for chained-equation imputation ~\citep{bartlett2015multiple}. When compatibility does hold, the resulting Gibbs-style procedure can often be interpreted as targeting a joint law \citep{vanbuuren2018flexible}. Joint-model imputers, including diffusion-based or other generative imputers, do not suffer from this particular FCS incompatibility because they parameterize a joint distribution directly.

\paragraph{Congeniality between imputation and analysis models.}
When imputation and downstream learning remain two separately fitted stages, classical multiple-imputation theory distinguishes whether the imputation and analysis procedures are \emph{congenial}: roughly, whether their probabilistic assumptions and target quantities can be embedded in a common coherent model \citep{meng1994multiple,bartlett2015multiple}. Lack of congeniality can invalidate downstream inference even when each stage appears reasonable in isolation. We do not rely on congeniality as a universal failure theorem for imputation methods; rather, it is an additional requirement introduced by fitting $q$ and the downstream model separately.

\section{Experimental details}
\label{app:exp}

\subsection{Datasets}
\label{app:datasets}

\subsubsection{Main experiment}
We test our method on eight SCMs: four graphs (Chain, Triangle, Collider, and Fork), each in a linear and a nonlinear version. We take all SCMs from \citet{javaloy2023causal} except $\text{Collider}_{\text{NLIN}}$, which they do not include. Therefore, we take it from \citet{pablo2022vaca}. All exogenous variables follow $\mathcal{N}(0,1)$. Each SCM comes with a list of intervention variables, which is used to compute the metrics (Appendix~\ref{app:metrics}). We present their formulations below and their causal graphs in Figure~\ref{figapp:causal_graphs}. 

\textbf{$\text{Chain}_{\text{LIN}}$}:
\begin{align*}
    f_1(u_1) &=  u_1 \\
    f_2(x_1, u_2) &=  10 \cdot x_1 - u_2  \\
    f_3(x_2, u_3) &= 0.25 \cdot x_2 + 2 \cdot  u_3
\end{align*}

\textbf{$\text{Chain}_{\text{NLIN}}$}:
\begin{align*}
    f_1(u_1) &=  u_1 \\
    f_2(x_1, u_2) &=  e^{ x_1/2}  +  u_2/4  \\
    f_3(x_2, u_3) &= \frac{(x_2 - 5)^3}{15}  + u_3
\end{align*}

\textbf{$\text{Collider}_{\text{LIN}}$}:
\begin{align*}
    f_1(u_1) &=  u_1 \\
    f_2(u_2) &=2 -  u_2  \\
    f_3(x_1, x_2, u_3) &= 0.25 \cdot x_2 - 0.5 \cdot x_1 + 0.5 \cdot u_3
\end{align*}

\textbf{$\text{Collider}_{\text{NLIN}}$}:
\begin{align*}
    f_1(u_1) &= u_1 \\
    f_2(u_2) &= u_2  \\
    f_3(x_1, x_2, u_3) &= 0.25 \cdot x_2^2 + 0.05 \cdot x_1 + 0.5 \cdot u_3
\end{align*}

\textbf{$\text{Fork}_{\text{LIN}}$}:
\begin{align*}
    f_1(u_1) &=  u_1 \\
    f_2(u_2) &= 2 - u_2 \\
    f_3(x_1, x_2, u_3) &= 0.25 \cdot x_2 - 1.5 \cdot x_1 + 0.5 \cdot u_3\\
    f_4(x_3, u_4) &=   x_3 +  0.25 \cdot u_4
\end{align*}

\textbf{$\text{Fork}_{\text{NLIN}}$}:
\begin{align*}
    f_1(u_1) &=  u_1 \\
    f_2(u_2) &= u_2 \\
    f_3(x_1, x_2, u_3) &=  \frac{4}{1 + e^{-x_1 - x_2}} - x_2^2 +  0.5 \cdot u_3\\
    f_4(x_3, u_4) &=   \frac{20}{1 + e^{0.5\cdot x_3^2 - x_3 }} + u_4
\end{align*}

\textbf{$\text{Triangle}_{\text{LIN}}$}:
\begin{align*}
    f_1(u_1) &=  u_1 + 1 \\
    f_2(x_1, u_2) &=  10 \cdot x_1  -  u_2  \\
    f_3(x_1, x_2, u_3) &=  0.5 \cdot x_2 + x_1  + u_3
\end{align*}

\textbf{$\text{Triangle}_{\text{NLIN}}$}:
\begin{align*}
    f_1(u_1) &=  u_1 + 1 \\
    f_2(x_1, u_2) &=  2 \cdot x_1^2  +  u_2  \\
    f_3(x_1, x_2, u_3) &=  \frac{20}{1 + e^{-x_2^2 + x_1}} + u_3
\end{align*}

\begin{figure}[h]
  \centering
  \begin{subfigure}[b]{0.32\textwidth}
  \centering
     \includegraphics[width=\textwidth]{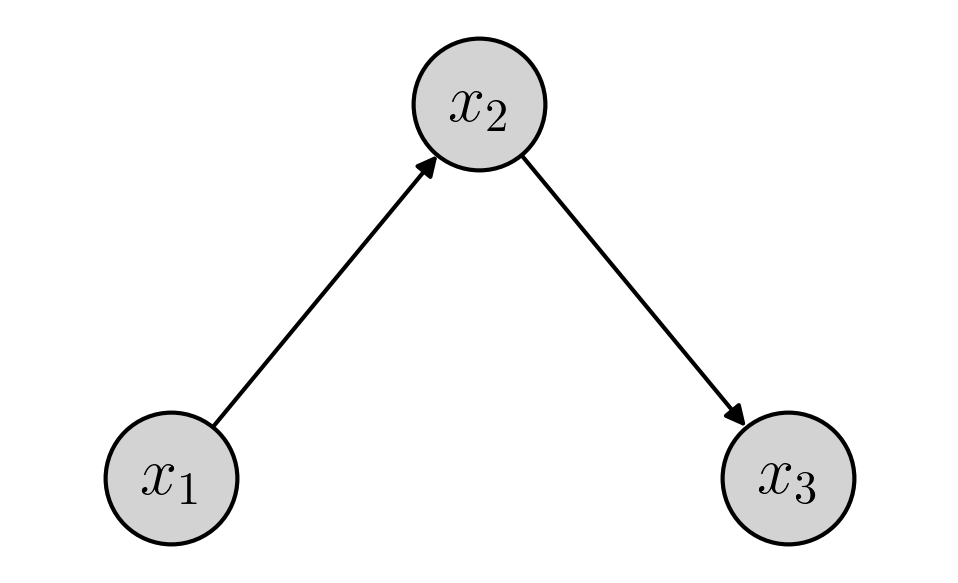}
     \caption{Chain}
  \end{subfigure}
  \begin{subfigure}[b]{0.32\textwidth}
  \centering
     \includegraphics[width=\textwidth]{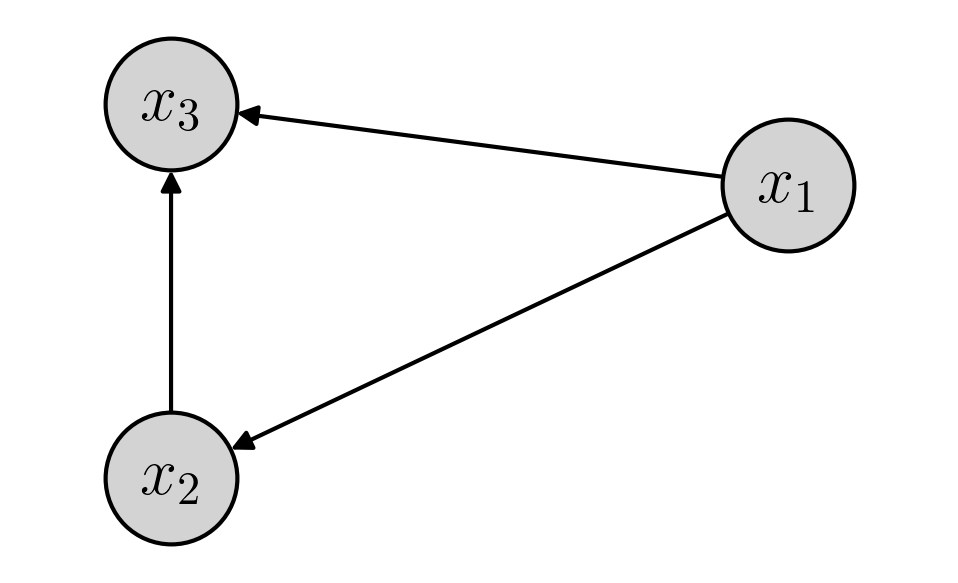}
     \caption{Triangle}
  \end{subfigure}\\
  \begin{subfigure}[b]{0.32\textwidth}
  \centering
     \includegraphics[width=\textwidth]{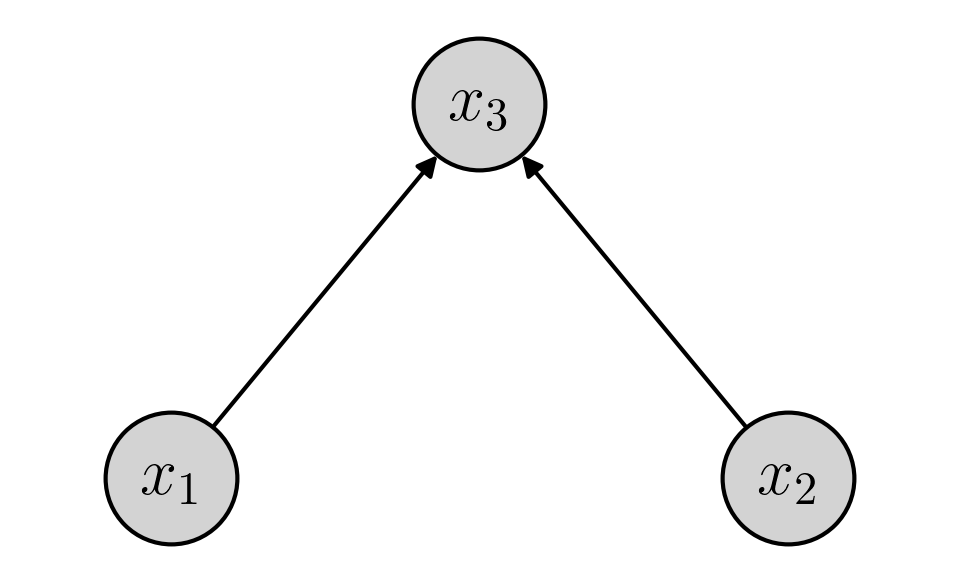}
     \caption{Collider}
  \end{subfigure}
   \begin{subfigure}[b]{0.32\textwidth}
  \centering
     \includegraphics[width=\textwidth]{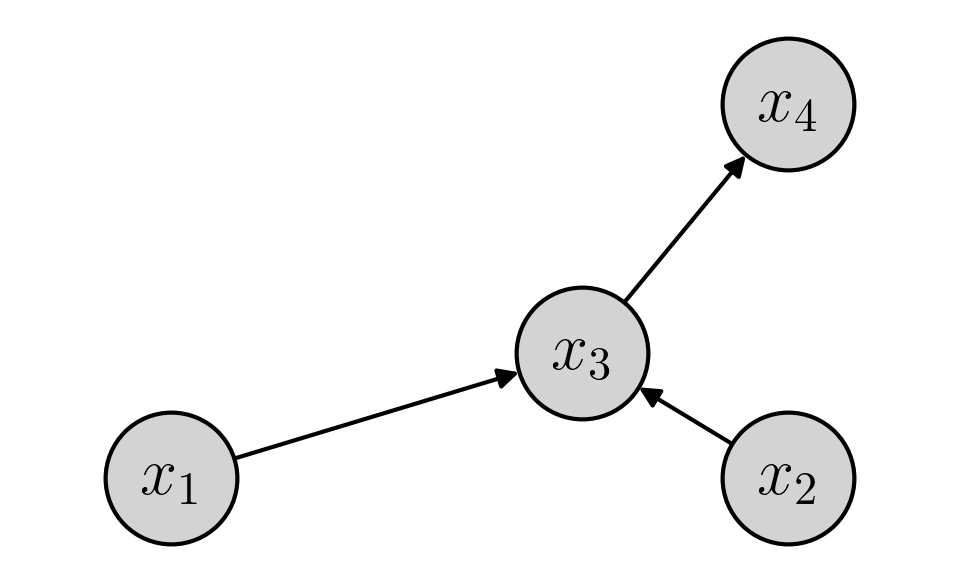}
     \caption{Fork}
  \end{subfigure}
    \caption{Causal graph of the different SCMs considered in Section~\ref{sec:experiments} ~\citep{javaloy2023causal}.}. 
    \label{figapp:causal_graphs}
\end{figure}

\subsubsection{Controlled nonlinearity study}
\label{app:dataset_nonlinearity}
We modify Chain and Collider SCMs by introducing $\alpha$ to control the nonlinearity of the SCMs:

\textbf{$\text{Chain}_\alpha$}:
\begin{align*}
    f_1(u_1) &= u_1 \\
    f_2(x_1, u_2) &= 0.5x_1 + 0.5u_2  \\
    f_3(x_2, u_3) &= h_\alpha(x_2)+0.5u_3,
\end{align*}

\textbf{$\text{Collider}_\alpha$}:
\begin{align*}
    f_1(u_1) &= u_1 \\
    f_2(u_2) &= u_2  \\
    f_3(x_1, x_2, u_3) &= 0.5h_\alpha(x_1)+0.5h_\alpha(x_2)+0.5u_3,
\end{align*}

where
\begin{equation*}
    h_\alpha(x)
    =
    \frac{
        (1-\alpha)x
        +
        \alpha (x^2-1)/\sqrt{2}
    }{
        \sqrt{(1-\alpha)^2+\alpha^2}
    },
    \qquad \alpha\in[0,1].
\end{equation*}
At $\alpha=0$, $h_\alpha(x)=x$ and the SCMs are exactly linear, while at $\alpha=1$, $h_\alpha(x)=(x^2-1)/\sqrt{2}$ and the relationship is purely quadratic. We evaluate $\alpha\in\{0,0.25,0.5,0.75,1\}$. The results of $\text{Chain}_\alpha$ and $\text{Collider}_\alpha$ are in Appendix~\ref{app:full_results_nonlinearity}.

\subsubsection{Larger graph}
\label{app:dataset_large_graph}
We test scalability on $\text{Chain}_d$, an extension of the Chain graph to $d$ nodes.
The SCM has the following form:

\textbf{$\text{Chain}_d$}:
\begin{align*}
f_1(u_1) & = u_1 \\
f_{i+1}(x_i,u_i) &= g_i(x_i)+u_i \quad \text{for} \quad i \in \{i,...,d-1\}
\end{align*}

where we design $g_i$ as a random spline function, following \cite{geffner2024deep}. We place 9 equally spaced knots on $[-3,3]$ (8 intervals), sample the knot values from $\mathcal{N}(0,1)$, and fit a natural cubic spline $g_{i}$ using Scipy \citep{2020SciPy-NMeth}. Input $x_i$ is clipped to $[-3,3]$ to avoid extrapolation and 
\begin{equation*}
    g_i(x_i) = g_{i}(\text{clip}(x_i)).
\end{equation*}

\subsection{Missingness mechanisms}
\label{app:missingness}
All the missing mask is generated based on complete data without any missing entries.

\subsubsection{Main experiment (Full-pattern positivity)}
Root variables are always observed. Therefore, missingness is applied only to each non-root variable $x_i$ with a target rate $r \in \{30\%, 60\%\, 90\%\}$ for MCAR, MAR, and $r \in \{30\%, 60\%\}$ for Self-masking MNAR.

\paragraph{MCAR}
Each entry of $x_i$ is missing independently with probability $p_{\text{miss}}(x_i) = r$.

\paragraph{MAR}
The missingness of $x_i$ depends on a conditioning set $\text{cond}(i)$, which contains the non-root parents of $x_i$. If $x_i$ has none, $\text{cond}(i)$ contains all other non-root variables. Therefore, $\text{cond}(i)$ is always observed, which satisfies the definition of MAR. We set
\begin{equation*}
    p_{\text{miss}}(x_i) = \sigma\Big(\sum_{j \in \text{cond}(i)} w_j x_j + b_i\Big),
\end{equation*}
where $\sigma$ is the sigmoid function and $w_j \sim \mathcal{U}[0.1, 1.1]$. We find $b_i$ by bisection so that the mean of $p_{\text{miss}}(x_i)$ over the data is within $10^{-3}$ of $r$. Therefore, the missing probability of each non-root node is independent.

\paragraph{Self-masking MNAR}
The missingness of $x_i$ depends on its own value:
\begin{equation}
    p_{\text{miss}}(x_i) = \sigma(w_i x_i + b_i),
\end{equation}
with $w_i \sim \mathcal{U}[0.1, 1.1]$ and $b_i$ found by bisection as above.

\subsubsection{Pattern-based MAR}
\label{app:missingness_mar_variant}
For the experiment in Section~\ref{sec:weaker-recovery}, we use Chain and Fork with two observation patterns, $o_1$ and $o_2$. Each sample is observed under exactly one pattern. Let $x_{\text{cut}}$ be the node shared by both patterns, and let $\bar{x}_{\text{cut}} = (x_{\text{cut}} - \mu)/s$, where $\mu$ and $s$ are the mean and standard deviation of $x_{\text{cut}}$ under $p^*$.

\begin{itemize}
    \item \textbf{Chain} ($x_1 \to x_2 \to x_3$): $o_1 = \{x_1, x_2\}$, $o_2 = \{x_2, x_3\}$, and $x_{\text{cut}} = x_2$.
    \item \textbf{Fork} ($x_1, x_2 \to x_3 \to x_4$): $o_1 = \{x_1, x_2, x_3\}$, $o_2 = \{x_3, x_4\}$, and $x_{\text{cut}} = x_3$.
\end{itemize}

\paragraph{Coverage holds.}
We set $p(o_1 \mid \mathbf{x}) = \sigma(\bar{x}_{\text{cut}})$ and $p(o_2 \mid \mathbf{x}) = 1 - \sigma(\bar{x}_{\text{cut}})$. Since $\sigma$ takes values in $(0,1)$, both patterns have positive probability at every value of $x_{\text{cut}}$. In Chain, the causal families $\{x_1\}$ and $\{x_1, x_2\}$ are contained in $o_1$, and $\{x_2, x_3\}$ equals $o_2$. In Fork, the families $\{x_1\}$, $\{x_2\}$ and $\{x_1, x_2, x_3\}$ are contained in $o_1$, and $\{x_3, x_4\}$ equals $o_2$. Causal-family coverage therefore holds, and $p^*$ is identifiable.

\paragraph{Coverage violated.}
Let $\tau$ be the median of $x_{\text{cut}}$. We set
\begin{equation*}
    p(o_1 \mid \mathbf{x}) = \sigma(\bar{x}_{\text{cut}})\,\mathbf{1}\{x_{\text{cut}} \ge \tau\},
    \qquad
    p(o_2 \mid \mathbf{x}) = 1 - p(o_1 \mid \mathbf{x}).
\end{equation*}
Pattern $o_1$ is never observed on $Z_{o_1} = \{\mathbf{x} : x_{\text{cut}} < \tau\}$, and $p^*(Z_{o_1}) = 1/2$. The families contained only in $o_1$ lose coverage on this region.

\subsubsection{Computational scalability}
For $\text{Chain}_d$, we use an alternating mask over consecutive nodes. Each sample follows one of two patterns, chosen with probability 0.5: [missing, observed, missing, observed, \dots] or [observed, missing, observed, missing, \dots].

\subsection{Imputation Baselines}
\label{app:baselines}


\paragraph{Mean imputation} replaces each missing value with the empirical mean of its variable over the observed entries.

\paragraph{MICE} \citep{vanbuuren2011mice} imputes each incomplete variable with a regression model conditioned on the other variables. It cycles through the variables for several rounds and draws each missing value from the model's predictive distribution instead of using the point prediction. We use the implementation in HyperImpute \citep{Jarrett2022HyperImpute}, where each per-variable model is a Bayesian ridge regression, and average the imputations over several runs.

\paragraph{MissForest} \citep{stekhoven2012missforest} also cycles through the variables like MICE, but uses random forests. It first fills missing values with the column means. Then, for each variable with missing values, it trains a random forest on the rows where that variable is observed, using the other variables as inputs, and predicts the missing entries. It repeats until the change between successive imputed datasets stops decreasing. Unlike MICE, it uses the forest's prediction directly with no random draw, so it returns a single imputation.

\paragraph{DiffPuter} \citep{zhang2025diffputer} follows the EM algorithm and alternates between an M-step and an E-step. In the M-step, it trains a diffusion model on the current completed data to learn $p(x)$. In the E-step, it samples $x_{\text{mis}}$ given $x_{\text{obs}}$ from the learned model: the observed entries are replaced by noised versions of their values, and the reverse process is run on the missing entries. The average of $N = 10$ samples becomes the new imputation.

\paragraph{KPI} \citep{wang2025kpi} imputes iteratively with kernel ridge regression using a weighted mix of Gaussian kernels with different bandwidths. In each step, it picks a random feature as the target and uses the others as inputs. It takes two batches of rows, fits the kernel ridge regression on the first batch in closed form, and uses it to predict the target on the second batch. The prediction error is then used to update, by gradient descent, both the imputed values (treated as learnable parameters) and the kernel mixing weights.

\paragraph{MIRACLE} \citep{kyono2021miracle} refines an initial imputation iteratively. Given the current completed data, a network with two sub-networks outputs a refined imputation ($\mathbb{R}^d \to \mathbb{R}^d$) and the missingness probabilities ($\mathbb{R}^d \to [0,1]^{d_S}$), where $d$ is the number of variables and $d_S$ is the number of variables with at least one missing value. The m-graph over the variables and the missingness indicators is encoded in the input-layer weights. Besides reconstructing the observed features and predicting the missingness indicators, the network is trained with an acyclicity penalty on this graph and a moment regularizer. The regularizer matches an inverse-propensity-weighted estimate of each feature's mean, computed with the predicted missingness probabilities, to the mean of the imputation network's predictions.

\subsection{Implementation}
\label{app:implementation}

\subsubsection{Imputation baselines}

\paragraph{Mean, MICE, and MissForest.}
We use the corresponding plugins of the \texttt{hyperimpute} library \citep{Jarrett2022HyperImpute}. Mean imputation uses \texttt{SimpleImputer} from \texttt{scikit-learn} \citep{scikit-learn}. MICE uses \texttt{IterativeImputer} from \texttt{scikit-learn} with a \texttt{BayesianRidge} estimator and posterior sampling ($\texttt{max\_iter}=100$, one imputation, mean initialization, ascending imputation order). MissForest uses random forest regressors ($\texttt{n\_estimators}=10$, $\texttt{max\_iter}=100$).

\paragraph{DiffPuter.}
We use the authors' code at \url{https://github.com/hengruizhang98/DiffPuter} and follow their EM training loop and configuration: hidden width $1024$, $50$ diffusion steps, $20$ RePaint resampling steps, batch size $4096$, learning rate $10^{-4}$, and up to $10{,}001$ epochs per M-step with early stopping (patience $500$) and a plateau learning-rate schedule. However, we use $5$ EM rounds and $10$ imputation trials instead of $10$ and $20$, to reduce computational cost. \citet{zhang2025diffputer} report in their ablation that $5$ rounds and $10$ trials are sufficient.

\paragraph{KPI.}
We use the authors' code at \url{https://github.com/FMLYD/kpi} with a configuration based on their default settings and running scripts: $500$ epochs, $2$ batch pairs per step, batch size $128$, early-stopping patience $10$, ridge $\lambda=1$, kernel bandwidths $\{0.01, 0.1, 5, 10, 100, 1000, 10^{4}\}$, and learning rate $10^{-2}$.

\paragraph{O-MIRACLE.}
We use the MIRACLE authors' code at \url{https://github.com/vanderschaarlab/MIRACLE}. MIRACLE normally learns a causal graph over an augmented node set (the features plus one missingness indicator per partially observed feature), using a causal regularizer. Instead, we provide the true structure through its per-node input mask and call it \textbf{O-MIRACLE} to make the difference clear. The parents of each feature come from the SCM adjacency, and the parents of each indicator come from the conditioning set of the true missingness mechanism (Appendix~\ref{app:missingness}). We drop the causal regularizer and leave all other components unchanged. This gives O-MIRACLE information of both the causal graph and missing mechanisms. The remaining settings follow the authors' example script: hidden width $32$, $\texttt{reg\_lambda}=\texttt{reg\_m}=0.1$, window $10$, $400$ refinement steps, learning rate $10^{-4}$, and batch size $32$.

\subsubsection{Causal normalizing flows}
We use the causal normalizing flow implementation from the \texttt{causal-flows} library \citep{javaloy2023causal}, available at \url{https://github.com/adrianjav/causal-flows}. The flow has a single masked autoregressive layer with an affine transform per dimension. A hyper-network maps $x$ to a shift $\mu_i$ and a log-scale $s_i$ for every dimension $i$, and the flow computes
\begin{equation*}
    u_i = \mu_i + \exp(s_i)\, x_i .
\end{equation*}
The hyper-network is a masked MLP with input size $d$, two hidden layers of $64$ units with ReLU activations, and output size $2d$. Its masks follow the graph adjacency, so $\mu_i$ and $s_i$ depend only on the parents of $x_i$. The base distribution is a standard normal. The number of autoregressive passes is set to the diameter of the causal graph, which is the smallest number consistent with its depth. All of these are the library defaults.
We train with AdamW (learning rate $10^{-3}$, weight decay $0$), batch size $4096$, and $1000$ epochs. The learning rate follows \texttt{ReduceLROnPlateau} (mode \texttt{min}, factor $0.95$, patience $60$). Both the optimizer and the schedule follow the authors' implementation at \url{https://github.com/psanch21/causal-flows}. We use $N=512$ Monte Carlo samples in all our experiments of MissCNF, unless stated otherwise. Each run uses a single NVIDIA RTX 4090 GPU (24\,GB).

\subsection{Metrics}
\label{app:metrics}
We use the evaluation metrics of \citet{javaloy2023causal}, and add a conditional KL distance for the experiment in Section~\ref{sec:tier2-tier3}. All metrics are computed against the true SCM, and the test data are complete with $n$ samples.

\subsubsection{Symmetric Kullback--Leibler divergence}
We report the symmetric \textbf{KL} divergence (KL)
\begin{equation*}
    D_J(p^\ast, p_\theta) = \mathrm{KL}(p^\ast \parallel p_\theta) + \mathrm{KL}(p_\theta \parallel p^\ast).
\end{equation*}
Both $p^\ast$ and $p_\theta$ have closed-form densities, so we estimate each term by Monte Carlo without importance weighting:
\begin{align*}
    \mathrm{KL}(p^\ast \parallel p_\theta) &\approx \frac{1}{n}\sum_{i=1}^n \big[\log p^\ast(x_i) - \log p_\theta(x_i)\big], && x_i \sim p^\ast \ \text{(test set)}, \\
    \mathrm{KL}(p_\theta \parallel p^\ast) &\approx \frac{1}{n}\sum_{i=1}^n \big[\log p_\theta(\tilde x_i) - \log p^\ast(\tilde x_i)\big], && \tilde x_i \sim p_\theta \ \text{(learned flow)}.
\end{align*}
Each of the four log-density terms is clamped from below at $-10^4$ to avoid infinite or NaN values.

\subsubsection{Average treatment effect (ATE)}
Each SCM has a set of intervention variables:
\begin{itemize}
    \item Chain: $\{x_1, x_2\}$
    \item Triangle: $\{x_1, x_2\}$
    \item Collider: $\{x_2\}$
    \item Fork: $\{x_2, x_3\}$
\end{itemize}
For each intervention variable $x_i$, we draw $5000$ fresh samples from the true SCM and compute the $25$th, $50$th, and $75$th percentiles $p_{25}, p_{50}, p_{75}$ of $x_i$. This gives three pairs $(a,b)$: $(p_{25}, p_{50})$, $(p_{25}, p_{75})$, and $(p_{50}, p_{75})$. For each pair, we draw $10{,}000$ samples under $\mathrm{do}(x_i{=}a)$ and $10{,}000$ under $\mathrm{do}(x_i{=}b)$, and estimate the ATE by Monte Carlo:
\begin{equation*}
    \widehat{\mathrm{ATE}}_{p^\ast} = \mathbb{E}\big[X \mid \mathrm{do}(x_i{=}b)\big] - \mathbb{E}\big[X \mid \mathrm{do}(x_i{=}a)\big].
\end{equation*}
$\widehat{\mathrm{ATE}}_{p_\theta}$ is computed in the same way from the learned flow. Both are vectors in $\mathbb{R}^d$. The error for one pair is
\begin{equation*}
    \textbf{$\text{RMSE}_{\text{ATE}}$}(i,a,b) = \big\lVert \widehat{\mathrm{ATE}}_{p^\ast} - \widehat{\mathrm{ATE}}_{p_\theta} \big\rVert_2,
\end{equation*}
and we report its average over all variables and pairs.

\subsubsection{Counterfactual (CF) prediction}
For each intervention variable $x_i$ and each value $a \in \{p_{25}, p_{50}, p_{75}\}$ of its percentiles, we compute the counterfactual of every test sample under $\mathrm{do}(x_i{=}a)$, once with the true SCM and once with the flow and $X^{\text{cf}}_{p_\theta}$. The error is
\begin{equation*}
    \textbf{$\text{RMSE}_{\text{CF}}$}(i,a) = \frac{1}{n}\sum_{k=1}^{n} \big\lVert \mathbf{x}^{\text{cf}}_{p^\ast,k} - \mathbf{x}^{\text{cf}}_{p_\theta,k} \big\rVert_2,
\end{equation*}
where $\mathbf{x}^{\text{cf}}_{\cdot,k}$ is the counterfactual of the $k$-th of $n=2500$ test samples. We report the average over all variables and values of $a$.
\newpage
\subsubsection{Local KL error}
\label{app:local-kl}

To locate where along $x_{\text{cut}}$ the model error lies, we partition the range of $x_{\text{cut}}$ into $J$ bins $B_1, \dots, B_J$ and write $\ell(\mathbf{x}) = \log p^*(\mathbf{x}) - \log p_\theta(\mathbf{x})$. The local error in bin $B_j$ is the mean of $\ell$ over samples that fall in the bin,

\begin{equation*}
    D_j = \mathbb{E}_{p^*}\big[\ell(\mathbf{X}) \mid x_{\text{cut}} \in B_j\big].
\end{equation*}

Unlike a contribution to the global KL, $D_j$ is not weighted by the probability of the bin, so bins with little mass, such as the tails of $x_{\text{cut}}$, are not shrunk toward zero. Let $P_j$ and $Q_j$ be the probabilities of $B_j$ under $p^*$ and $p_\theta$, and let $p^*_j$ and $p_{\theta,j}$ be the two distributions conditioned on $B_j$. Since $p^*_j(\mathbf{x}) = p^*(\mathbf{x})/P_j$ and $p_{\theta,j}(\mathbf{x}) = p_\theta(\mathbf{x})/Q_j$ on $B_j$,
\begin{equation*}
    D_j = \mathrm{KL}\big(p^*_j \,\|\, p_{\theta,j}\big) + \log \frac{P_j}{Q_j}.
\end{equation*}
The first term measures how well the model fits the shape of the distribution inside the bin, and the second whether it assigns the bin the right total mass. $D_j$ can be negative, but only when $p_\theta$ assigns more mass to $B_j$ than $p^*$ does. A correct model gives $D_j = 0$ in every bin, so values below zero do not indicate a better fit. Weighting by bin probability recovers the global KL, $\sum_{j} P_j D_j = \mathrm{KL}(p^* \,\|\, p_\theta)$, when the bins cover the whole range.

We use $J = 16$ bins of equal width between the 1\% and 99\% quantiles of $x_{\text{cut}}$ in the test set. We estimate $D_j$ by $\hat D_j = \frac{1}{m_j}\sum_{\mathbf{x} \in B_j} \ell(\mathbf{x})$, where the sum is over the $m_j$ complete test samples with $x_{\text{cut}} \in B_j$. Since $m_j$ is smaller near the edges of the range, $\hat D_j$ is noisier there.

\newpage
\section{Full results}
\label{app:full_results}

\subsection{Main result: Performance under standard missingness (Section~\ref{sec:exp1})}
\label{app:full_results_exp1}

\begin{table}[H]
\centering
\scriptsize
\setlength{\tabcolsep}{2.5pt}
\resizebox{\textwidth}{!}{%
\begin{tabular}{lcccccccc}
\toprule
Method & Chain$_{\text{LIN}}$ & Fork$_{\text{LIN}}$ & Collider$_{\text{LIN}}$ & Triangle$_{\text{LIN}}$ & Chain$_{\text{NLIN}}$ & Fork$_{\text{NLIN}}$ & Collider$_{\text{NLIN}}$ & Triangle$_{\text{NLIN}}$ \\
\midrule
\textit{Full data (reference)} & \textit{0.005{\tiny$\pm$0.003}} & \textit{0.010{\tiny$\pm$0.004}} & \textit{0.003{\tiny$\pm$0.002}} & \textit{0.007{\tiny$\pm$0.005}} & \textit{0.002{\tiny$\pm$0.002}} & \textit{0.013{\tiny$\pm$0.003}} & \textit{0.004{\tiny$\pm$0.001}} & \textit{0.015{\tiny$\pm$0.007}} \\
\midrule
\multicolumn{9}{l}{\textit{MCAR, 30\%}} \\
Listwise deletion & 0.011{\tiny$\pm$0.005} & 0.018{\tiny$\pm$0.006} & \underline{0.003}{\tiny$\pm$0.003} & 0.013{\tiny$\pm$0.007} & \underline{0.005}{\tiny$\pm$0.004} & \underline{0.029}{\tiny$\pm$0.004} & \textbf{0.004}{\tiny$\pm$0.003} & \underline{0.029}{\tiny$\pm$0.014} \\
Mean & 13.642{\tiny$\pm$0.373} & 5.750{\tiny$\pm$0.176} & 0.158{\tiny$\pm$0.011} & 19.878{\tiny$\pm$0.948} & 0.963{\tiny$\pm$0.041} & 4.051{\tiny$\pm$0.132} & 0.078{\tiny$\pm$0.013} & 3.793{\tiny$\pm$0.071} \\
MissForest & 0.688{\tiny$\pm$0.075} & 0.582{\tiny$\pm$0.091} & 0.081{\tiny$\pm$0.008} & 0.266{\tiny$\pm$0.032} & 0.166{\tiny$\pm$0.010} & 1.830{\tiny$\pm$0.127} & 0.075{\tiny$\pm$0.013} & 0.845{\tiny$\pm$0.062} \\
MICE & \textbf{0.006}{\tiny$\pm$0.005} & \textbf{0.009}{\tiny$\pm$0.002} & \textbf{0.003}{\tiny$\pm$0.002} & \underline{0.010}{\tiny$\pm$0.007} & 0.177{\tiny$\pm$0.018} & 9.056{\tiny$\pm$0.181} & 0.016{\tiny$\pm$0.002} & 8.357{\tiny$\pm$0.287} \\
KPI & 0.070{\tiny$\pm$0.006} & 0.365{\tiny$\pm$0.016} & 0.059{\tiny$\pm$0.008} & 0.085{\tiny$\pm$0.008} & 0.066{\tiny$\pm$0.008} & 0.520{\tiny$\pm$0.026} & 0.057{\tiny$\pm$0.008} & 2.765{\tiny$\pm$0.073} \\
DiffPuter & 0.119{\tiny$\pm$0.009} & 0.218{\tiny$\pm$0.036} & 0.064{\tiny$\pm$0.011} & 0.156{\tiny$\pm$0.010} & 0.141{\tiny$\pm$0.012} & 0.156{\tiny$\pm$0.029} & 0.061{\tiny$\pm$0.012} & 0.987{\tiny$\pm$0.080} \\
O-MIRACLE & 0.138{\tiny$\pm$0.008} & 0.155{\tiny$\pm$0.014} & 0.076{\tiny$\pm$0.013} & 0.118{\tiny$\pm$0.006} & 0.096{\tiny$\pm$0.004} & 0.241{\tiny$\pm$0.022} & 0.077{\tiny$\pm$0.013} & 0.820{\tiny$\pm$0.090} \\
\textbf{MissCNF (ours)} & \underline{0.008}{\tiny$\pm$0.004} & \underline{0.012}{\tiny$\pm$0.005} & 0.003{\tiny$\pm$0.002} & \textbf{0.009}{\tiny$\pm$0.005} & \textbf{0.004}{\tiny$\pm$0.001} & \textbf{0.016}{\tiny$\pm$0.006} & \underline{0.005}{\tiny$\pm$0.001} & \textbf{0.019}{\tiny$\pm$0.006} \\
\addlinespace
\multicolumn{9}{l}{\textit{MCAR, 60\%}} \\
Listwise deletion & 0.050{\tiny$\pm$0.042} & 0.065{\tiny$\pm$0.021} & \underline{0.008}{\tiny$\pm$0.004} & 0.037{\tiny$\pm$0.014} & \underline{0.037}{\tiny$\pm$0.033} & \underline{0.092}{\tiny$\pm$0.012} & \underline{0.009}{\tiny$\pm$0.003} & \underline{0.098}{\tiny$\pm$0.009} \\
Mean & 28.783{\tiny$\pm$0.474} & 8.912{\tiny$\pm$0.060} & 0.689{\tiny$\pm$0.056} & 37.877{\tiny$\pm$2.459} & 2.696{\tiny$\pm$0.069} & 9.027{\tiny$\pm$0.224} & 0.490{\tiny$\pm$0.044} & 12.014{\tiny$\pm$3.386} \\
MissForest & 2.116{\tiny$\pm$0.197} & 1.127{\tiny$\pm$0.155} & 0.505{\tiny$\pm$0.041} & 1.057{\tiny$\pm$0.105} & 1.204{\tiny$\pm$0.086} & 4.383{\tiny$\pm$0.167} & 0.473{\tiny$\pm$0.047} & 2.586{\tiny$\pm$0.301} \\
MICE & \textbf{0.004}{\tiny$\pm$0.003} & \textbf{0.010}{\tiny$\pm$0.004} & \textbf{0.004}{\tiny$\pm$0.002} & \textbf{0.007}{\tiny$\pm$0.005} & 0.488{\tiny$\pm$0.027} & 17.612{\tiny$\pm$0.368} & 0.048{\tiny$\pm$0.004} & 15.482{\tiny$\pm$0.472} \\
KPI & 0.345{\tiny$\pm$0.009} & 0.794{\tiny$\pm$0.029} & 0.342{\tiny$\pm$0.031} & 0.373{\tiny$\pm$0.012} & 0.228{\tiny$\pm$0.017} & 1.129{\tiny$\pm$0.034} & 0.327{\tiny$\pm$0.045} & 4.388{\tiny$\pm$0.185} \\
DiffPuter & 1.024{\tiny$\pm$0.082} & 1.270{\tiny$\pm$0.088} & 0.410{\tiny$\pm$0.063} & 1.097{\tiny$\pm$0.132} & 0.966{\tiny$\pm$0.135} & 1.333{\tiny$\pm$0.227} & 0.410{\tiny$\pm$0.059} & 2.784{\tiny$\pm$0.318} \\
O-MIRACLE & 0.908{\tiny$\pm$0.034} & 0.494{\tiny$\pm$0.020} & 0.485{\tiny$\pm$0.056} & 0.781{\tiny$\pm$0.027} & 0.666{\tiny$\pm$0.019} & 0.729{\tiny$\pm$0.022} & 0.478{\tiny$\pm$0.047} & 1.979{\tiny$\pm$0.121} \\
\textbf{MissCNF (ours)} & \underline{0.009}{\tiny$\pm$0.006} & \underline{0.017}{\tiny$\pm$0.006} & 0.008{\tiny$\pm$0.004} & \underline{0.016}{\tiny$\pm$0.006} & \textbf{0.006}{\tiny$\pm$0.005} & \textbf{0.029}{\tiny$\pm$0.006} & \textbf{0.007}{\tiny$\pm$0.001} & \textbf{0.032}{\tiny$\pm$0.003} \\
\addlinespace
\multicolumn{9}{l}{\textit{MCAR, 90\%}} \\
Listwise deletion & 1.242{\tiny$\pm$1.693} & 0.548{\tiny$\pm$0.145} & 0.022{\tiny$\pm$0.004} & 0.828{\tiny$\pm$0.614} & \underline{0.292}{\tiny$\pm$0.056} & \underline{1.576}{\tiny$\pm$0.998} & \underline{0.034}{\tiny$\pm$0.006} & \underline{3.262}{\tiny$\pm$1.489} \\
Mean & 52.425{\tiny$\pm$1.445} & 14.273{\tiny$\pm$0.372} & 4.619{\tiny$\pm$0.393} & 55.761{\tiny$\pm$1.344} & 9.648{\tiny$\pm$1.007} & 23.242{\tiny$\pm$2.710} & 4.156{\tiny$\pm$0.278} & 20.136{\tiny$\pm$1.173} \\
MissForest & 13.484{\tiny$\pm$1.503} & 4.037{\tiny$\pm$0.367} & 4.592{\tiny$\pm$0.274} & 6.727{\tiny$\pm$0.704} & 14.147{\tiny$\pm$3.702} & 9.635{\tiny$\pm$0.375} & 4.283{\tiny$\pm$0.383} & 19.151{\tiny$\pm$7.998} \\
MICE & \textbf{0.005}{\tiny$\pm$0.003} & \textbf{0.025}{\tiny$\pm$0.012} & \textbf{0.006}{\tiny$\pm$0.005} & \textbf{0.016}{\tiny$\pm$0.011} & 0.956{\tiny$\pm$0.125} & 24.716{\tiny$\pm$0.285} & 0.101{\tiny$\pm$0.002} & 21.659{\tiny$\pm$0.612} \\
KPI & 1.331{\tiny$\pm$0.067} & 1.976{\tiny$\pm$0.134} & 2.389{\tiny$\pm$0.143} & 1.408{\tiny$\pm$0.040} & 0.872{\tiny$\pm$0.067} & 2.814{\tiny$\pm$0.353} & 2.622{\tiny$\pm$0.222} & 5.428{\tiny$\pm$0.266} \\
DiffPuter & 26.614{\tiny$\pm$3.665} & 6.734{\tiny$\pm$0.501} & 3.751{\tiny$\pm$0.213} & 87.161{\tiny$\pm$61.871} & 6.873{\tiny$\pm$0.802} & 16.467{\tiny$\pm$1.562} & 3.674{\tiny$\pm$0.325} & 18.378{\tiny$\pm$7.317} \\
O-MIRACLE & 8.285{\tiny$\pm$0.371} & 4.581{\tiny$\pm$0.111} & 4.245{\tiny$\pm$0.426} & 7.567{\tiny$\pm$0.480} & 6.569{\tiny$\pm$0.088} & 6.460{\tiny$\pm$0.553} & 4.311{\tiny$\pm$0.419} & 8.825{\tiny$\pm$0.811} \\
\textbf{MissCNF (ours)} & \underline{0.042}{\tiny$\pm$0.020} & \underline{0.103}{\tiny$\pm$0.084} & \underline{0.019}{\tiny$\pm$0.009} & \underline{0.162}{\tiny$\pm$0.034} & \textbf{0.033}{\tiny$\pm$0.003} & \textbf{0.210}{\tiny$\pm$0.181} & \textbf{0.015}{\tiny$\pm$0.001} & \textbf{0.221}{\tiny$\pm$0.034} \\
\addlinespace
\multicolumn{9}{l}{\textit{MAR, 30\%}} \\
Listwise deletion & 0.122{\tiny$\pm$0.021} & 0.229{\tiny$\pm$0.095} & 0.065{\tiny$\pm$0.032} & 0.126{\tiny$\pm$0.026} & 0.124{\tiny$\pm$0.024} & \underline{0.219}{\tiny$\pm$0.087} & 0.070{\tiny$\pm$0.038} & \underline{0.144}{\tiny$\pm$0.024} \\
Mean & 19.358{\tiny$\pm$4.200} & 5.570{\tiny$\pm$0.396} & 0.303{\tiny$\pm$0.095} & 26.278{\tiny$\pm$4.850} & 1.823{\tiny$\pm$0.433} & 3.769{\tiny$\pm$0.187} & 0.178{\tiny$\pm$0.082} & 4.448{\tiny$\pm$0.340} \\
MissForest & 1.506{\tiny$\pm$0.729} & 0.752{\tiny$\pm$0.105} & 0.220{\tiny$\pm$0.101} & 0.819{\tiny$\pm$0.472} & 0.397{\tiny$\pm$0.182} & 2.791{\tiny$\pm$0.328} & 0.170{\tiny$\pm$0.080} & 1.447{\tiny$\pm$0.327} \\
MICE & \textbf{0.006}{\tiny$\pm$0.004} & \textbf{0.009}{\tiny$\pm$0.004} & \textbf{0.002}{\tiny$\pm$0.002} & \textbf{0.007}{\tiny$\pm$0.004} & 0.306{\tiny$\pm$0.121} & 10.785{\tiny$\pm$0.873} & \underline{0.044}{\tiny$\pm$0.011} & 10.417{\tiny$\pm$1.726} \\
KPI & 0.114{\tiny$\pm$0.029} & 0.391{\tiny$\pm$0.037} & 0.139{\tiny$\pm$0.055} & 0.123{\tiny$\pm$0.023} & \underline{0.104}{\tiny$\pm$0.036} & 0.653{\tiny$\pm$0.062} & 0.118{\tiny$\pm$0.041} & 2.938{\tiny$\pm$0.115} \\
DiffPuter & 1.020{\tiny$\pm$0.966} & 0.368{\tiny$\pm$0.140} & 0.168{\tiny$\pm$0.077} & 1.073{\tiny$\pm$0.923} & 0.401{\tiny$\pm$0.241} & 0.474{\tiny$\pm$0.207} & 0.147{\tiny$\pm$0.071} & 1.371{\tiny$\pm$0.205} \\
O-MIRACLE & 0.194{\tiny$\pm$0.020} & 0.211{\tiny$\pm$0.041} & 0.189{\tiny$\pm$0.086} & 0.166{\tiny$\pm$0.019} & 0.196{\tiny$\pm$0.042} & 0.318{\tiny$\pm$0.051} & 0.150{\tiny$\pm$0.059} & 1.027{\tiny$\pm$0.155} \\
\textbf{MissCNF (ours)} & \underline{0.014}{\tiny$\pm$0.008} & \underline{0.015}{\tiny$\pm$0.008} & \underline{0.003}{\tiny$\pm$0.002} & \underline{0.014}{\tiny$\pm$0.007} & \textbf{0.016}{\tiny$\pm$0.021} & \textbf{0.025}{\tiny$\pm$0.010} & \textbf{0.007}{\tiny$\pm$0.002} & \textbf{0.028}{\tiny$\pm$0.010} \\
\addlinespace
\multicolumn{9}{l}{\textit{MAR, 60\%}} \\
Listwise deletion & 0.381{\tiny$\pm$0.065} & 0.739{\tiny$\pm$0.298} & 0.249{\tiny$\pm$0.117} & 0.393{\tiny$\pm$0.072} & \underline{0.435}{\tiny$\pm$0.127} & \underline{0.793}{\tiny$\pm$0.293} & 0.254{\tiny$\pm$0.140} & \underline{0.532}{\tiny$\pm$0.166} \\
Mean & 40.447{\tiny$\pm$7.807} & 9.197{\tiny$\pm$0.272} & 1.863{\tiny$\pm$1.027} & 48.858{\tiny$\pm$7.987} & 4.240{\tiny$\pm$0.891} & 9.013{\tiny$\pm$0.570} & 6.925{\tiny$\pm$13.396} & 9.477{\tiny$\pm$0.569} \\
MissForest & 3.883{\tiny$\pm$1.328} & 1.861{\tiny$\pm$0.509} & 1.641{\tiny$\pm$0.952} & 2.997{\tiny$\pm$1.325} & 2.343{\tiny$\pm$0.874} & 5.227{\tiny$\pm$0.566} & 1.592{\tiny$\pm$1.571} & 4.450{\tiny$\pm$0.820} \\
MICE & \textbf{0.006}{\tiny$\pm$0.003} & \textbf{0.009}{\tiny$\pm$0.002} & \textbf{0.004}{\tiny$\pm$0.004} & \textbf{0.009}{\tiny$\pm$0.006} & 0.751{\tiny$\pm$0.256} & 21.217{\tiny$\pm$1.883} & \underline{0.101}{\tiny$\pm$0.026} & 19.927{\tiny$\pm$2.473} \\
KPI & 0.604{\tiny$\pm$0.270} & 1.075{\tiny$\pm$0.235} & 0.897{\tiny$\pm$0.413} & 0.655{\tiny$\pm$0.285} & 0.450{\tiny$\pm$0.201} & 1.642{\tiny$\pm$0.435} & 1.090{\tiny$\pm$0.833} & 4.855{\tiny$\pm$0.529} \\
DiffPuter & 11.181{\tiny$\pm$6.655} & 2.458{\tiny$\pm$0.906} & 1.296{\tiny$\pm$0.687} & 12.377{\tiny$\pm$9.267} & 2.212{\tiny$\pm$1.015} & 4.268{\tiny$\pm$2.237} & 1.738{\tiny$\pm$2.116} & 4.350{\tiny$\pm$1.234} \\
O-MIRACLE & 3.690{\tiny$\pm$4.407} & 1.326{\tiny$\pm$0.772} & 1.263{\tiny$\pm$0.671} & 3.578{\tiny$\pm$4.510} & 1.613{\tiny$\pm$0.954} & 1.435{\tiny$\pm$0.726} & 1.198{\tiny$\pm$0.688} & 2.878{\tiny$\pm$0.615} \\
\textbf{MissCNF (ours)} & \underline{0.029}{\tiny$\pm$0.017} & \underline{0.033}{\tiny$\pm$0.013} & \underline{0.008}{\tiny$\pm$0.002} & \underline{0.034}{\tiny$\pm$0.016} & \textbf{0.025}{\tiny$\pm$0.022} & \textbf{0.032}{\tiny$\pm$0.011} & \textbf{0.011}{\tiny$\pm$0.001} & \textbf{0.052}{\tiny$\pm$0.032} \\
\addlinespace
\multicolumn{9}{l}{\textit{MAR, 90\%}} \\
Listwise deletion & 14.365{\tiny$\pm$10.153} & 54.749{\tiny$\pm$105.659} & 1.300{\tiny$\pm$1.779} & 7.028{\tiny$\pm$4.523} & 3.622{\tiny$\pm$2.310} & 7.401{\tiny$\pm$2.831} & 0.772{\tiny$\pm$0.678} & 22.316{\tiny$\pm$27.533} \\
Mean & 324.773{\tiny$\pm$337.569} & 37.922{\tiny$\pm$28.441} & 33.047{\tiny$\pm$37.771} & 209.427{\tiny$\pm$259.684} & 125.221{\tiny$\pm$194.035} & 42.405{\tiny$\pm$19.754} & 28.089{\tiny$\pm$31.147} & 70.806{\tiny$\pm$94.904} \\
MissForest & 18.373{\tiny$\pm$3.435} & 14.614{\tiny$\pm$9.701} & 15.488{\tiny$\pm$11.749} & 18.650{\tiny$\pm$9.691} & 18.934{\tiny$\pm$1.970} & 14.088{\tiny$\pm$4.933} & 24.085{\tiny$\pm$21.509} & 69.188{\tiny$\pm$27.696} \\
MICE & \textbf{0.009}{\tiny$\pm$0.009} & \textbf{0.022}{\tiny$\pm$0.010} & \textbf{0.007}{\tiny$\pm$0.005} & \textbf{0.025}{\tiny$\pm$0.023} & \underline{1.272}{\tiny$\pm$0.352} & 34.704{\tiny$\pm$5.341} & \underline{0.210}{\tiny$\pm$0.064} & 27.042{\tiny$\pm$4.654} \\
KPI & 2.153{\tiny$\pm$0.658} & 4.550{\tiny$\pm$2.090} & 7.601{\tiny$\pm$5.810} & 2.262{\tiny$\pm$0.652} & 1.463{\tiny$\pm$0.377} & \underline{5.347}{\tiny$\pm$2.216} & 10.629{\tiny$\pm$5.841} & \underline{7.058}{\tiny$\pm$0.758} \\
DiffPuter & 91.241{\tiny$\pm$71.097} & 23.082{\tiny$\pm$16.464} & 14.365{\tiny$\pm$10.648} & 161.602{\tiny$\pm$128.402} & 28.363{\tiny$\pm$26.479} & 32.768{\tiny$\pm$21.681} & 40.917{\tiny$\pm$61.437} & 55.624{\tiny$\pm$43.512} \\
O-MIRACLE & 41.293{\tiny$\pm$28.772} & 23.727{\tiny$\pm$20.749} & 12.479{\tiny$\pm$9.174} & 41.247{\tiny$\pm$27.602} & 15.721{\tiny$\pm$5.929} & 22.174{\tiny$\pm$9.637} & 13.622{\tiny$\pm$8.549} & 16.275{\tiny$\pm$6.147} \\
\textbf{MissCNF (ours)} & \underline{0.059}{\tiny$\pm$0.019} & \underline{0.108}{\tiny$\pm$0.052} & \underline{0.023}{\tiny$\pm$0.005} & \underline{0.243}{\tiny$\pm$0.132} & \textbf{0.073}{\tiny$\pm$0.027} & \textbf{0.348}{\tiny$\pm$0.114} & \textbf{0.031}{\tiny$\pm$0.017} & \textbf{0.834}{\tiny$\pm$0.989} \\
\bottomrule
\end{tabular}}
\caption{KL across all 8 SCMs, both missingness mechanisms and three missingness rates. Values are mean $\pm$ std over seeds. Lower is better. Bold = best, underline = second-best, within each row-block (mechanism $\times$ rate); the full-data reference row (model trained with no missingness) is excluded from best/second-best marking.}
\label{tab:kl_distance}
\end{table}

\begin{table}[H]
\centering
\scriptsize
\setlength{\tabcolsep}{2.5pt}
\resizebox{\textwidth}{!}{%
\begin{tabular}{lcccccccc}
\toprule
Method & Chain$_{\text{LIN}}$ & Fork$_{\text{LIN}}$ & Collider$_{\text{LIN}}$ & Triangle$_{\text{LIN}}$ & Chain$_{\text{NLIN}}$ & Fork$_{\text{NLIN}}$ & Collider$_{\text{NLIN}}$ & Triangle$_{\text{NLIN}}$ \\
\midrule
\textit{Full data (reference)} & \textit{0.054{\tiny$\pm$0.014}} & \textit{0.033{\tiny$\pm$0.004}} & \textit{0.011{\tiny$\pm$0.001}} & \textit{0.350{\tiny$\pm$0.074}} & \textit{0.027{\tiny$\pm$0.008}} & \textit{0.063{\tiny$\pm$0.014}} & \textit{0.018{\tiny$\pm$0.006}} & \textit{0.145{\tiny$\pm$0.048}} \\
\midrule
\multicolumn{9}{l}{\textit{MCAR, 30\%}} \\
Listwise deletion & 0.089{\tiny$\pm$0.035} & 0.035{\tiny$\pm$0.006} & 0.013{\tiny$\pm$0.002} & \underline{0.333}{\tiny$\pm$0.052} & \underline{0.040}{\tiny$\pm$0.011} & 0.083{\tiny$\pm$0.030} & \textbf{0.019}{\tiny$\pm$0.007} & \underline{0.219}{\tiny$\pm$0.091} \\
Mean & 1.778{\tiny$\pm$0.043} & 0.299{\tiny$\pm$0.010} & 0.067{\tiny$\pm$0.004} & 3.809{\tiny$\pm$0.311} & 0.599{\tiny$\pm$0.016} & 0.772{\tiny$\pm$0.048} & 0.027{\tiny$\pm$0.006} & 1.354{\tiny$\pm$0.189} \\
MissForest & 0.407{\tiny$\pm$0.090} & 0.057{\tiny$\pm$0.012} & 0.012{\tiny$\pm$0.002} & 0.373{\tiny$\pm$0.289} & 0.077{\tiny$\pm$0.021} & 0.580{\tiny$\pm$0.172} & 0.028{\tiny$\pm$0.009} & 1.130{\tiny$\pm$0.473} \\
MICE & \textbf{0.070}{\tiny$\pm$0.008} & \underline{0.032}{\tiny$\pm$0.004} & 0.012{\tiny$\pm$0.003} & 0.372{\tiny$\pm$0.046} & 0.122{\tiny$\pm$0.005} & 1.187{\tiny$\pm$0.037} & 0.026{\tiny$\pm$0.007} & 2.184{\tiny$\pm$0.062} \\
KPI & 0.086{\tiny$\pm$0.018} & 0.043{\tiny$\pm$0.006} & 0.014{\tiny$\pm$0.005} & 0.924{\tiny$\pm$0.039} & 0.082{\tiny$\pm$0.009} & 0.125{\tiny$\pm$0.042} & 0.024{\tiny$\pm$0.011} & 1.503{\tiny$\pm$0.249} \\
DiffPuter & 0.078{\tiny$\pm$0.027} & \textbf{0.031}{\tiny$\pm$0.004} & 0.013{\tiny$\pm$0.003} & \textbf{0.243}{\tiny$\pm$0.143} & 0.045{\tiny$\pm$0.014} & 0.111{\tiny$\pm$0.036} & 0.020{\tiny$\pm$0.009} & 0.643{\tiny$\pm$0.373} \\
O-MIRACLE & \underline{0.074}{\tiny$\pm$0.020} & 0.035{\tiny$\pm$0.005} & \underline{0.012}{\tiny$\pm$0.003} & 0.387{\tiny$\pm$0.068} & \textbf{0.033}{\tiny$\pm$0.006} & \underline{0.075}{\tiny$\pm$0.021} & \underline{0.020}{\tiny$\pm$0.007} & 0.560{\tiny$\pm$0.120} \\
\textbf{MissCNF (ours)} & 0.093{\tiny$\pm$0.018} & 0.033{\tiny$\pm$0.003} & \textbf{0.012}{\tiny$\pm$0.002} & 0.343{\tiny$\pm$0.044} & 0.041{\tiny$\pm$0.015} & \textbf{0.069}{\tiny$\pm$0.015} & 0.022{\tiny$\pm$0.007} & \textbf{0.164}{\tiny$\pm$0.057} \\
\addlinespace
\multicolumn{9}{l}{\textit{MCAR, 60\%}} \\
Listwise deletion & 0.171{\tiny$\pm$0.023} & 0.038{\tiny$\pm$0.013} & 0.015{\tiny$\pm$0.004} & \underline{0.359}{\tiny$\pm$0.211} & 0.095{\tiny$\pm$0.043} & 0.131{\tiny$\pm$0.076} & 0.031{\tiny$\pm$0.011} & \underline{0.236}{\tiny$\pm$0.102} \\
Mean & 3.486{\tiny$\pm$0.036} & 0.556{\tiny$\pm$0.015} & 0.132{\tiny$\pm$0.006} & 5.000{\tiny$\pm$0.620} & 1.091{\tiny$\pm$0.012} & 1.383{\tiny$\pm$0.034} & 0.047{\tiny$\pm$0.004} & 2.998{\tiny$\pm$0.133} \\
MissForest & 0.369{\tiny$\pm$0.159} & 0.083{\tiny$\pm$0.019} & 0.027{\tiny$\pm$0.011} & 4.374{\tiny$\pm$4.999} & 0.164{\tiny$\pm$0.030} & 0.826{\tiny$\pm$0.142} & 0.049{\tiny$\pm$0.010} & 1.953{\tiny$\pm$1.493} \\
MICE & \textbf{0.060}{\tiny$\pm$0.024} & \underline{0.033}{\tiny$\pm$0.004} & \textbf{0.011}{\tiny$\pm$0.004} & \textbf{0.299}{\tiny$\pm$0.104} & 0.190{\tiny$\pm$0.023} & 1.826{\tiny$\pm$0.053} & 0.045{\tiny$\pm$0.006} & 3.039{\tiny$\pm$0.035} \\
KPI & 0.111{\tiny$\pm$0.024} & 0.051{\tiny$\pm$0.015} & 0.018{\tiny$\pm$0.007} & 1.518{\tiny$\pm$0.048} & 0.125{\tiny$\pm$0.031} & 0.212{\tiny$\pm$0.034} & 0.033{\tiny$\pm$0.010} & 2.625{\tiny$\pm$0.112} \\
DiffPuter & 0.374{\tiny$\pm$0.103} & 0.043{\tiny$\pm$0.011} & 0.016{\tiny$\pm$0.003} & 0.789{\tiny$\pm$0.377} & 0.088{\tiny$\pm$0.022} & 0.342{\tiny$\pm$0.064} & \underline{0.028}{\tiny$\pm$0.009} & 1.028{\tiny$\pm$0.463} \\
O-MIRACLE & \underline{0.067}{\tiny$\pm$0.012} & 0.035{\tiny$\pm$0.006} & \underline{0.013}{\tiny$\pm$0.005} & 0.403{\tiny$\pm$0.118} & \textbf{0.042}{\tiny$\pm$0.012} & \underline{0.111}{\tiny$\pm$0.017} & \textbf{0.025}{\tiny$\pm$0.015} & 0.816{\tiny$\pm$0.159} \\
\textbf{MissCNF (ours)} & 0.093{\tiny$\pm$0.022} & \textbf{0.033}{\tiny$\pm$0.004} & 0.013{\tiny$\pm$0.004} & 0.419{\tiny$\pm$0.092} & \underline{0.051}{\tiny$\pm$0.019} & \textbf{0.093}{\tiny$\pm$0.026} & 0.031{\tiny$\pm$0.010} & \textbf{0.178}{\tiny$\pm$0.057} \\
\addlinespace
\multicolumn{9}{l}{\textit{MCAR, 90\%}} \\
Listwise deletion & 0.534{\tiny$\pm$0.269} & 0.079{\tiny$\pm$0.011} & 0.031{\tiny$\pm$0.016} & \underline{1.283}{\tiny$\pm$0.424} & 0.139{\tiny$\pm$0.052} & 0.421{\tiny$\pm$0.097} & 0.064{\tiny$\pm$0.025} & \underline{1.083}{\tiny$\pm$0.281} \\
Mean & 5.118{\tiny$\pm$0.032} & 0.816{\tiny$\pm$0.004} & 0.204{\tiny$\pm$0.002} & 6.840{\tiny$\pm$0.057} & 1.465{\tiny$\pm$0.013} & 1.956{\tiny$\pm$0.044} & 0.069{\tiny$\pm$0.002} & 4.469{\tiny$\pm$0.032} \\
MissForest & 0.701{\tiny$\pm$0.120} & 0.090{\tiny$\pm$0.024} & 0.050{\tiny$\pm$0.019} & 7.394{\tiny$\pm$8.690} & 0.339{\tiny$\pm$0.071} & 2.383{\tiny$\pm$0.358} & 0.068{\tiny$\pm$0.005} & 6.561{\tiny$\pm$10.536} \\
MICE & \textbf{0.083}{\tiny$\pm$0.045} & \textbf{0.037}{\tiny$\pm$0.005} & \textbf{0.015}{\tiny$\pm$0.007} & \textbf{0.458}{\tiny$\pm$0.192} & 0.278{\tiny$\pm$0.061} & 2.173{\tiny$\pm$0.120} & 0.079{\tiny$\pm$0.006} & 3.806{\tiny$\pm$0.130} \\
KPI & 0.209{\tiny$\pm$0.096} & 0.060{\tiny$\pm$0.016} & 0.043{\tiny$\pm$0.016} & 2.089{\tiny$\pm$0.135} & 0.156{\tiny$\pm$0.075} & 0.291{\tiny$\pm$0.075} & 0.098{\tiny$\pm$0.035} & 3.247{\tiny$\pm$0.095} \\
DiffPuter & 2.837{\tiny$\pm$0.355} & 0.178{\tiny$\pm$0.042} & 0.126{\tiny$\pm$0.015} & 52.412{\tiny$\pm$63.952} & 0.523{\tiny$\pm$0.073} & 1.544{\tiny$\pm$0.069} & \underline{0.040}{\tiny$\pm$0.009} & 2.638{\tiny$\pm$0.899} \\
O-MIRACLE & 0.295{\tiny$\pm$0.124} & 0.056{\tiny$\pm$0.015} & \underline{0.023}{\tiny$\pm$0.012} & 1.576{\tiny$\pm$0.262} & \underline{0.093}{\tiny$\pm$0.025} & \underline{0.272}{\tiny$\pm$0.082} & \textbf{0.037}{\tiny$\pm$0.026} & 1.625{\tiny$\pm$0.273} \\
\textbf{MissCNF (ours)} & \underline{0.177}{\tiny$\pm$0.078} & \underline{0.045}{\tiny$\pm$0.011} & 0.024{\tiny$\pm$0.008} & 1.572{\tiny$\pm$0.128} & \textbf{0.088}{\tiny$\pm$0.023} & \textbf{0.230}{\tiny$\pm$0.056} & 0.047{\tiny$\pm$0.018} & \textbf{0.404}{\tiny$\pm$0.112} \\
\addlinespace
\multicolumn{9}{l}{\textit{MAR, 30\%}} \\
Listwise deletion & 0.075{\tiny$\pm$0.013} & \underline{0.033}{\tiny$\pm$0.004} & 0.014{\tiny$\pm$0.002} & 0.354{\tiny$\pm$0.109} & 0.039{\tiny$\pm$0.013} & 0.287{\tiny$\pm$0.108} & 0.019{\tiny$\pm$0.010} & \underline{0.494}{\tiny$\pm$0.119} \\
Mean & 1.836{\tiny$\pm$0.047} & 0.253{\tiny$\pm$0.028} & 0.052{\tiny$\pm$0.010} & 3.779{\tiny$\pm$0.392} & 0.586{\tiny$\pm$0.026} & 0.910{\tiny$\pm$0.034} & 0.024{\tiny$\pm$0.006} & 1.754{\tiny$\pm$0.113} \\
MissForest & 0.343{\tiny$\pm$0.101} & 0.051{\tiny$\pm$0.015} & 0.015{\tiny$\pm$0.003} & 0.412{\tiny$\pm$0.156} & 0.091{\tiny$\pm$0.018} & 0.857{\tiny$\pm$0.080} & 0.019{\tiny$\pm$0.007} & 0.913{\tiny$\pm$0.057} \\
MICE & \textbf{0.053}{\tiny$\pm$0.012} & \textbf{0.032}{\tiny$\pm$0.003} & \underline{0.012}{\tiny$\pm$0.003} & \textbf{0.300}{\tiny$\pm$0.102} & 0.050{\tiny$\pm$0.014} & 1.234{\tiny$\pm$0.040} & 0.031{\tiny$\pm$0.007} & 2.298{\tiny$\pm$0.072} \\
KPI & 0.071{\tiny$\pm$0.022} & 0.039{\tiny$\pm$0.007} & 0.013{\tiny$\pm$0.005} & 0.903{\tiny$\pm$0.063} & 0.100{\tiny$\pm$0.016} & 0.145{\tiny$\pm$0.020} & 0.024{\tiny$\pm$0.011} & 1.641{\tiny$\pm$0.159} \\
DiffPuter & 0.089{\tiny$\pm$0.040} & 0.038{\tiny$\pm$0.014} & 0.017{\tiny$\pm$0.008} & 0.374{\tiny$\pm$0.329} & 0.057{\tiny$\pm$0.018} & 0.157{\tiny$\pm$0.063} & 0.023{\tiny$\pm$0.010} & 0.551{\tiny$\pm$0.134} \\
O-MIRACLE & \underline{0.053}{\tiny$\pm$0.018} & 0.033{\tiny$\pm$0.003} & 0.012{\tiny$\pm$0.002} & \underline{0.345}{\tiny$\pm$0.103} & \textbf{0.033}{\tiny$\pm$0.009} & \underline{0.078}{\tiny$\pm$0.011} & \underline{0.018}{\tiny$\pm$0.004} & 0.599{\tiny$\pm$0.082} \\
\textbf{MissCNF (ours)} & 0.069{\tiny$\pm$0.020} & 0.035{\tiny$\pm$0.002} & \textbf{0.012}{\tiny$\pm$0.002} & 0.367{\tiny$\pm$0.102} & \underline{0.037}{\tiny$\pm$0.010} & \textbf{0.062}{\tiny$\pm$0.014} & \textbf{0.016}{\tiny$\pm$0.004} & \textbf{0.153}{\tiny$\pm$0.029} \\
\addlinespace
\multicolumn{9}{l}{\textit{MAR, 60\%}} \\
Listwise deletion & 0.152{\tiny$\pm$0.071} & 0.044{\tiny$\pm$0.012} & 0.015{\tiny$\pm$0.007} & \underline{0.436}{\tiny$\pm$0.210} & 0.074{\tiny$\pm$0.028} & 0.528{\tiny$\pm$0.213} & 0.031{\tiny$\pm$0.019} & 0.881{\tiny$\pm$0.180} \\
Mean & 3.768{\tiny$\pm$0.216} & 0.517{\tiny$\pm$0.030} & 0.133{\tiny$\pm$0.010} & 5.242{\tiny$\pm$0.532} & 1.117{\tiny$\pm$0.015} & 1.373{\tiny$\pm$0.043} & 0.047{\tiny$\pm$0.011} & 3.135{\tiny$\pm$0.345} \\
MissForest & 0.468{\tiny$\pm$0.207} & 0.069{\tiny$\pm$0.016} & 0.024{\tiny$\pm$0.004} & 1.470{\tiny$\pm$1.210} & 0.187{\tiny$\pm$0.050} & 1.300{\tiny$\pm$0.292} & 0.043{\tiny$\pm$0.009} & 1.866{\tiny$\pm$1.183} \\
MICE & \textbf{0.063}{\tiny$\pm$0.027} & \textbf{0.035}{\tiny$\pm$0.007} & \textbf{0.011}{\tiny$\pm$0.002} & \textbf{0.297}{\tiny$\pm$0.062} & 0.081{\tiny$\pm$0.026} & 2.240{\tiny$\pm$0.176} & 0.094{\tiny$\pm$0.026} & 3.429{\tiny$\pm$0.121} \\
KPI & 0.105{\tiny$\pm$0.042} & 0.060{\tiny$\pm$0.012} & 0.017{\tiny$\pm$0.005} & 1.489{\tiny$\pm$0.056} & 0.124{\tiny$\pm$0.017} & 0.207{\tiny$\pm$0.016} & 0.039{\tiny$\pm$0.015} & 2.658{\tiny$\pm$0.151} \\
DiffPuter & 1.078{\tiny$\pm$0.469} & 0.114{\tiny$\pm$0.062} & 0.015{\tiny$\pm$0.005} & 2.010{\tiny$\pm$1.467} & 0.255{\tiny$\pm$0.140} & 0.718{\tiny$\pm$0.264} & \underline{0.022}{\tiny$\pm$0.010} & 1.604{\tiny$\pm$0.175} \\
O-MIRACLE & 0.102{\tiny$\pm$0.068} & 0.040{\tiny$\pm$0.001} & 0.016{\tiny$\pm$0.005} & 0.451{\tiny$\pm$0.186} & \textbf{0.034}{\tiny$\pm$0.012} & \underline{0.189}{\tiny$\pm$0.100} & \textbf{0.018}{\tiny$\pm$0.010} & \underline{0.853}{\tiny$\pm$0.052} \\
\textbf{MissCNF (ours)} & \underline{0.093}{\tiny$\pm$0.034} & \underline{0.037}{\tiny$\pm$0.007} & \underline{0.014}{\tiny$\pm$0.005} & 0.492{\tiny$\pm$0.345} & \underline{0.045}{\tiny$\pm$0.010} & \textbf{0.083}{\tiny$\pm$0.024} & 0.027{\tiny$\pm$0.013} & \textbf{0.190}{\tiny$\pm$0.043} \\
\addlinespace
\multicolumn{9}{l}{\textit{MAR, 90\%}} \\
Listwise deletion & 0.704{\tiny$\pm$0.571} & 0.087{\tiny$\pm$0.047} & 0.032{\tiny$\pm$0.014} & \underline{1.340}{\tiny$\pm$0.510} & 0.242{\tiny$\pm$0.050} & 1.299{\tiny$\pm$0.449} & 0.046{\tiny$\pm$0.020} & \underline{1.871}{\tiny$\pm$0.691} \\
Mean & 5.333{\tiny$\pm$0.136} & 0.811{\tiny$\pm$0.023} & 0.206{\tiny$\pm$0.005} & 7.013{\tiny$\pm$0.103} & 1.471{\tiny$\pm$0.017} & 1.803{\tiny$\pm$0.032} & 0.075{\tiny$\pm$0.003} & 4.602{\tiny$\pm$0.055} \\
MissForest & 0.831{\tiny$\pm$0.248} & 0.133{\tiny$\pm$0.036} & 0.048{\tiny$\pm$0.028} & 18.145{\tiny$\pm$35.762} & 0.311{\tiny$\pm$0.080} & 1.879{\tiny$\pm$0.321} & 0.067{\tiny$\pm$0.007} & 18.476{\tiny$\pm$29.456} \\
MICE & \textbf{0.095}{\tiny$\pm$0.043} & \textbf{0.041}{\tiny$\pm$0.004} & \textbf{0.013}{\tiny$\pm$0.003} & \textbf{0.444}{\tiny$\pm$0.270} & 0.207{\tiny$\pm$0.037} & 3.106{\tiny$\pm$0.244} & 0.218{\tiny$\pm$0.066} & 4.427{\tiny$\pm$0.417} \\
KPI & 0.243{\tiny$\pm$0.085} & 0.075{\tiny$\pm$0.035} & 0.031{\tiny$\pm$0.012} & 2.005{\tiny$\pm$0.160} & \underline{0.152}{\tiny$\pm$0.101} & \underline{0.323}{\tiny$\pm$0.088} & 0.074{\tiny$\pm$0.020} & 3.339{\tiny$\pm$0.170} \\
DiffPuter & 4.345{\tiny$\pm$0.619} & 0.342{\tiny$\pm$0.228} & 0.117{\tiny$\pm$0.021} & 12.606{\tiny$\pm$12.593} & 0.878{\tiny$\pm$0.104} & 1.673{\tiny$\pm$0.273} & 0.068{\tiny$\pm$0.018} & 4.628{\tiny$\pm$0.731} \\
O-MIRACLE & 1.899{\tiny$\pm$0.804} & 0.205{\tiny$\pm$0.141} & 0.038{\tiny$\pm$0.029} & 3.368{\tiny$\pm$0.801} & 0.420{\tiny$\pm$0.180} & 1.638{\tiny$\pm$0.478} & \underline{0.041}{\tiny$\pm$0.030} & 2.550{\tiny$\pm$0.427} \\
\textbf{MissCNF (ours)} & \underline{0.157}{\tiny$\pm$0.126} & \underline{0.046}{\tiny$\pm$0.012} & \underline{0.028}{\tiny$\pm$0.011} & 1.557{\tiny$\pm$0.487} & \textbf{0.124}{\tiny$\pm$0.088} & \textbf{0.175}{\tiny$\pm$0.121} & \textbf{0.040}{\tiny$\pm$0.019} & \textbf{0.460}{\tiny$\pm$0.058} \\
\bottomrule
\end{tabular}}
\caption{RMSE$_{\text{ATE}}$ across all 8 SCMs, both missingness mechanisms and three missingness rates. Values are mean $\pm$ std over seeds. Lower is better. Bold = best, underline = second-best, within each row-block (mechanism $\times$ rate); the full-data reference row (model trained with no missingness) is excluded from best/second-best marking.}
\label{tab:rmse_ate}
\end{table}

\begin{table}[H]
\centering
\scriptsize
\setlength{\tabcolsep}{2.5pt}
\resizebox{\textwidth}{!}{%
\begin{tabular}{lcccccccc}
\toprule
Method & Chain$_{\text{LIN}}$ & Fork$_{\text{LIN}}$ & Collider$_{\text{LIN}}$ & Triangle$_{\text{LIN}}$ & Chain$_{\text{NLIN}}$ & Fork$_{\text{NLIN}}$ & Collider$_{\text{NLIN}}$ & Triangle$_{\text{NLIN}}$ \\
\midrule
\textit{Full data (reference)} & \textit{0.067{\tiny$\pm$0.012}} & \textit{0.022{\tiny$\pm$0.002}} & \textit{0.016{\tiny$\pm$0.001}} & \textit{0.268{\tiny$\pm$0.037}} & \textit{0.036{\tiny$\pm$0.002}} & \textit{0.101{\tiny$\pm$0.011}} & \textit{0.034{\tiny$\pm$0.002}} & \textit{0.198{\tiny$\pm$0.037}} \\
\midrule
\multicolumn{9}{l}{\textit{MCAR, 30\%}} \\
Listwise deletion & 0.091{\tiny$\pm$0.017} & 0.028{\tiny$\pm$0.003} & 0.019{\tiny$\pm$0.001} & \underline{0.287}{\tiny$\pm$0.036} & \underline{0.049}{\tiny$\pm$0.006} & \underline{0.137}{\tiny$\pm$0.013} & \underline{0.038}{\tiny$\pm$0.005} & \underline{0.287}{\tiny$\pm$0.053} \\
Mean & 1.372{\tiny$\pm$0.037} & 0.218{\tiny$\pm$0.011} & 0.066{\tiny$\pm$0.003} & 3.381{\tiny$\pm$0.242} & 0.500{\tiny$\pm$0.011} & 1.082{\tiny$\pm$0.021} & 0.071{\tiny$\pm$0.008} & 1.825{\tiny$\pm$0.064} \\
MissForest & 0.390{\tiny$\pm$0.044} & 0.073{\tiny$\pm$0.006} & 0.029{\tiny$\pm$0.005} & 0.456{\tiny$\pm$0.202} & 0.121{\tiny$\pm$0.014} & 0.971{\tiny$\pm$0.086} & 0.050{\tiny$\pm$0.009} & 1.381{\tiny$\pm$0.307} \\
MICE & \textbf{0.075}{\tiny$\pm$0.010} & \underline{0.022}{\tiny$\pm$0.001} & \textbf{0.018}{\tiny$\pm$0.003} & \textbf{0.283}{\tiny$\pm$0.027} & 0.140{\tiny$\pm$0.004} & 1.526{\tiny$\pm$0.038} & 0.063{\tiny$\pm$0.010} & 2.178{\tiny$\pm$0.042} \\
KPI & 0.087{\tiny$\pm$0.017} & 0.033{\tiny$\pm$0.004} & 0.026{\tiny$\pm$0.006} & 0.911{\tiny$\pm$0.043} & 0.108{\tiny$\pm$0.008} & 0.360{\tiny$\pm$0.019} & 0.047{\tiny$\pm$0.008} & 1.789{\tiny$\pm$0.102} \\
DiffPuter & 0.087{\tiny$\pm$0.018} & \textbf{0.022}{\tiny$\pm$0.003} & 0.019{\tiny$\pm$0.002} & 0.386{\tiny$\pm$0.175} & 0.061{\tiny$\pm$0.006} & 0.215{\tiny$\pm$0.023} & 0.040{\tiny$\pm$0.005} & 1.067{\tiny$\pm$0.211} \\
O-MIRACLE & \underline{0.077}{\tiny$\pm$0.014} & 0.022{\tiny$\pm$0.002} & \underline{0.018}{\tiny$\pm$0.002} & 0.296{\tiny$\pm$0.022} & 0.057{\tiny$\pm$0.005} & 0.239{\tiny$\pm$0.008} & \textbf{0.038}{\tiny$\pm$0.004} & 0.784{\tiny$\pm$0.056} \\
\textbf{MissCNF (ours)} & 0.094{\tiny$\pm$0.014} & 0.023{\tiny$\pm$0.002} & 0.020{\tiny$\pm$0.001} & 0.310{\tiny$\pm$0.054} & \textbf{0.046}{\tiny$\pm$0.008} & \textbf{0.112}{\tiny$\pm$0.014} & 0.038{\tiny$\pm$0.003} & \textbf{0.246}{\tiny$\pm$0.043} \\
\addlinespace
\multicolumn{9}{l}{\textit{MCAR, 60\%}} \\
Listwise deletion & 0.192{\tiny$\pm$0.029} & 0.051{\tiny$\pm$0.007} & 0.026{\tiny$\pm$0.005} & 0.354{\tiny$\pm$0.109} & 0.103{\tiny$\pm$0.025} & \underline{0.236}{\tiny$\pm$0.038} & 0.054{\tiny$\pm$0.007} & \underline{0.457}{\tiny$\pm$0.053} \\
Mean & 2.406{\tiny$\pm$0.032} & 0.389{\tiny$\pm$0.031} & 0.113{\tiny$\pm$0.002} & 4.084{\tiny$\pm$0.268} & 0.808{\tiny$\pm$0.010} & 1.710{\tiny$\pm$0.035} & 0.133{\tiny$\pm$0.008} & 2.985{\tiny$\pm$0.095} \\
MissForest & 0.584{\tiny$\pm$0.021} & 0.114{\tiny$\pm$0.009} & 0.050{\tiny$\pm$0.004} & 2.583{\tiny$\pm$2.614} & 0.241{\tiny$\pm$0.029} & 1.568{\tiny$\pm$0.046} & 0.089{\tiny$\pm$0.014} & 3.070{\tiny$\pm$1.692} \\
MICE & \textbf{0.072}{\tiny$\pm$0.008} & \textbf{0.022}{\tiny$\pm$0.002} & \textbf{0.019}{\tiny$\pm$0.004} & \textbf{0.250}{\tiny$\pm$0.056} & 0.227{\tiny$\pm$0.012} & 2.605{\tiny$\pm$0.038} & 0.129{\tiny$\pm$0.010} & 3.180{\tiny$\pm$0.052} \\
KPI & 0.122{\tiny$\pm$0.015} & 0.049{\tiny$\pm$0.004} & 0.048{\tiny$\pm$0.005} & 1.557{\tiny$\pm$0.048} & 0.163{\tiny$\pm$0.009} & 0.620{\tiny$\pm$0.014} & 0.096{\tiny$\pm$0.017} & 2.865{\tiny$\pm$0.069} \\
DiffPuter & 0.341{\tiny$\pm$0.053} & 0.043{\tiny$\pm$0.004} & 0.026{\tiny$\pm$0.004} & 1.054{\tiny$\pm$0.309} & 0.134{\tiny$\pm$0.016} & 0.603{\tiny$\pm$0.030} & 0.057{\tiny$\pm$0.014} & 2.110{\tiny$\pm$0.372} \\
O-MIRACLE & \underline{0.087}{\tiny$\pm$0.010} & \underline{0.023}{\tiny$\pm$0.002} & \underline{0.020}{\tiny$\pm$0.003} & \underline{0.324}{\tiny$\pm$0.088} & \underline{0.076}{\tiny$\pm$0.003} & 0.386{\tiny$\pm$0.012} & \textbf{0.046}{\tiny$\pm$0.007} & 1.123{\tiny$\pm$0.054} \\
\textbf{MissCNF (ours)} & 0.108{\tiny$\pm$0.015} & 0.030{\tiny$\pm$0.002} & 0.025{\tiny$\pm$0.006} & 0.371{\tiny$\pm$0.072} & \textbf{0.065}{\tiny$\pm$0.014} & \textbf{0.146}{\tiny$\pm$0.005} & \underline{0.051}{\tiny$\pm$0.006} & \textbf{0.275}{\tiny$\pm$0.045} \\
\addlinespace
\multicolumn{9}{l}{\textit{MCAR, 90\%}} \\
Listwise deletion & 0.586{\tiny$\pm$0.204} & 0.125{\tiny$\pm$0.029} & 0.044{\tiny$\pm$0.006} & \underline{1.236}{\tiny$\pm$0.336} & 0.216{\tiny$\pm$0.050} & 0.733{\tiny$\pm$0.266} & 0.098{\tiny$\pm$0.015} & \underline{1.851}{\tiny$\pm$0.423} \\
Mean & 3.545{\tiny$\pm$0.124} & 0.645{\tiny$\pm$0.022} & 0.165{\tiny$\pm$0.004} & 5.155{\tiny$\pm$0.214} & 1.256{\tiny$\pm$0.067} & 2.539{\tiny$\pm$0.173} & 0.188{\tiny$\pm$0.007} & 4.380{\tiny$\pm$0.067} \\
MissForest & 0.893{\tiny$\pm$0.076} & 0.160{\tiny$\pm$0.006} & 0.078{\tiny$\pm$0.013} & 4.031{\tiny$\pm$3.657} & 0.431{\tiny$\pm$0.084} & 2.240{\tiny$\pm$0.126} & 0.126{\tiny$\pm$0.011} & 27.034{\tiny$\pm$52.249} \\
MICE & \textbf{0.087}{\tiny$\pm$0.032} & \textbf{0.027}{\tiny$\pm$0.006} & \textbf{0.019}{\tiny$\pm$0.005} & \textbf{0.422}{\tiny$\pm$0.122} & 0.326{\tiny$\pm$0.040} & 3.386{\tiny$\pm$0.046} & 0.203{\tiny$\pm$0.006} & 4.069{\tiny$\pm$0.069} \\
KPI & 0.245{\tiny$\pm$0.028} & 0.083{\tiny$\pm$0.006} & 0.123{\tiny$\pm$0.011} & 2.298{\tiny$\pm$0.068} & 0.225{\tiny$\pm$0.034} & 0.809{\tiny$\pm$0.048} & 0.277{\tiny$\pm$0.017} & 3.466{\tiny$\pm$0.064} \\
DiffPuter & 2.352{\tiny$\pm$0.265} & 0.270{\tiny$\pm$0.046} & 0.113{\tiny$\pm$0.010} & 39.859{\tiny$\pm$38.052} & 0.850{\tiny$\pm$0.079} & 2.745{\tiny$\pm$0.266} & 0.111{\tiny$\pm$0.021} & 6.204{\tiny$\pm$1.746} \\
O-MIRACLE & 0.297{\tiny$\pm$0.086} & \underline{0.042}{\tiny$\pm$0.007} & \underline{0.033}{\tiny$\pm$0.006} & 1.522{\tiny$\pm$0.298} & \underline{0.143}{\tiny$\pm$0.010} & \underline{0.642}{\tiny$\pm$0.069} & \textbf{0.066}{\tiny$\pm$0.004} & 2.219{\tiny$\pm$0.229} \\
\textbf{MissCNF (ours)} & \underline{0.206}{\tiny$\pm$0.028} & 0.051{\tiny$\pm$0.003} & 0.040{\tiny$\pm$0.006} & 1.633{\tiny$\pm$0.120} & \textbf{0.114}{\tiny$\pm$0.012} & \textbf{0.282}{\tiny$\pm$0.045} & \underline{0.072}{\tiny$\pm$0.011} & \textbf{0.678}{\tiny$\pm$0.159} \\
\addlinespace
\multicolumn{9}{l}{\textit{MAR, 30\%}} \\
Listwise deletion & 0.098{\tiny$\pm$0.015} & 0.032{\tiny$\pm$0.005} & 0.021{\tiny$\pm$0.002} & \underline{0.307}{\tiny$\pm$0.085} & \underline{0.053}{\tiny$\pm$0.008} & \underline{0.130}{\tiny$\pm$0.016} & \textbf{0.041}{\tiny$\pm$0.006} & \underline{0.251}{\tiny$\pm$0.017} \\
Mean & 1.323{\tiny$\pm$0.014} & 0.255{\tiny$\pm$0.030} & 0.091{\tiny$\pm$0.020} & 3.170{\tiny$\pm$0.303} & 0.538{\tiny$\pm$0.034} & 1.087{\tiny$\pm$0.025} & 0.115{\tiny$\pm$0.031} & 2.025{\tiny$\pm$0.064} \\
MissForest & 0.474{\tiny$\pm$0.029} & 0.085{\tiny$\pm$0.006} & 0.055{\tiny$\pm$0.018} & 0.482{\tiny$\pm$0.072} & 0.157{\tiny$\pm$0.011} & 1.115{\tiny$\pm$0.065} & 0.096{\tiny$\pm$0.029} & 1.302{\tiny$\pm$0.057} \\
MICE & \textbf{0.064}{\tiny$\pm$0.005} & \textbf{0.021}{\tiny$\pm$0.001} & \textbf{0.017}{\tiny$\pm$0.002} & \textbf{0.279}{\tiny$\pm$0.067} & 0.134{\tiny$\pm$0.011} & 1.543{\tiny$\pm$0.034} & 0.089{\tiny$\pm$0.015} & 2.338{\tiny$\pm$0.067} \\
KPI & 0.123{\tiny$\pm$0.022} & 0.047{\tiny$\pm$0.008} & 0.049{\tiny$\pm$0.014} & 0.946{\tiny$\pm$0.077} & 0.113{\tiny$\pm$0.008} & 0.402{\tiny$\pm$0.018} & 0.095{\tiny$\pm$0.016} & 1.895{\tiny$\pm$0.056} \\
DiffPuter & 0.224{\tiny$\pm$0.043} & 0.054{\tiny$\pm$0.022} & 0.047{\tiny$\pm$0.016} & 0.576{\tiny$\pm$0.167} & 0.126{\tiny$\pm$0.025} & 0.318{\tiny$\pm$0.048} & 0.087{\tiny$\pm$0.033} & 1.293{\tiny$\pm$0.470} \\
O-MIRACLE & 0.154{\tiny$\pm$0.044} & 0.040{\tiny$\pm$0.012} & 0.048{\tiny$\pm$0.014} & 0.321{\tiny$\pm$0.064} & 0.097{\tiny$\pm$0.021} & 0.271{\tiny$\pm$0.026} & 0.085{\tiny$\pm$0.030} & 0.937{\tiny$\pm$0.130} \\
\textbf{MissCNF (ours)} & \underline{0.082}{\tiny$\pm$0.005} & \underline{0.023}{\tiny$\pm$0.001} & \underline{0.019}{\tiny$\pm$0.002} & 0.327{\tiny$\pm$0.083} & \textbf{0.047}{\tiny$\pm$0.007} & \textbf{0.111}{\tiny$\pm$0.009} & \underline{0.043}{\tiny$\pm$0.005} & \textbf{0.240}{\tiny$\pm$0.062} \\
\addlinespace
\multicolumn{9}{l}{\textit{MAR, 60\%}} \\
Listwise deletion & 0.222{\tiny$\pm$0.047} & 0.071{\tiny$\pm$0.009} & 0.039{\tiny$\pm$0.007} & \underline{0.410}{\tiny$\pm$0.122} & \underline{0.124}{\tiny$\pm$0.030} & \underline{0.339}{\tiny$\pm$0.095} & \underline{0.079}{\tiny$\pm$0.012} & \underline{0.559}{\tiny$\pm$0.064} \\
Mean & 2.474{\tiny$\pm$0.082} & 0.461{\tiny$\pm$0.054} & 0.187{\tiny$\pm$0.035} & 4.356{\tiny$\pm$0.559} & 0.893{\tiny$\pm$0.065} & 1.865{\tiny$\pm$0.093} & 0.253{\tiny$\pm$0.131} & 3.368{\tiny$\pm$0.128} \\
MissForest & 0.799{\tiny$\pm$0.147} & 0.145{\tiny$\pm$0.019} & 0.113{\tiny$\pm$0.035} & 1.277{\tiny$\pm$0.592} & 0.326{\tiny$\pm$0.032} & 1.853{\tiny$\pm$0.156} & 0.196{\tiny$\pm$0.086} & 2.624{\tiny$\pm$0.613} \\
MICE & \textbf{0.072}{\tiny$\pm$0.012} & \textbf{0.022}{\tiny$\pm$0.002} & \textbf{0.018}{\tiny$\pm$0.004} & \textbf{0.254}{\tiny$\pm$0.060} & 0.224{\tiny$\pm$0.024} & 2.677{\tiny$\pm$0.065} & 0.165{\tiny$\pm$0.026} & 3.532{\tiny$\pm$0.083} \\
KPI & 0.185{\tiny$\pm$0.028} & 0.081{\tiny$\pm$0.018} & 0.096{\tiny$\pm$0.024} & 1.570{\tiny$\pm$0.045} & 0.178{\tiny$\pm$0.010} & 0.631{\tiny$\pm$0.033} & 0.210{\tiny$\pm$0.060} & 2.958{\tiny$\pm$0.097} \\
DiffPuter & 0.907{\tiny$\pm$0.225} & 0.181{\tiny$\pm$0.060} & 0.098{\tiny$\pm$0.034} & 2.924{\tiny$\pm$2.596} & 0.388{\tiny$\pm$0.131} & 1.215{\tiny$\pm$0.366} & 0.180{\tiny$\pm$0.086} & 2.357{\tiny$\pm$0.268} \\
O-MIRACLE & 0.345{\tiny$\pm$0.084} & 0.083{\tiny$\pm$0.035} & 0.088{\tiny$\pm$0.029} & 0.585{\tiny$\pm$0.287} & 0.206{\tiny$\pm$0.042} & 0.451{\tiny$\pm$0.085} & 0.167{\tiny$\pm$0.068} & 1.387{\tiny$\pm$0.141} \\
\textbf{MissCNF (ours)} & \underline{0.111}{\tiny$\pm$0.009} & \underline{0.033}{\tiny$\pm$0.006} & \underline{0.026}{\tiny$\pm$0.005} & 0.465{\tiny$\pm$0.284} & \textbf{0.062}{\tiny$\pm$0.012} & \textbf{0.147}{\tiny$\pm$0.004} & \textbf{0.052}{\tiny$\pm$0.006} & \textbf{0.325}{\tiny$\pm$0.055} \\
\addlinespace
\multicolumn{9}{l}{\textit{MAR, 90\%}} \\
Listwise deletion & 2.604{\tiny$\pm$1.905} & 0.343{\tiny$\pm$0.339} & 0.134{\tiny$\pm$0.063} & 1.714{\tiny$\pm$0.555} & 0.590{\tiny$\pm$0.175} & 1.364{\tiny$\pm$0.153} & 0.305{\tiny$\pm$0.128} & \underline{2.619}{\tiny$\pm$0.498} \\
Mean & 10.793{\tiny$\pm$10.221} & 0.915{\tiny$\pm$0.207} & 0.297{\tiny$\pm$0.079} & 12.767{\tiny$\pm$16.181} & 3.394{\tiny$\pm$4.000} & 3.271{\tiny$\pm$0.465} & 0.405{\tiny$\pm$0.179} & 5.483{\tiny$\pm$1.581} \\
MissForest & 1.160{\tiny$\pm$0.220} & 0.231{\tiny$\pm$0.049} & 0.160{\tiny$\pm$0.038} & 12.064{\tiny$\pm$21.237} & 0.478{\tiny$\pm$0.038} & 2.316{\tiny$\pm$0.174} & 0.357{\tiny$\pm$0.150} & 30.084{\tiny$\pm$23.658} \\
MICE & \textbf{0.089}{\tiny$\pm$0.017} & \textbf{0.026}{\tiny$\pm$0.002} & \textbf{0.018}{\tiny$\pm$0.004} & \textbf{0.475}{\tiny$\pm$0.219} & 0.320{\tiny$\pm$0.034} & 3.679{\tiny$\pm$0.093} & 0.287{\tiny$\pm$0.049} & 4.571{\tiny$\pm$0.210} \\
KPI & 0.287{\tiny$\pm$0.038} & 0.131{\tiny$\pm$0.022} & 0.178{\tiny$\pm$0.030} & 2.138{\tiny$\pm$0.097} & \underline{0.237}{\tiny$\pm$0.039} & \underline{0.896}{\tiny$\pm$0.087} & 0.428{\tiny$\pm$0.101} & 3.646{\tiny$\pm$0.052} \\
DiffPuter & 3.786{\tiny$\pm$1.512} & 0.593{\tiny$\pm$0.273} & 0.212{\tiny$\pm$0.059} & 10.625{\tiny$\pm$6.849} & 1.227{\tiny$\pm$0.412} & 2.891{\tiny$\pm$0.440} & 0.424{\tiny$\pm$0.286} & 32.592{\tiny$\pm$43.503} \\
O-MIRACLE & 1.567{\tiny$\pm$0.632} & 0.324{\tiny$\pm$0.233} & 0.139{\tiny$\pm$0.055} & 2.872{\tiny$\pm$0.645} & 0.684{\tiny$\pm$0.150} & 1.823{\tiny$\pm$0.387} & \underline{0.281}{\tiny$\pm$0.109} & 2.987{\tiny$\pm$0.368} \\
\textbf{MissCNF (ours)} & \underline{0.206}{\tiny$\pm$0.046} & \underline{0.058}{\tiny$\pm$0.008} & \underline{0.041}{\tiny$\pm$0.007} & \underline{1.683}{\tiny$\pm$0.478} & \textbf{0.141}{\tiny$\pm$0.044} & \textbf{0.362}{\tiny$\pm$0.065} & \textbf{0.073}{\tiny$\pm$0.012} & \textbf{0.860}{\tiny$\pm$0.253} \\
\bottomrule
\end{tabular}}
\caption{RMSE$_{\text{CF}}$ across all 8 SCMs, both missingness mechanisms and three missingness rates. Values are mean $\pm$ std over seeds. Lower is better. Bold = best, underline = second-best, within each row-block (mechanism $\times$ rate); the full-data reference row (model trained with no missingness) is excluded from best/second-best marking.}
\label{tab:rmse_cf}
\end{table}

\subsection{Robustness to Weaker Recovery Conditions (Section~\ref{sec:weaker-recovery})}
\label{app:full_results_weaker_recovery}

\subsubsection{From Causal-Family Coverage to Non-Identification (Section~\ref{sec:tier2-tier3})}
\label{app:full_results_tier2_vs_tier3}

\begin{table}[H]
\centering
\scriptsize
\begin{tabular}{llccccc}
\toprule
Task & Setting & MissCNF & MICE & O-MIRACLE & KPI & DiffPuter \\
\midrule
\multirow{3}{*}{Chain$_{\text{LIN}}$} & Full pos. & 0.023 $\pm$ 0.010 & \textbf{0.007 $\pm$ 0.005} & 1.27 $\pm$ 1.13 & 0.345 $\pm$ 0.102 & 5.20 $\pm$ 3.81 \\
 & Cov.\ hold & 0.068 $\pm$ 0.023 & \textbf{0.007 $\pm$ 0.005} & 2.70 $\pm$ 0.113 & 0.338 $\pm$ 0.026 & 14.4 $\pm$ 0.438 \\
 & Cov.\ violated & 0.385 $\pm$ 0.116 & \textbf{0.005 $\pm$ 0.004} & 439 $\pm$ 277 & 1309 $\pm$ 1082 & 1434 $\pm$ 116 \\
\midrule
\multirow{3}{*}{Chain$_{\text{NLIN}}$} & Full pos. & \textbf{0.023 $\pm$ 0.026} & 0.610 $\pm$ 0.213 & 0.759 $\pm$ 0.284 & 0.290 $\pm$ 0.129 & 1.30 $\pm$ 0.688 \\
 & Cov.\ hold & \textbf{0.024 $\pm$ 0.014} & 0.296 $\pm$ 0.021 & 1.63 $\pm$ 0.381 & 0.345 $\pm$ 0.047 & 4.64 $\pm$ 4.71 \\
 & Cov.\ violated & \textbf{0.034 $\pm$ 0.017} & 0.703 $\pm$ 0.042 & 15.9 $\pm$ 0.530 & 14.0 $\pm$ 2.70 & 19.0 $\pm$ 1.61 \\
\midrule
\multirow{3}{*}{Fork$_{\text{LIN}}$} & Full pos. & 0.025 $\pm$ 0.010 & \textbf{0.010 $\pm$ 0.004} & 0.670 $\pm$ 0.305 & 0.802 $\pm$ 0.114 & 1.36 $\pm$ 0.758 \\
 & Cov.\ hold & 0.023 $\pm$ 0.007 & \textbf{0.012 $\pm$ 0.003} & 1.54 $\pm$ 0.081 & 0.463 $\pm$ 0.023 & 15.0 $\pm$ 1.12 \\
 & Cov.\ violated & 0.134 $\pm$ 0.041 & \textbf{0.015 $\pm$ 0.005} & 74.9 $\pm$ 11.8 & 158 $\pm$ 53.5 & 134 $\pm$ 79.7 \\
\midrule
\multirow{3}{*}{Fork$_{\text{NLIN}}$} & Full pos. & \textbf{0.027 $\pm$ 0.008} & 17.6 $\pm$ 1.38 & 0.769 $\pm$ 0.240 & 1.15 $\pm$ 0.130 & 2.03 $\pm$ 0.981 \\
 & Cov.\ hold & \textbf{0.130 $\pm$ 0.071} & 11.4 $\pm$ 0.355 & 1.76 $\pm$ 0.050 & 1.72 $\pm$ 0.444 & 5.80 $\pm$ 0.184 \\
 & Cov.\ violated & \textbf{1.52 $\pm$ 1.25} & 14.7 $\pm$ 1.20 & 136 $\pm$ 15.8 & 173 $\pm$ 66.9 & 135 $\pm$ 15.3 \\
\bottomrule
\end{tabular}
\caption{KL across all 4 SCMs under 3 settings. Values are mean $\pm$ std over seeds. Lower is better. Bold = best,  within each row.}
\label{tab:tier2-tier3-kl}
\end{table}

\begin{table}[H]
\centering
\scriptsize
\begin{tabular}{llccccc}
\toprule
Task & Setting & MissCNF & MICE & O-MIRACLE & KPI & DiffPuter \\
\midrule
\multirow{3}{*}{Chain$_{\text{LIN}}$} & Full pos. & 0.078 $\pm$ 0.035 & \textbf{0.072 $\pm$ 0.021} & 0.083 $\pm$ 0.029 & 0.102 $\pm$ 0.036 & 0.504 $\pm$ 0.234 \\
 & Cov.\ hold & 0.117 $\pm$ 0.055 & \textbf{0.057 $\pm$ 0.030} & 0.111 $\pm$ 0.044 & 0.148 $\pm$ 0.065 & 1.44 $\pm$ 0.196 \\
 & Cov.\ violated & 0.157 $\pm$ 0.057 & \textbf{0.064 $\pm$ 0.026} & 2.01 $\pm$ 0.096 & 29.1 $\pm$ 43.7 & 3.51 $\pm$ 0.248 \\
\midrule
\multirow{3}{*}{Chain$_{\text{NLIN}}$} & Full pos. & 0.041 $\pm$ 0.011 & 0.069 $\pm$ 0.007 & \textbf{0.034 $\pm$ 0.006} & 0.134 $\pm$ 0.019 & 0.165 $\pm$ 0.197 \\
 & Cov.\ hold & \textbf{0.041 $\pm$ 0.011} & 0.062 $\pm$ 0.013 & 0.067 $\pm$ 0.014 & 0.079 $\pm$ 0.005 & 0.537 $\pm$ 0.017 \\
 & Cov.\ violated & \textbf{0.042 $\pm$ 0.018} & 0.176 $\pm$ 0.017 & 0.470 $\pm$ 0.021 & 0.468 $\pm$ 0.020 & 0.985 $\pm$ 0.028 \\
\midrule
\multirow{3}{*}{Fork$_{\text{LIN}}$} & Full pos. & 0.037 $\pm$ 0.002 & 0.035 $\pm$ 0.004 & \textbf{0.034 $\pm$ 0.005} & 0.052 $\pm$ 0.007 & 0.073 $\pm$ 0.042 \\
 & Cov.\ hold & 0.034 $\pm$ 0.002 & 0.034 $\pm$ 0.004 & 0.052 $\pm$ 0.008 & \textbf{0.021 $\pm$ 0.003} & 0.584 $\pm$ 0.049 \\
 & Cov.\ violated & 0.042 $\pm$ 0.007 & \textbf{0.032 $\pm$ 0.003} & 0.051 $\pm$ 0.010 & 0.084 $\pm$ 0.018 & 0.526 $\pm$ 0.042 \\
\midrule
\multirow{3}{*}{Fork$_{\text{NLIN}}$} & Full pos. & \textbf{0.068 $\pm$ 0.014} & 1.94 $\pm$ 0.094 & 0.120 $\pm$ 0.050 & 0.211 $\pm$ 0.014 & 0.580 $\pm$ 0.218 \\
 & Cov.\ hold & \textbf{0.063 $\pm$ 0.018} & 1.97 $\pm$ 0.093 & 0.429 $\pm$ 0.040 & 0.686 $\pm$ 0.099 & 0.899 $\pm$ 0.046 \\
 & Cov.\ violated & 0.366 $\pm$ 0.195 & 1.49 $\pm$ 0.072 & 0.731 $\pm$ 0.030 & \textbf{0.345 $\pm$ 0.077} & 0.438 $\pm$ 0.078 \\
\bottomrule
\end{tabular}
\caption{RMSE$_{\text{ATE}}$ across all 4 SCMs under 3 settings. Values are mean $\pm$ std over seeds. Lower is better. Bold = best,  within each row.}
\label{tab:tier2-tier3-ate}
\end{table}

\begin{table}[H]
\centering
\scriptsize
\begin{tabular}{llccccc}
\toprule
Task & Setting & MissCNF & MICE & O-MIRACLE & KPI & DiffPuter \\
\midrule
\multirow{3}{*}{Chain$_{\text{LIN}}$} & Full pos. & 0.104 $\pm$ 0.006 & \textbf{0.075 $\pm$ 0.007} & 0.264 $\pm$ 0.058 & 0.166 $\pm$ 0.028 & 0.569 $\pm$ 0.129 \\
 & Cov.\ hold & 0.133 $\pm$ 0.029 & \textbf{0.068 $\pm$ 0.010} & 0.472 $\pm$ 0.009 & 0.277 $\pm$ 0.014 & 1.40 $\pm$ 0.071 \\
 & Cov.\ violated & 0.200 $\pm$ 0.030 & \textbf{0.076 $\pm$ 0.011} & 5.69 $\pm$ 1.62 & 33.4 $\pm$ 18.8 & 92.3 $\pm$ 71.5 \\
\midrule
\multirow{3}{*}{Chain$_{\text{NLIN}}$} & Full pos. & \textbf{0.057 $\pm$ 0.004} & 0.207 $\pm$ 0.017 & 0.163 $\pm$ 0.033 & 0.163 $\pm$ 0.008 & 0.267 $\pm$ 0.129 \\
 & Cov.\ hold & \textbf{0.057 $\pm$ 0.010} & 0.204 $\pm$ 0.016 & 0.258 $\pm$ 0.012 & 0.159 $\pm$ 0.012 & 0.707 $\pm$ 0.086 \\
 & Cov.\ violated & \textbf{0.058 $\pm$ 0.011} & 0.296 $\pm$ 0.033 & 0.965 $\pm$ 0.034 & 0.813 $\pm$ 0.034 & 1.52 $\pm$ 0.135 \\
\midrule
\multirow{3}{*}{Fork$_{\text{LIN}}$} & Full pos. & 0.029 $\pm$ 0.003 & \textbf{0.022 $\pm$ 0.002} & 0.063 $\pm$ 0.023 & 0.069 $\pm$ 0.016 & 0.121 $\pm$ 0.045 \\
 & Cov.\ hold & 0.030 $\pm$ 0.003 & \textbf{0.022 $\pm$ 0.002} & 0.068 $\pm$ 0.006 & 0.062 $\pm$ 0.003 & 0.369 $\pm$ 0.011 \\
 & Cov.\ violated & 0.064 $\pm$ 0.005 & \textbf{0.020 $\pm$ 0.003} & 0.180 $\pm$ 0.015 & 0.184 $\pm$ 0.044 & 0.478 $\pm$ 0.053 \\
\midrule
\multirow{3}{*}{Fork$_{\text{NLIN}}$} & Full pos. & \textbf{0.129 $\pm$ 0.010} & 2.32 $\pm$ 0.038 & 0.366 $\pm$ 0.040 & 0.562 $\pm$ 0.030 & 0.824 $\pm$ 0.248 \\
 & Cov.\ hold & \textbf{0.153 $\pm$ 0.017} & 1.97 $\pm$ 0.040 & 0.264 $\pm$ 0.018 & 0.424 $\pm$ 0.102 & 1.49 $\pm$ 0.011 \\
 & Cov.\ violated & \textbf{0.407 $\pm$ 0.156} & 2.05 $\pm$ 0.067 & 1.93 $\pm$ 0.022 & 2.04 $\pm$ 0.193 & 2.39 $\pm$ 0.067 \\
\bottomrule
\end{tabular}
\caption{RMSE$_{\text{CF}}$ across all 4 SCMs under 3 settings. Values are mean $\pm$ std over seeds. Lower is better. Bold = best,  within each row.}
\label{tab:tier2-tier3-cf}
\end{table}

\begin{figure}[H]
    \centering
    \includegraphics[width=1.0\linewidth]
    {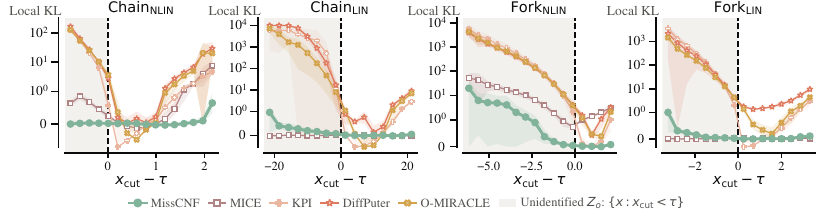}
    \caption{
    Local KL error (defined in Appendix~\ref{app:local-kl}) as a function of the distance to the identification boundary. The dashed line denotes the boundary $x_{\mathrm{rel}}=0$. Negative values correspond to the revealed side, while positive values lie inside the non-identified region $Z_o$. Curves show mean $\pm$ std over seeds.
    }
    \label{fig:tier3_boundary}
\end{figure}

\subsubsection{MNAR Missingness (Section~\ref{sec:mnar})}
\label{app:full_results_mnar}
\begin{table}[H]
\centering
\scriptsize
\setlength{\tabcolsep}{2.5pt}
\begin{tabular}{lcccccccc}
\toprule
Method & Chain$_{\text{LIN}}$ & Fork$_{\text{LIN}}$ & Collider$_{\text{LIN}}$ & Triangle$_{\text{LIN}}$ & Chain$_{\text{NLIN}}$ & Fork$_{\text{NLIN}}$ & Collider$_{\text{NLIN}}$ & Triangle$_{\text{NLIN}}$ \\
\midrule
\multicolumn{9}{l}{\textit{MNAR (self-masking), 30\%}} \\
MICE & \textbf{0.020}{\tiny$\pm$0.012} & \textbf{0.013}{\tiny$\pm$0.004} & \textbf{0.035}{\tiny$\pm$0.026} & \textbf{0.008}{\tiny$\pm$0.006} & 0.446{\tiny$\pm$0.204} & 9.367{\tiny$\pm$0.115} & \underline{0.087}{\tiny$\pm$0.050} & 9.131{\tiny$\pm$0.391} \\
KPI & 0.132{\tiny$\pm$0.026} & 0.383{\tiny$\pm$0.031} & 0.131{\tiny$\pm$0.056} & 0.121{\tiny$\pm$0.024} & \underline{0.261}{\tiny$\pm$0.149} & 0.678{\tiny$\pm$0.047} & 0.149{\tiny$\pm$0.070} & 2.951{\tiny$\pm$0.308} \\
O-MIRACLE & 0.210{\tiny$\pm$0.030} & 0.204{\tiny$\pm$0.063} & 0.152{\tiny$\pm$0.062} & 0.168{\tiny$\pm$0.025} & 0.312{\tiny$\pm$0.167} & \underline{0.426}{\tiny$\pm$0.070} & 0.171{\tiny$\pm$0.075} & \underline{0.921}{\tiny$\pm$0.212} \\
\textbf{MissCNF (ours)} & \underline{0.028}{\tiny$\pm$0.014} & \underline{0.021}{\tiny$\pm$0.014} & \underline{0.037}{\tiny$\pm$0.025} & \underline{0.014}{\tiny$\pm$0.008} & \textbf{0.054}{\tiny$\pm$0.033} & \textbf{0.022}{\tiny$\pm$0.005} & \textbf{0.060}{\tiny$\pm$0.040} & \textbf{0.078}{\tiny$\pm$0.043} \\
\addlinespace
\multicolumn{9}{l}{\textit{MNAR (self-masking), 60\%}} \\
MICE & \textbf{0.058}{\tiny$\pm$0.046} & \textbf{0.030}{\tiny$\pm$0.010} & \textbf{0.110}{\tiny$\pm$0.080} & \textbf{0.014}{\tiny$\pm$0.010} & 0.931{\tiny$\pm$0.279} & 19.116{\tiny$\pm$0.236} & \underline{0.251}{\tiny$\pm$0.146} & 16.060{\tiny$\pm$0.589} \\
KPI & 0.685{\tiny$\pm$0.220} & 0.997{\tiny$\pm$0.158} & 0.731{\tiny$\pm$0.349} & 0.650{\tiny$\pm$0.268} & \underline{0.687}{\tiny$\pm$0.301} & \underline{1.574}{\tiny$\pm$0.136} & 0.785{\tiny$\pm$0.399} & 5.596{\tiny$\pm$0.778} \\
O-MIRACLE & 3.808{\tiny$\pm$4.190} & 1.011{\tiny$\pm$0.386} & 0.956{\tiny$\pm$0.399} & 3.936{\tiny$\pm$4.829} & 2.749{\tiny$\pm$2.780} & 1.700{\tiny$\pm$0.423} & 0.921{\tiny$\pm$0.372} & \underline{3.839}{\tiny$\pm$2.393} \\
\textbf{MissCNF (ours)} & \underline{0.083}{\tiny$\pm$0.052} & \underline{0.048}{\tiny$\pm$0.025} & \underline{0.117}{\tiny$\pm$0.094} & \underline{0.031}{\tiny$\pm$0.016} & \textbf{0.121}{\tiny$\pm$0.042} & \textbf{0.066}{\tiny$\pm$0.030} & \textbf{0.177}{\tiny$\pm$0.127} & \textbf{0.288}{\tiny$\pm$0.280} \\
\addlinespace
\bottomrule
\end{tabular}
\caption{KL across all 8 SCMs under self-masking MNAR at two missing rates. Values are mean $\pm$ std over seeds. Lower is better. Bold = best, underline = second-best, within each row-block (missing rate).}
\label{tab:mnar_kl_distance_mean}
\end{table}

\begin{table}[H]
\centering
\scriptsize
\setlength{\tabcolsep}{2.5pt}
\begin{tabular}{lcccccccc}
\toprule
Method & Chain$_{\text{LIN}}$ & Fork$_{\text{LIN}}$ & Collider$_{\text{LIN}}$ & Triangle$_{\text{LIN}}$ & Chain$_{\text{NLIN}}$ & Fork$_{\text{NLIN}}$ & Collider$_{\text{NLIN}}$ & Triangle$_{\text{NLIN}}$ \\
\midrule
\multicolumn{9}{l}{\textit{MNAR (self-masking), 30\%}} \\
MICE & \underline{0.086}{\tiny$\pm$0.033} & \textbf{0.034}{\tiny$\pm$0.003} & \textbf{0.014}{\tiny$\pm$0.006} & \textbf{0.312}{\tiny$\pm$0.076} & 0.077{\tiny$\pm$0.040} & 1.385{\tiny$\pm$0.032} & 0.029{\tiny$\pm$0.006} & 1.698{\tiny$\pm$0.101} \\
KPI & 0.100{\tiny$\pm$0.045} & 0.046{\tiny$\pm$0.005} & 0.015{\tiny$\pm$0.005} & 0.872{\tiny$\pm$0.058} & 0.113{\tiny$\pm$0.014} & 0.192{\tiny$\pm$0.081} & \textbf{0.024}{\tiny$\pm$0.008} & 1.304{\tiny$\pm$0.274} \\
O-MIRACLE & \textbf{0.083}{\tiny$\pm$0.045} & \underline{0.035}{\tiny$\pm$0.004} & \underline{0.014}{\tiny$\pm$0.006} & \underline{0.349}{\tiny$\pm$0.097} & \underline{0.054}{\tiny$\pm$0.024} & \underline{0.163}{\tiny$\pm$0.062} & \underline{0.025}{\tiny$\pm$0.004} & \underline{0.477}{\tiny$\pm$0.116} \\
\textbf{MissCNF (ours)} & 0.086{\tiny$\pm$0.037} & 0.035{\tiny$\pm$0.004} & 0.015{\tiny$\pm$0.006} & 0.359{\tiny$\pm$0.117} & \textbf{0.049}{\tiny$\pm$0.015} & \textbf{0.074}{\tiny$\pm$0.032} & 0.025{\tiny$\pm$0.012} & \textbf{0.216}{\tiny$\pm$0.092} \\
\addlinespace
\multicolumn{9}{l}{\textit{MNAR (self-masking), 60\%}} \\
MICE & \textbf{0.098}{\tiny$\pm$0.070} & \textbf{0.035}{\tiny$\pm$0.004} & \underline{0.018}{\tiny$\pm$0.007} & \textbf{0.241}{\tiny$\pm$0.090} & 0.097{\tiny$\pm$0.025} & 2.068{\tiny$\pm$0.078} & 0.049{\tiny$\pm$0.007} & 2.185{\tiny$\pm$0.100} \\
KPI & 0.125{\tiny$\pm$0.048} & 0.055{\tiny$\pm$0.005} & 0.022{\tiny$\pm$0.014} & 1.490{\tiny$\pm$0.033} & 0.137{\tiny$\pm$0.018} & \underline{0.286}{\tiny$\pm$0.108} & 0.033{\tiny$\pm$0.013} & 2.833{\tiny$\pm$0.146} \\
O-MIRACLE & 0.140{\tiny$\pm$0.062} & 0.046{\tiny$\pm$0.010} & \textbf{0.016}{\tiny$\pm$0.010} & 0.498{\tiny$\pm$0.223} & \textbf{0.052}{\tiny$\pm$0.026} & 0.417{\tiny$\pm$0.196} & \underline{0.027}{\tiny$\pm$0.006} & \underline{0.770}{\tiny$\pm$0.203} \\
\textbf{MissCNF (ours)} & \underline{0.106}{\tiny$\pm$0.064} & \underline{0.042}{\tiny$\pm$0.013} & 0.019{\tiny$\pm$0.016} & \underline{0.440}{\tiny$\pm$0.296} & \underline{0.076}{\tiny$\pm$0.026} & \textbf{0.136}{\tiny$\pm$0.033} & \textbf{0.025}{\tiny$\pm$0.010} & \textbf{0.283}{\tiny$\pm$0.073} \\
\addlinespace
\bottomrule
\end{tabular}
\caption{RMSE$_{\text{ATE}}$ across all 8 SCMs under self-masking MNAR at two missing rates. Values are mean $\pm$ std over seeds. Lower is better. Bold = best, underline = second-best, within each row-block (missing rate).}
\label{tab:mnar_rmse_ate_mean_mean}
\end{table}

\begin{table}[H]
\centering
\scriptsize
\setlength{\tabcolsep}{2.5pt}
\begin{tabular}{lcccccccc}
\toprule
Method & Chain$_{\text{LIN}}$ & Fork$_{\text{LIN}}$ & Collider$_{\text{LIN}}$ & Triangle$_{\text{LIN}}$ & Chain$_{\text{NLIN}}$ & Fork$_{\text{NLIN}}$ & Collider$_{\text{NLIN}}$ & Triangle$_{\text{NLIN}}$ \\
\midrule
\multicolumn{9}{l}{\textit{MNAR (self-masking), 30\%}} \\
MICE & \textbf{0.090}{\tiny$\pm$0.033} & \textbf{0.022}{\tiny$\pm$0.001} & \textbf{0.019}{\tiny$\pm$0.004} & \textbf{0.268}{\tiny$\pm$0.053} & 0.149{\tiny$\pm$0.009} & 1.654{\tiny$\pm$0.027} & 0.091{\tiny$\pm$0.015} & 1.928{\tiny$\pm$0.057} \\
KPI & 0.129{\tiny$\pm$0.040} & 0.039{\tiny$\pm$0.001} & 0.031{\tiny$\pm$0.007} & 0.906{\tiny$\pm$0.082} & 0.132{\tiny$\pm$0.008} & 0.414{\tiny$\pm$0.037} & 0.066{\tiny$\pm$0.020} & 1.634{\tiny$\pm$0.078} \\
O-MIRACLE & 0.146{\tiny$\pm$0.048} & 0.029{\tiny$\pm$0.004} & 0.025{\tiny$\pm$0.004} & 0.330{\tiny$\pm$0.056} & \underline{0.104}{\tiny$\pm$0.019} & \underline{0.325}{\tiny$\pm$0.031} & \underline{0.053}{\tiny$\pm$0.011} & \underline{0.920}{\tiny$\pm$0.190} \\
\textbf{MissCNF (ours)} & \underline{0.101}{\tiny$\pm$0.034} & \underline{0.025}{\tiny$\pm$0.001} & \underline{0.023}{\tiny$\pm$0.003} & \underline{0.328}{\tiny$\pm$0.096} & \textbf{0.059}{\tiny$\pm$0.003} & \textbf{0.113}{\tiny$\pm$0.013} & \textbf{0.045}{\tiny$\pm$0.009} & \textbf{0.317}{\tiny$\pm$0.086} \\
\addlinespace
\multicolumn{9}{l}{\textit{MNAR (self-masking), 60\%}} \\
MICE & \textbf{0.107}{\tiny$\pm$0.058} & \textbf{0.022}{\tiny$\pm$0.002} & \textbf{0.022}{\tiny$\pm$0.003} & \textbf{0.240}{\tiny$\pm$0.072} & 0.242{\tiny$\pm$0.020} & 2.738{\tiny$\pm$0.040} & 0.149{\tiny$\pm$0.017} & 2.866{\tiny$\pm$0.042} \\
KPI & 0.193{\tiny$\pm$0.052} & 0.055{\tiny$\pm$0.003} & 0.062{\tiny$\pm$0.009} & 1.564{\tiny$\pm$0.032} & \underline{0.199}{\tiny$\pm$0.022} & 0.634{\tiny$\pm$0.049} & 0.140{\tiny$\pm$0.029} & 2.829{\tiny$\pm$0.161} \\
O-MIRACLE & 0.326{\tiny$\pm$0.081} & 0.043{\tiny$\pm$0.010} & 0.039{\tiny$\pm$0.014} & 0.632{\tiny$\pm$0.307} & 0.200{\tiny$\pm$0.033} & \underline{0.567}{\tiny$\pm$0.069} & \underline{0.075}{\tiny$\pm$0.030} & \underline{1.314}{\tiny$\pm$0.120} \\
\textbf{MissCNF (ours)} & \underline{0.134}{\tiny$\pm$0.051} & \underline{0.032}{\tiny$\pm$0.003} & \underline{0.031}{\tiny$\pm$0.011} & \underline{0.437}{\tiny$\pm$0.244} & \textbf{0.082}{\tiny$\pm$0.015} & \textbf{0.178}{\tiny$\pm$0.032} & \textbf{0.052}{\tiny$\pm$0.010} & \textbf{0.390}{\tiny$\pm$0.054} \\
\addlinespace
\bottomrule
\end{tabular}
\caption{RMSE$_{\text{CF}}$ across all 8 SCMs under self-masking MNAR at two missing rates. Values are mean $\pm$ std over seeds. Lower is better. Bold = best, underline = second-best, within each row-block (missing rate).}
\label{tab:mnar_rmse_cf_mean_mean}
\end{table}

\subsection{Monte Carlo Approximation and Computational Cost (Section~\ref{sec:mc-computation})}
\label{app:full_results_mc_ablation}

\begin{table}[H]
\centering
\scriptsize
\setlength{\tabcolsep}{2.5pt}
\begin{tabular}{rcccccccc}
\toprule
 & \multicolumn{4}{c}{Fork$_{\text{NLIN}}$} & \multicolumn{4}{c}{Triangle$_{\text{NLIN}}$} \\
\cmidrule(lr){2-5}\cmidrule(lr){6-9}
$K$ & KL & RMSE$_{\text{ATE}}$ & RMSE$_{\text{CF}}$ & Time (s) & KL & RMSE$_{\text{ATE}}$ & RMSE$_{\text{CF}}$ & Time (s) \\
\midrule
32 & 0.054{\tiny$\pm$0.015} & 0.078{\tiny$\pm$0.026} & 0.170{\tiny$\pm$0.012} & 272{\tiny$\pm$3} & 0.086{\tiny$\pm$0.028} & 0.248{\tiny$\pm$0.118} & 0.433{\tiny$\pm$0.102} & 272{\tiny$\pm$6} \\
64 & 0.041{\tiny$\pm$0.015} & 0.085{\tiny$\pm$0.028} & 0.157{\tiny$\pm$0.006} & 273{\tiny$\pm$4} & 0.066{\tiny$\pm$0.023} & 0.190{\tiny$\pm$0.086} & 0.343{\tiny$\pm$0.073} & 270{\tiny$\pm$4} \\
128 & 0.036{\tiny$\pm$0.015} & 0.083{\tiny$\pm$0.024} & 0.154{\tiny$\pm$0.006} & 284{\tiny$\pm$4} & 0.055{\tiny$\pm$0.022} & 0.251{\tiny$\pm$0.093} & 0.374{\tiny$\pm$0.116} & 284{\tiny$\pm$4} \\
256 & 0.032{\tiny$\pm$0.012} & 0.076{\tiny$\pm$0.024} & 0.149{\tiny$\pm$0.005} & 328{\tiny$\pm$4} & 0.056{\tiny$\pm$0.023} & 0.193{\tiny$\pm$0.053} & 0.322{\tiny$\pm$0.019} & 329{\tiny$\pm$4} \\
512 & 0.032{\tiny$\pm$0.011} & 0.083{\tiny$\pm$0.024} & 0.147{\tiny$\pm$0.004} & 416{\tiny$\pm$4} & 0.052{\tiny$\pm$0.032} & 0.190{\tiny$\pm$0.043} & 0.325{\tiny$\pm$0.055} & 426{\tiny$\pm$13} \\
1024 & 0.034{\tiny$\pm$0.011} & 0.082{\tiny$\pm$0.021} & 0.150{\tiny$\pm$0.005} & 602{\tiny$\pm$21} & 0.058{\tiny$\pm$0.035} & 0.181{\tiny$\pm$0.059} & 0.328{\tiny$\pm$0.045} & 607{\tiny$\pm$9} \\
\bottomrule
\end{tabular}
\caption{Monte Carlo ablation on Fork$_{\text{NLIN}}$ and Triangle$_{\text{NLIN}}$. $K$ is the number of Monte Carlo samples used to estimate the observed-data marginal likelihood. Values are mean $\pm$ std over seeds. Time is total training time in seconds.}
\label{tab:mc_ablation_full}
\end{table}

\section{Additional experiments}
\label{app:add_exp}

\subsection{Controlled Nonlinearity Study (mentioned in Section~\ref{sec:exp1})}
\label{app:full_results_nonlinearity}
We construct a controlled SCM in which the degree of nonlinearity can be varied continuously while keeping the causal graph and noise distribution fixed.
We use the $\text{Chain}_\alpha$ and $\text{Collider}_\alpha$ defined in Appendix~\ref{app:dataset_nonlinearity} where $\alpha$ controls the nonlinearity of the SCM. At $\alpha=0$, the SCM is exactly linear, while at $\alpha=1$, it is purely quadratic. We evaluate $\alpha\in\{0,0.25,0.5,0.75,1\}$.
We consider MCAR and MAR with missing rates of $30\%$ and $60\%$, using the same training setup and baselines as in the previous experiment, except we only run for 3 seeds. We exclude Mean imputation and MissForest due to their poor performance. 

\begin{figure}[H]
    \centering
    \includegraphics[]{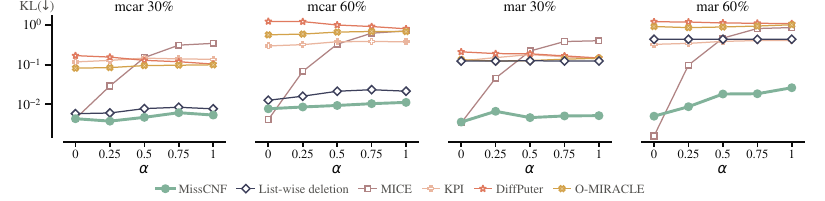}
    \caption{KL to the ground-truth distribution as $\alpha$ increases, on $\text{Chain}_\alpha$ under MCAR and MAR at 30\% and 60\% missing rates (mean ± std over seeds). $\alpha$ interpolates the $X_2 \rightarrow X_3$ mechanism from purely linear ($\alpha=0$) to purely quadratic ($\alpha=1$).}
    \label{appfig:tieralpha_chain-alpha_kl_distance}
\end{figure}

Figure~\ref{appfig:tieralpha_chain-alpha_kl_distance} shows the KL divergence, while full metrics are reported in Tables~\ref{tab:alpha_chain_alpha_kl_distance_mean}, ~\ref{tab:alpha_chain_alpha_rmse_ate_mean_mean} and ~\ref{tab:alpha_chain_alpha_rmse_cf_mean_mean} (Appendix~\ref{app:full_results_nonlinearity}). At $\alpha=0$ (fully linear), MICE matches or outperforms MissCNF. Once nonlinearity is introduced, its KL divergence increases rapidly, whereas MissCNF remains stable. Under MAR with $30\%$ missingness, the ratio between the KL divergence of MICE and MissCNF increases from $0.95\times$ at $\alpha=0$ to $6.9\times$, $48.0\times$, $75.6\times$, and $78.0\times$ at $\alpha=0.25,0.5,0.75,$ and $1$, respectively. The same qualitative behavior is observed under MCAR and at the higher missing rate. This controlled experiment therefore further confirms the linear--nonlinear behaviour in the previous benchmark. The same crossover appears in ATE and RMSE$_{\text{CF}}$, though the gap is smaller (up to $3.2\times$ and $6.9\times$, respectively, at $\alpha=1$) since these estimands depend only on specific functionals of the joint distribution rather than the full quantity. The remaining baselines (KPI, DiffPuter, O-MIRACLE) make no linearity assumption, and their KL divergence stays comparatively flat across $\alpha$, albeit at a worse level. We show full results in the tables below, along with a similar study on $\text{Collider}_\alpha$.

\begin{table}[H]
\centering
\scriptsize
\setlength{\tabcolsep}{4pt}
\begin{tabular}{lccccc}
\toprule
Method & $\alpha=0.00$ & $\alpha=0.25$ & $\alpha=0.50$ & $\alpha=0.75$ & $\alpha=1.00$ \\
\midrule
\multicolumn{6}{l}{\textit{MCAR, 30\%}} \\
Listwise deletion & 0.006{\tiny$\pm$0.002} & \underline{0.006}{\tiny$\pm$0.001} & \underline{0.008}{\tiny$\pm$0.002} & \underline{0.008}{\tiny$\pm$0.003} & \underline{0.008}{\tiny$\pm$0.001} \\
MICE & \underline{0.004}{\tiny$\pm$0.001} & 0.029{\tiny$\pm$0.004} & 0.151{\tiny$\pm$0.013} & 0.312{\tiny$\pm$0.021} & 0.345{\tiny$\pm$0.021} \\
KPI & 0.118{\tiny$\pm$0.012} & 0.128{\tiny$\pm$0.014} & 0.146{\tiny$\pm$0.014} & 0.140{\tiny$\pm$0.014} & 0.137{\tiny$\pm$0.016} \\
DiffPuter & 0.169{\tiny$\pm$0.007} & 0.156{\tiny$\pm$0.006} & 0.129{\tiny$\pm$0.020} & 0.119{\tiny$\pm$0.015} & 0.104{\tiny$\pm$0.012} \\
O-MIRACLE & 0.082{\tiny$\pm$0.004} & 0.084{\tiny$\pm$0.006} & 0.095{\tiny$\pm$0.011} & 0.097{\tiny$\pm$0.011} & 0.099{\tiny$\pm$0.008} \\
\textbf{MissCNF (ours)} & \textbf{0.004}{\tiny$\pm$0.001} & \textbf{0.004}{\tiny$\pm$0.001} & \textbf{0.005}{\tiny$\pm$0.002} & \textbf{0.006}{\tiny$\pm$0.003} & \textbf{0.005}{\tiny$\pm$0.002} \\
\addlinespace
\multicolumn{6}{l}{\textit{MCAR, 60\%}} \\
Listwise deletion & 0.013{\tiny$\pm$0.001} & \underline{0.016}{\tiny$\pm$0.001} & \underline{0.021}{\tiny$\pm$0.004} & \underline{0.023}{\tiny$\pm$0.002} & \underline{0.021}{\tiny$\pm$0.004} \\
MICE & \textbf{0.004}{\tiny$\pm$0.002} & 0.068{\tiny$\pm$0.003} & 0.330{\tiny$\pm$0.020} & 0.624{\tiny$\pm$0.023} & 0.714{\tiny$\pm$0.020} \\
KPI & 0.297{\tiny$\pm$0.025} & 0.321{\tiny$\pm$0.022} & 0.380{\tiny$\pm$0.027} & 0.386{\tiny$\pm$0.016} & 0.378{\tiny$\pm$0.009} \\
DiffPuter & 1.242{\tiny$\pm$0.148} & 1.224{\tiny$\pm$0.096} & 1.006{\tiny$\pm$0.024} & 0.920{\tiny$\pm$0.096} & 0.806{\tiny$\pm$0.043} \\
O-MIRACLE & 0.567{\tiny$\pm$0.015} & 0.593{\tiny$\pm$0.016} & 0.666{\tiny$\pm$0.041} & 0.689{\tiny$\pm$0.042} & 0.686{\tiny$\pm$0.026} \\
\textbf{MissCNF (ours)} & \underline{0.008}{\tiny$\pm$0.002} & \textbf{0.009}{\tiny$\pm$0.002} & \textbf{0.009}{\tiny$\pm$0.002} & \textbf{0.010}{\tiny$\pm$0.002} & \textbf{0.011}{\tiny$\pm$0.002} \\
\addlinespace

\multicolumn{6}{l}{\textit{MAR, 30\%}} \\
Listwise deletion & 0.123{\tiny$\pm$0.023} & 0.124{\tiny$\pm$0.026} & \underline{0.125}{\tiny$\pm$0.024} & \underline{0.124}{\tiny$\pm$0.025} & \underline{0.124}{\tiny$\pm$0.025} \\
MICE & \textbf{0.003}{\tiny$\pm$0.002} & \underline{0.046}{\tiny$\pm$0.003} & 0.221{\tiny$\pm$0.006} & 0.384{\tiny$\pm$0.006} & 0.403{\tiny$\pm$0.014} \\
KPI & 0.125{\tiny$\pm$0.004} & 0.151{\tiny$\pm$0.004} & 0.180{\tiny$\pm$0.006} & 0.158{\tiny$\pm$0.009} & 0.140{\tiny$\pm$0.008} \\
DiffPuter & 0.211{\tiny$\pm$0.043} & 0.191{\tiny$\pm$0.047} & 0.188{\tiny$\pm$0.044} & 0.167{\tiny$\pm$0.028} & 0.147{\tiny$\pm$0.031} \\
O-MIRACLE & 0.134{\tiny$\pm$0.032} & 0.124{\tiny$\pm$0.035} & 0.127{\tiny$\pm$0.034} & 0.138{\tiny$\pm$0.039} & 0.145{\tiny$\pm$0.036} \\
\textbf{MissCNF (ours)} & \underline{0.004}{\tiny$\pm$0.001} & \textbf{0.007}{\tiny$\pm$0.002} & \textbf{0.005}{\tiny$\pm$0.001} & \textbf{0.005}{\tiny$\pm$0.002} & \textbf{0.005}{\tiny$\pm$0.001} \\
\addlinespace
\multicolumn{6}{l}{\textit{MAR, 60\%}} \\
Listwise deletion & 0.437{\tiny$\pm$0.166} & 0.437{\tiny$\pm$0.167} & 0.438{\tiny$\pm$0.166} & 0.436{\tiny$\pm$0.168} & 0.439{\tiny$\pm$0.166} \\
MICE & \textbf{0.002}{\tiny$\pm$0.001} & \underline{0.098}{\tiny$\pm$0.007} & 0.469{\tiny$\pm$0.030} & 0.819{\tiny$\pm$0.054} & 0.865{\tiny$\pm$0.053} \\
KPI & 0.326{\tiny$\pm$0.009} & 0.345{\tiny$\pm$0.012} & \underline{0.388}{\tiny$\pm$0.016} & \underline{0.406}{\tiny$\pm$0.024} & \underline{0.411}{\tiny$\pm$0.031} \\
DiffPuter & 1.212{\tiny$\pm$0.135} & 1.197{\tiny$\pm$0.125} & 1.129{\tiny$\pm$0.257} & 1.114{\tiny$\pm$0.194} & 1.085{\tiny$\pm$0.116} \\
O-MIRACLE & 0.920{\tiny$\pm$0.148} & 0.863{\tiny$\pm$0.156} & 0.896{\tiny$\pm$0.169} & 0.946{\tiny$\pm$0.194} & 1.020{\tiny$\pm$0.178} \\
\textbf{MissCNF (ours)} & \underline{0.005}{\tiny$\pm$0.002} & \textbf{0.009}{\tiny$\pm$0.002} & \textbf{0.018}{\tiny$\pm$0.015} & \textbf{0.018}{\tiny$\pm$0.011} & \textbf{0.026}{\tiny$\pm$0.022} \\
\addlinespace
\bottomrule
\end{tabular}
\caption{KL vs.\ nonlinearity strength $\alpha$ on the $\text{Chain}_\alpha$ graph, both missingness mechanisms and two missing rates. Values are mean $\pm$ std over seeds. Lower is better. Bold = best, underline = second-best, within each row-block (mechanism $\times$ rate).}
\label{tab:alpha_chain_alpha_kl_distance_mean}
\end{table}

\begin{table}[H]
\centering
\scriptsize
\setlength{\tabcolsep}{4pt}
\begin{tabular}{lccccc}
\toprule
Method & $\alpha=0.00$ & $\alpha=0.25$ & $\alpha=0.50$ & $\alpha=0.75$ & $\alpha=1.00$ \\
\midrule
\multicolumn{6}{l}{\textit{MCAR, 30\%}} \\
Listwise deletion & 0.027{\tiny$\pm$0.006} & 0.027{\tiny$\pm$0.008} & \underline{0.025}{\tiny$\pm$0.011} & \underline{0.022}{\tiny$\pm$0.009} & \underline{0.023}{\tiny$\pm$0.006} \\
MICE & \textbf{0.021}{\tiny$\pm$0.006} & \underline{0.020}{\tiny$\pm$0.003} & 0.030{\tiny$\pm$0.007} & 0.041{\tiny$\pm$0.006} & 0.048{\tiny$\pm$0.007} \\
KPI & 0.086{\tiny$\pm$0.016} & 0.083{\tiny$\pm$0.011} & 0.064{\tiny$\pm$0.006} & 0.038{\tiny$\pm$0.008} & 0.029{\tiny$\pm$0.008} \\
DiffPuter & 0.039{\tiny$\pm$0.013} & 0.038{\tiny$\pm$0.011} & 0.042{\tiny$\pm$0.015} & 0.038{\tiny$\pm$0.009} & 0.037{\tiny$\pm$0.006} \\
O-MIRACLE & \underline{0.021}{\tiny$\pm$0.005} & \textbf{0.020}{\tiny$\pm$0.005} & \textbf{0.019}{\tiny$\pm$0.005} & \textbf{0.018}{\tiny$\pm$0.005} & \textbf{0.019}{\tiny$\pm$0.003} \\
\textbf{MissCNF (ours)} & 0.030{\tiny$\pm$0.007} & 0.028{\tiny$\pm$0.007} & 0.026{\tiny$\pm$0.010} & 0.027{\tiny$\pm$0.008} & 0.027{\tiny$\pm$0.006} \\
\addlinespace
\multicolumn{6}{l}{\textit{MCAR, 60\%}} \\
Listwise deletion & 0.044{\tiny$\pm$0.002} & 0.045{\tiny$\pm$0.001} & 0.047{\tiny$\pm$0.003} & 0.050{\tiny$\pm$0.007} & 0.049{\tiny$\pm$0.006} \\
MICE & \underline{0.023}{\tiny$\pm$0.004} & \underline{0.029}{\tiny$\pm$0.007} & 0.046{\tiny$\pm$0.004} & 0.061{\tiny$\pm$0.003} & 0.070{\tiny$\pm$0.002} \\
KPI & 0.109{\tiny$\pm$0.024} & 0.105{\tiny$\pm$0.021} & 0.080{\tiny$\pm$0.015} & 0.045{\tiny$\pm$0.013} & 0.036{\tiny$\pm$0.008} \\
DiffPuter & 0.050{\tiny$\pm$0.018} & 0.057{\tiny$\pm$0.028} & 0.063{\tiny$\pm$0.011} & 0.071{\tiny$\pm$0.022} & 0.063{\tiny$\pm$0.007} \\
O-MIRACLE & \textbf{0.017}{\tiny$\pm$0.001} & \textbf{0.019}{\tiny$\pm$0.001} & \textbf{0.019}{\tiny$\pm$0.004} & \textbf{0.018}{\tiny$\pm$0.003} & \textbf{0.018}{\tiny$\pm$0.004} \\
\textbf{MissCNF (ours)} & 0.034{\tiny$\pm$0.005} & 0.035{\tiny$\pm$0.005} & \underline{0.029}{\tiny$\pm$0.008} & \underline{0.029}{\tiny$\pm$0.006} & \underline{0.026}{\tiny$\pm$0.006} \\
\addlinespace

\multicolumn{6}{l}{\textit{MAR, 30\%}} \\
Listwise deletion & \underline{0.021}{\tiny$\pm$0.007} & \textbf{0.020}{\tiny$\pm$0.004} & \underline{0.020}{\tiny$\pm$0.002} & \underline{0.020}{\tiny$\pm$0.004} & \underline{0.020}{\tiny$\pm$0.005} \\
MICE & 0.026{\tiny$\pm$0.005} & 0.023{\tiny$\pm$0.010} & 0.037{\tiny$\pm$0.010} & 0.055{\tiny$\pm$0.011} & 0.053{\tiny$\pm$0.013} \\
KPI & 0.084{\tiny$\pm$0.014} & 0.075{\tiny$\pm$0.013} & 0.055{\tiny$\pm$0.015} & 0.035{\tiny$\pm$0.010} & 0.033{\tiny$\pm$0.010} \\
DiffPuter & 0.043{\tiny$\pm$0.003} & 0.048{\tiny$\pm$0.019} & 0.071{\tiny$\pm$0.040} & 0.069{\tiny$\pm$0.022} & 0.067{\tiny$\pm$0.021} \\
O-MIRACLE & \textbf{0.019}{\tiny$\pm$0.004} & \underline{0.020}{\tiny$\pm$0.007} & 0.021{\tiny$\pm$0.008} & 0.021{\tiny$\pm$0.008} & 0.020{\tiny$\pm$0.007} \\
\textbf{MissCNF (ours)} & 0.022{\tiny$\pm$0.006} & 0.024{\tiny$\pm$0.009} & \textbf{0.017}{\tiny$\pm$0.003} & \textbf{0.016}{\tiny$\pm$0.002} & \textbf{0.016}{\tiny$\pm$0.001} \\
\addlinespace
\multicolumn{6}{l}{\textit{MAR, 60\%}} \\
Listwise deletion & 0.041{\tiny$\pm$0.007} & 0.040{\tiny$\pm$0.009} & 0.038{\tiny$\pm$0.009} & 0.037{\tiny$\pm$0.005} & 0.032{\tiny$\pm$0.007} \\
MICE & \textbf{0.016}{\tiny$\pm$0.006} & 0.040{\tiny$\pm$0.002} & 0.097{\tiny$\pm$0.014} & 0.137{\tiny$\pm$0.018} & 0.139{\tiny$\pm$0.014} \\
KPI & 0.110{\tiny$\pm$0.010} & 0.102{\tiny$\pm$0.013} & 0.074{\tiny$\pm$0.011} & 0.046{\tiny$\pm$0.013} & 0.046{\tiny$\pm$0.006} \\
DiffPuter & 0.094{\tiny$\pm$0.047} & 0.102{\tiny$\pm$0.069} & 0.113{\tiny$\pm$0.069} & 0.094{\tiny$\pm$0.067} & 0.110{\tiny$\pm$0.057} \\
O-MIRACLE & \underline{0.020}{\tiny$\pm$0.008} & \textbf{0.022}{\tiny$\pm$0.010} & \textbf{0.022}{\tiny$\pm$0.008} & \textbf{0.021}{\tiny$\pm$0.005} & \textbf{0.022}{\tiny$\pm$0.006} \\
\textbf{MissCNF (ours)} & 0.027{\tiny$\pm$0.012} & \underline{0.029}{\tiny$\pm$0.016} & \underline{0.032}{\tiny$\pm$0.008} & \underline{0.028}{\tiny$\pm$0.006} & \underline{0.025}{\tiny$\pm$0.007} \\
\addlinespace
\bottomrule
\end{tabular}
\caption{RMSE$_{\text{ATE}}$ vs.\ nonlinearity strength $\alpha$ on the $\text{Chain}_\alpha$ graph, both missingness mechanisms and two missing rates. Values are mean $\pm$ std over seeds. Lower is better. Bold = best, underline = second-best, within each row-block (mechanism $\times$ rate).}
\label{tab:alpha_chain_alpha_rmse_ate_mean_mean}
\end{table}

\begin{table}[H]
\centering
\scriptsize
\setlength{\tabcolsep}{4pt}
\begin{tabular}{lccccc}
\toprule
Method & $\alpha=0.00$ & $\alpha=0.25$ & $\alpha=0.50$ & $\alpha=0.75$ & $\alpha=1.00$ \\
\midrule
\multicolumn{6}{l}{\textit{MCAR, 30\%}} \\
Listwise deletion & 0.024{\tiny$\pm$0.002} & \textbf{0.026}{\tiny$\pm$0.002} & \textbf{0.027}{\tiny$\pm$0.004} & \textbf{0.027}{\tiny$\pm$0.004} & \textbf{0.026}{\tiny$\pm$0.003} \\
MICE & \textbf{0.020}{\tiny$\pm$0.003} & 0.041{\tiny$\pm$0.001} & 0.085{\tiny$\pm$0.002} & 0.119{\tiny$\pm$0.002} & 0.124{\tiny$\pm$0.002} \\
KPI & 0.085{\tiny$\pm$0.009} & 0.083{\tiny$\pm$0.005} & 0.083{\tiny$\pm$0.002} & 0.080{\tiny$\pm$0.003} & 0.077{\tiny$\pm$0.006} \\
DiffPuter & 0.041{\tiny$\pm$0.010} & 0.042{\tiny$\pm$0.007} & 0.046{\tiny$\pm$0.003} & 0.042{\tiny$\pm$0.007} & 0.041{\tiny$\pm$0.006} \\
O-MIRACLE & \underline{0.023}{\tiny$\pm$0.001} & 0.029{\tiny$\pm$0.002} & 0.038{\tiny$\pm$0.003} & 0.033{\tiny$\pm$0.002} & 0.034{\tiny$\pm$0.006} \\
\textbf{MissCNF (ours)} & 0.027{\tiny$\pm$0.005} & \underline{0.027}{\tiny$\pm$0.003} & \underline{0.028}{\tiny$\pm$0.006} & \underline{0.029}{\tiny$\pm$0.005} & \underline{0.030}{\tiny$\pm$0.004} \\
\addlinespace
\multicolumn{6}{l}{\textit{MCAR, 60\%}} \\
Listwise deletion & 0.044{\tiny$\pm$0.004} & 0.047{\tiny$\pm$0.002} & \underline{0.052}{\tiny$\pm$0.002} & \underline{0.052}{\tiny$\pm$0.006} & \underline{0.053}{\tiny$\pm$0.004} \\
MICE & \textbf{0.022}{\tiny$\pm$0.002} & 0.069{\tiny$\pm$0.003} & 0.148{\tiny$\pm$0.005} & 0.203{\tiny$\pm$0.005} & 0.216{\tiny$\pm$0.005} \\
KPI & 0.125{\tiny$\pm$0.015} & 0.121{\tiny$\pm$0.012} & 0.122{\tiny$\pm$0.009} & 0.120{\tiny$\pm$0.012} & 0.116{\tiny$\pm$0.012} \\
DiffPuter & 0.081{\tiny$\pm$0.007} & 0.089{\tiny$\pm$0.007} & 0.097{\tiny$\pm$0.006} & 0.097{\tiny$\pm$0.018} & 0.084{\tiny$\pm$0.006} \\
O-MIRACLE & \underline{0.023}{\tiny$\pm$0.005} & \underline{0.039}{\tiny$\pm$0.006} & 0.058{\tiny$\pm$0.003} & 0.054{\tiny$\pm$0.004} & 0.053{\tiny$\pm$0.008} \\
\textbf{MissCNF (ours)} & 0.035{\tiny$\pm$0.008} & \textbf{0.037}{\tiny$\pm$0.007} & \textbf{0.037}{\tiny$\pm$0.009} & \textbf{0.037}{\tiny$\pm$0.005} & \textbf{0.034}{\tiny$\pm$0.007} \\
\addlinespace

\multicolumn{6}{l}{\textit{MAR, 30\%}} \\
Listwise deletion & 0.023{\tiny$\pm$0.002} & \textbf{0.024}{\tiny$\pm$0.002} & \underline{0.025}{\tiny$\pm$0.003} & \underline{0.024}{\tiny$\pm$0.004} & \underline{0.025}{\tiny$\pm$0.003} \\
MICE & \textbf{0.022}{\tiny$\pm$0.002} & 0.045{\tiny$\pm$0.004} & 0.091{\tiny$\pm$0.007} & 0.125{\tiny$\pm$0.007} & 0.130{\tiny$\pm$0.007} \\
KPI & 0.088{\tiny$\pm$0.008} & 0.088{\tiny$\pm$0.007} & 0.089{\tiny$\pm$0.010} & 0.082{\tiny$\pm$0.009} & 0.080{\tiny$\pm$0.007} \\
DiffPuter & 0.061{\tiny$\pm$0.015} & 0.063{\tiny$\pm$0.020} & 0.084{\tiny$\pm$0.031} & 0.082{\tiny$\pm$0.022} & 0.076{\tiny$\pm$0.023} \\
O-MIRACLE & 0.050{\tiny$\pm$0.007} & 0.046{\tiny$\pm$0.015} & 0.045{\tiny$\pm$0.022} & 0.048{\tiny$\pm$0.021} & 0.053{\tiny$\pm$0.013} \\
\textbf{MissCNF (ours)} & \underline{0.022}{\tiny$\pm$0.004} & \underline{0.025}{\tiny$\pm$0.003} & \textbf{0.021}{\tiny$\pm$0.002} & \textbf{0.022}{\tiny$\pm$0.002} & \textbf{0.021}{\tiny$\pm$0.002} \\
\addlinespace
\multicolumn{6}{l}{\textit{MAR, 60\%}} \\
Listwise deletion & 0.065{\tiny$\pm$0.023} & \underline{0.066}{\tiny$\pm$0.025} & \underline{0.068}{\tiny$\pm$0.028} & \underline{0.069}{\tiny$\pm$0.030} & \underline{0.066}{\tiny$\pm$0.026} \\
MICE & \textbf{0.016}{\tiny$\pm$0.001} & 0.080{\tiny$\pm$0.002} & 0.179{\tiny$\pm$0.008} & 0.248{\tiny$\pm$0.011} & 0.261{\tiny$\pm$0.009} \\
KPI & 0.126{\tiny$\pm$0.009} & 0.123{\tiny$\pm$0.009} & 0.122{\tiny$\pm$0.010} & 0.123{\tiny$\pm$0.011} & 0.123{\tiny$\pm$0.008} \\
DiffPuter & 0.182{\tiny$\pm$0.041} & 0.190{\tiny$\pm$0.058} & 0.202{\tiny$\pm$0.050} & 0.175{\tiny$\pm$0.048} & 0.170{\tiny$\pm$0.032} \\
O-MIRACLE & 0.106{\tiny$\pm$0.007} & 0.086{\tiny$\pm$0.015} & 0.076{\tiny$\pm$0.027} & 0.088{\tiny$\pm$0.024} & 0.107{\tiny$\pm$0.016} \\
\textbf{MissCNF (ours)} & \underline{0.028}{\tiny$\pm$0.004} & \textbf{0.029}{\tiny$\pm$0.006} & \textbf{0.036}{\tiny$\pm$0.005} & \textbf{0.035}{\tiny$\pm$0.004} & \textbf{0.038}{\tiny$\pm$0.012} \\
\addlinespace
\bottomrule
\end{tabular}
\caption{RMSE$_{\text{CF}}$ vs.\ nonlinearity strength $\alpha$ on the $\text{Chain}_\alpha$ graph, both missingness mechanisms and two missing rates. Values are mean $\pm$ std over seeds. Lower is better. Bold = best, underline = second-best, within each row-block (mechanism $\times$ rate).}
\label{tab:alpha_chain_alpha_rmse_cf_mean_mean}
\end{table}

\begin{table}[H]
\centering
\scriptsize
\setlength{\tabcolsep}{4pt}
\begin{tabular}{lccccc}
\toprule
Method & $\alpha=0.00$ & $\alpha=0.25$ & $\alpha=0.50$ & $\alpha=0.75$ & $\alpha=1.00$ \\
\midrule
\multicolumn{6}{l}{\textit{MCAR, 30\%}} \\
Listwise deletion & \underline{0.004}{\tiny$\pm$0.004} & \textbf{0.007}{\tiny$\pm$0.003} & \textbf{0.009}{\tiny$\pm$0.002} & \textbf{0.010}{\tiny$\pm$0.001} & \textbf{0.008}{\tiny$\pm$0.002} \\
MICE & \textbf{0.003}{\tiny$\pm$0.003} & 0.029{\tiny$\pm$0.004} & 0.168{\tiny$\pm$0.010} & 0.349{\tiny$\pm$0.014} & 0.398{\tiny$\pm$0.016} \\
KPI & 0.055{\tiny$\pm$0.010} & 0.056{\tiny$\pm$0.012} & 0.059{\tiny$\pm$0.010} & 0.061{\tiny$\pm$0.007} & 0.065{\tiny$\pm$0.010} \\
DiffPuter & 0.057{\tiny$\pm$0.007} & 0.060{\tiny$\pm$0.011} & 0.068{\tiny$\pm$0.008} & 0.068{\tiny$\pm$0.010} & 837.056{\tiny$\pm$1871.559} \\
O-MIRACLE & 0.071{\tiny$\pm$0.010} & 0.073{\tiny$\pm$0.011} & 0.075{\tiny$\pm$0.010} & 0.079{\tiny$\pm$0.011} & 0.079{\tiny$\pm$0.008} \\
\textbf{MissCNF (ours)} & 0.004{\tiny$\pm$0.003} & \underline{0.007}{\tiny$\pm$0.004} & \underline{0.011}{\tiny$\pm$0.007} & \underline{0.010}{\tiny$\pm$0.003} & \underline{0.009}{\tiny$\pm$0.001} \\
\addlinespace
\multicolumn{6}{l}{\textit{MCAR, 60\%}} \\
Listwise deletion & \underline{0.007}{\tiny$\pm$0.003} & \underline{0.010}{\tiny$\pm$0.004} & \underline{0.013}{\tiny$\pm$0.003} & \textbf{0.015}{\tiny$\pm$0.004} & \underline{0.016}{\tiny$\pm$0.003} \\
MICE & \textbf{0.005}{\tiny$\pm$0.003} & 0.085{\tiny$\pm$0.009} & 0.457{\tiny$\pm$0.020} & 0.875{\tiny$\pm$0.031} & 0.983{\tiny$\pm$0.032} \\
KPI & 0.321{\tiny$\pm$0.025} & 0.328{\tiny$\pm$0.035} & 0.330{\tiny$\pm$0.031} & 0.340{\tiny$\pm$0.050} & 0.334{\tiny$\pm$0.041} \\
DiffPuter & 0.399{\tiny$\pm$0.032} & 0.402{\tiny$\pm$0.028} & 0.414{\tiny$\pm$0.053} & 0.418{\tiny$\pm$0.039} & 0.417{\tiny$\pm$0.037} \\
O-MIRACLE & 0.472{\tiny$\pm$0.043} & 0.475{\tiny$\pm$0.048} & 0.481{\tiny$\pm$0.052} & 0.489{\tiny$\pm$0.049} & 0.485{\tiny$\pm$0.055} \\
\textbf{MissCNF (ours)} & 0.007{\tiny$\pm$0.005} & \textbf{0.010}{\tiny$\pm$0.005} & \textbf{0.013}{\tiny$\pm$0.005} & \underline{0.015}{\tiny$\pm$0.004} & \textbf{0.013}{\tiny$\pm$0.005} \\
\addlinespace

\multicolumn{6}{l}{\textit{MAR, 30\%}} \\
Listwise deletion & 0.068{\tiny$\pm$0.038} & \underline{0.071}{\tiny$\pm$0.036} & \underline{0.077}{\tiny$\pm$0.039} & \underline{0.078}{\tiny$\pm$0.038} & \underline{0.080}{\tiny$\pm$0.040} \\
MICE & \textbf{0.005}{\tiny$\pm$0.004} & 0.076{\tiny$\pm$0.021} & 0.334{\tiny$\pm$0.077} & 0.617{\tiny$\pm$0.117} & 0.698{\tiny$\pm$0.142} \\
KPI & 0.118{\tiny$\pm$0.049} & 0.130{\tiny$\pm$0.054} & 0.159{\tiny$\pm$0.067} & 0.210{\tiny$\pm$0.124} & 0.176{\tiny$\pm$0.083} \\
DiffPuter & 0.198{\tiny$\pm$0.134} & 0.233{\tiny$\pm$0.162} & 0.268{\tiny$\pm$0.160} & 0.234{\tiny$\pm$0.123} & 0.226{\tiny$\pm$0.136} \\
O-MIRACLE & 0.151{\tiny$\pm$0.059} & 0.149{\tiny$\pm$0.058} & 0.153{\tiny$\pm$0.057} & 0.156{\tiny$\pm$0.053} & 0.155{\tiny$\pm$0.053} \\
\textbf{MissCNF (ours)} & \underline{0.005}{\tiny$\pm$0.003} & \textbf{0.010}{\tiny$\pm$0.004} & \textbf{0.010}{\tiny$\pm$0.002} & \textbf{0.013}{\tiny$\pm$0.005} & \textbf{0.010}{\tiny$\pm$0.004} \\
\addlinespace
\multicolumn{6}{l}{\textit{MAR, 60\%}} \\
Listwise deletion & 0.258{\tiny$\pm$0.130} & 0.262{\tiny$\pm$0.131} & \underline{0.261}{\tiny$\pm$0.140} & \underline{0.264}{\tiny$\pm$0.137} & \underline{0.268}{\tiny$\pm$0.133} \\
MICE & \textbf{0.005}{\tiny$\pm$0.002} & \underline{0.175}{\tiny$\pm$0.058} & 0.787{\tiny$\pm$0.203} & 1.383{\tiny$\pm$0.317} & 1.538{\tiny$\pm$0.337} \\
KPI & 0.810{\tiny$\pm$0.456} & 1.080{\tiny$\pm$0.945} & 1.305{\tiny$\pm$1.366} & 1.414{\tiny$\pm$1.666} & 1.657{\tiny$\pm$2.176} \\
DiffPuter & 1.725{\tiny$\pm$1.762} & 1.329{\tiny$\pm$0.788} & 1.548{\tiny$\pm$1.276} & 2.516{\tiny$\pm$3.531} & 1.015{\tiny$\pm$0.361} \\
O-MIRACLE & 1.680{\tiny$\pm$1.784} & 1.663{\tiny$\pm$1.704} & 1.527{\tiny$\pm$1.370} & 1.434{\tiny$\pm$1.194} & 1.421{\tiny$\pm$1.134} \\
\textbf{MissCNF (ours)} & \underline{0.010}{\tiny$\pm$0.003} & \textbf{0.017}{\tiny$\pm$0.011} & \textbf{0.014}{\tiny$\pm$0.005} & \textbf{0.017}{\tiny$\pm$0.003} & \textbf{0.018}{\tiny$\pm$0.006} \\
\addlinespace
\bottomrule
\end{tabular}
\caption{KL vs.\ nonlinearity strength $\alpha$ on the $\text{Collider}_\alpha$ graph, both missingness mechanisms and two missing rates. Values are mean $\pm$ std over seeds. Lower is better. Bold = best, underline = second-best, within each row-block (mechanism $\times$ rate).}
\label{tab:alpha_collider_alpha_kl_distance_mean}
\end{table}

\begin{table}[H]
\centering
\scriptsize
\setlength{\tabcolsep}{4pt}
\begin{tabular}{lccccc}
\toprule
Method & $\alpha=0.00$ & $\alpha=0.25$ & $\alpha=0.50$ & $\alpha=0.75$ & $\alpha=1.00$ \\
\midrule
\multicolumn{6}{l}{\textit{MCAR, 30\%}} \\
Listwise deletion & 0.017{\tiny$\pm$0.005} & 0.015{\tiny$\pm$0.004} & 0.015{\tiny$\pm$0.003} & 0.015{\tiny$\pm$0.003} & 0.015{\tiny$\pm$0.003} \\
MICE & 0.016{\tiny$\pm$0.003} & 0.015{\tiny$\pm$0.001} & 0.020{\tiny$\pm$0.003} & 0.022{\tiny$\pm$0.005} & 0.026{\tiny$\pm$0.003} \\
KPI & \underline{0.015}{\tiny$\pm$0.005} & 0.014{\tiny$\pm$0.005} & \underline{0.014}{\tiny$\pm$0.005} & 0.016{\tiny$\pm$0.004} & 0.014{\tiny$\pm$0.004} \\
DiffPuter & 0.017{\tiny$\pm$0.005} & 0.015{\tiny$\pm$0.006} & 0.015{\tiny$\pm$0.006} & 0.020{\tiny$\pm$0.009} & 0.531{\tiny$\pm$1.153} \\
O-MIRACLE & \textbf{0.015}{\tiny$\pm$0.003} & \textbf{0.013}{\tiny$\pm$0.003} & \textbf{0.013}{\tiny$\pm$0.003} & \textbf{0.013}{\tiny$\pm$0.004} & \underline{0.013}{\tiny$\pm$0.003} \\
\textbf{MissCNF (ours)} & 0.017{\tiny$\pm$0.003} & \underline{0.013}{\tiny$\pm$0.003} & 0.015{\tiny$\pm$0.005} & \underline{0.013}{\tiny$\pm$0.003} & \textbf{0.013}{\tiny$\pm$0.004} \\
\addlinespace
\multicolumn{6}{l}{\textit{MCAR, 60\%}} \\
Listwise deletion & 0.019{\tiny$\pm$0.004} & 0.018{\tiny$\pm$0.005} & 0.019{\tiny$\pm$0.006} & 0.020{\tiny$\pm$0.006} & 0.018{\tiny$\pm$0.004} \\
MICE & \textbf{0.015}{\tiny$\pm$0.003} & 0.023{\tiny$\pm$0.007} & 0.042{\tiny$\pm$0.007} & 0.054{\tiny$\pm$0.005} & 0.056{\tiny$\pm$0.004} \\
KPI & 0.018{\tiny$\pm$0.005} & 0.020{\tiny$\pm$0.007} & 0.020{\tiny$\pm$0.006} & 0.020{\tiny$\pm$0.005} & 0.019{\tiny$\pm$0.007} \\
DiffPuter & 0.046{\tiny$\pm$0.011} & 0.036{\tiny$\pm$0.012} & 0.026{\tiny$\pm$0.005} & 0.020{\tiny$\pm$0.008} & \textbf{0.015}{\tiny$\pm$0.007} \\
O-MIRACLE & \underline{0.015}{\tiny$\pm$0.006} & \textbf{0.016}{\tiny$\pm$0.007} & \textbf{0.015}{\tiny$\pm$0.007} & \underline{0.017}{\tiny$\pm$0.007} & \underline{0.015}{\tiny$\pm$0.005} \\
\textbf{MissCNF (ours)} & 0.019{\tiny$\pm$0.005} & \underline{0.018}{\tiny$\pm$0.006} & \underline{0.018}{\tiny$\pm$0.006} & \textbf{0.016}{\tiny$\pm$0.004} & 0.015{\tiny$\pm$0.005} \\
\addlinespace

\multicolumn{6}{l}{\textit{MAR, 30\%}} \\
Listwise deletion & 0.015{\tiny$\pm$0.003} & \textbf{0.012}{\tiny$\pm$0.002} & 0.015{\tiny$\pm$0.005} & 0.016{\tiny$\pm$0.006} & 0.016{\tiny$\pm$0.006} \\
MICE & 0.016{\tiny$\pm$0.003} & 0.022{\tiny$\pm$0.005} & 0.032{\tiny$\pm$0.003} & 0.038{\tiny$\pm$0.006} & 0.038{\tiny$\pm$0.009} \\
KPI & \underline{0.014}{\tiny$\pm$0.003} & 0.014{\tiny$\pm$0.003} & 0.015{\tiny$\pm$0.005} & 0.015{\tiny$\pm$0.005} & 0.014{\tiny$\pm$0.005} \\
DiffPuter & 0.028{\tiny$\pm$0.020} & 0.030{\tiny$\pm$0.018} & 0.027{\tiny$\pm$0.011} & 0.018{\tiny$\pm$0.007} & 0.018{\tiny$\pm$0.008} \\
O-MIRACLE & \textbf{0.014}{\tiny$\pm$0.002} & \underline{0.013}{\tiny$\pm$0.001} & \underline{0.013}{\tiny$\pm$0.003} & \underline{0.013}{\tiny$\pm$0.003} & \underline{0.013}{\tiny$\pm$0.003} \\
\textbf{MissCNF (ours)} & 0.016{\tiny$\pm$0.003} & 0.013{\tiny$\pm$0.004} & \textbf{0.012}{\tiny$\pm$0.003} & \textbf{0.013}{\tiny$\pm$0.001} & \textbf{0.012}{\tiny$\pm$0.002} \\
\addlinespace
\multicolumn{6}{l}{\textit{MAR, 60\%}} \\
Listwise deletion & \underline{0.018}{\tiny$\pm$0.010} & \underline{0.016}{\tiny$\pm$0.008} & 0.021{\tiny$\pm$0.013} & 0.020{\tiny$\pm$0.012} & 0.021{\tiny$\pm$0.012} \\
MICE & \textbf{0.016}{\tiny$\pm$0.004} & 0.047{\tiny$\pm$0.010} & 0.087{\tiny$\pm$0.027} & 0.115{\tiny$\pm$0.037} & 0.116{\tiny$\pm$0.040} \\
KPI & 0.023{\tiny$\pm$0.006} & 0.022{\tiny$\pm$0.007} & \underline{0.019}{\tiny$\pm$0.007} & \underline{0.018}{\tiny$\pm$0.009} & \underline{0.019}{\tiny$\pm$0.011} \\
DiffPuter & 0.105{\tiny$\pm$0.051} & 0.096{\tiny$\pm$0.046} & 0.072{\tiny$\pm$0.035} & 0.030{\tiny$\pm$0.019} & 0.022{\tiny$\pm$0.006} \\
O-MIRACLE & 0.024{\tiny$\pm$0.019} & 0.023{\tiny$\pm$0.012} & 0.023{\tiny$\pm$0.009} & 0.021{\tiny$\pm$0.008} & 0.021{\tiny$\pm$0.008} \\
\textbf{MissCNF (ours)} & 0.019{\tiny$\pm$0.003} & \textbf{0.013}{\tiny$\pm$0.002} & \textbf{0.016}{\tiny$\pm$0.006} & \textbf{0.017}{\tiny$\pm$0.007} & \textbf{0.016}{\tiny$\pm$0.006} \\
\addlinespace
\bottomrule
\end{tabular}
\caption{RMSE$_{\text{ATE}}$ vs.\ nonlinearity strength $\alpha$ on the $\text{Collider}_\alpha$ graph, both missingness mechanisms and two missing rates. Values are mean $\pm$ std over seeds. Lower is better. Bold = best, underline = second-best, within each row-block (mechanism $\times$ rate).}
\label{tab:alpha_collider_alpha_rmse_ate_mean_mean}
\end{table}

\begin{table}[H]
\centering
\scriptsize
\setlength{\tabcolsep}{4pt}
\begin{tabular}{lccccc}
\toprule
Method & $\alpha=0.00$ & $\alpha=0.25$ & $\alpha=0.50$ & $\alpha=0.75$ & $\alpha=1.00$ \\
\midrule
\multicolumn{6}{l}{\textit{MCAR, 30\%}} \\
Listwise deletion & \underline{0.020}{\tiny$\pm$0.003} & 0.025{\tiny$\pm$0.003} & 0.028{\tiny$\pm$0.002} & \underline{0.029}{\tiny$\pm$0.002} & \underline{0.029}{\tiny$\pm$0.002} \\
MICE & \textbf{0.019}{\tiny$\pm$0.002} & 0.033{\tiny$\pm$0.004} & 0.064{\tiny$\pm$0.005} & 0.080{\tiny$\pm$0.005} & 0.086{\tiny$\pm$0.003} \\
KPI & 0.026{\tiny$\pm$0.004} & 0.028{\tiny$\pm$0.004} & 0.030{\tiny$\pm$0.004} & 0.031{\tiny$\pm$0.001} & 0.031{\tiny$\pm$0.006} \\
DiffPuter & 0.021{\tiny$\pm$0.003} & \underline{0.024}{\tiny$\pm$0.001} & \underline{0.028}{\tiny$\pm$0.004} & 0.031{\tiny$\pm$0.003} & 0.422{\tiny$\pm$0.873} \\
O-MIRACLE & 0.020{\tiny$\pm$0.002} & \textbf{0.023}{\tiny$\pm$0.003} & \textbf{0.025}{\tiny$\pm$0.002} & \textbf{0.026}{\tiny$\pm$0.003} & \textbf{0.026}{\tiny$\pm$0.003} \\
\textbf{MissCNF (ours)} & 0.020{\tiny$\pm$0.001} & 0.024{\tiny$\pm$0.001} & 0.030{\tiny$\pm$0.004} & 0.029{\tiny$\pm$0.002} & 0.029{\tiny$\pm$0.004} \\
\addlinespace
\multicolumn{6}{l}{\textit{MCAR, 60\%}} \\
Listwise deletion & 0.026{\tiny$\pm$0.003} & 0.031{\tiny$\pm$0.002} & 0.034{\tiny$\pm$0.004} & 0.036{\tiny$\pm$0.005} & 0.036{\tiny$\pm$0.004} \\
MICE & \textbf{0.020}{\tiny$\pm$0.001} & 0.061{\tiny$\pm$0.003} & 0.123{\tiny$\pm$0.007} & 0.161{\tiny$\pm$0.008} & 0.168{\tiny$\pm$0.008} \\
KPI & 0.045{\tiny$\pm$0.004} & 0.049{\tiny$\pm$0.004} & 0.046{\tiny$\pm$0.007} & 0.045{\tiny$\pm$0.004} & 0.044{\tiny$\pm$0.002} \\
DiffPuter & 0.041{\tiny$\pm$0.005} & 0.039{\tiny$\pm$0.007} & 0.037{\tiny$\pm$0.006} & 0.046{\tiny$\pm$0.005} & 0.042{\tiny$\pm$0.006} \\
O-MIRACLE & \underline{0.022}{\tiny$\pm$0.004} & \textbf{0.026}{\tiny$\pm$0.003} & \textbf{0.026}{\tiny$\pm$0.003} & \textbf{0.031}{\tiny$\pm$0.004} & \textbf{0.030}{\tiny$\pm$0.005} \\
\textbf{MissCNF (ours)} & 0.025{\tiny$\pm$0.003} & \underline{0.028}{\tiny$\pm$0.002} & \underline{0.031}{\tiny$\pm$0.004} & \underline{0.032}{\tiny$\pm$0.002} & \underline{0.032}{\tiny$\pm$0.004} \\
\addlinespace

\multicolumn{6}{l}{\textit{MAR, 30\%}} \\
Listwise deletion & \underline{0.021}{\tiny$\pm$0.002} & \textbf{0.024}{\tiny$\pm$0.002} & \underline{0.027}{\tiny$\pm$0.003} & \underline{0.029}{\tiny$\pm$0.003} & \underline{0.031}{\tiny$\pm$0.002} \\
MICE & \textbf{0.020}{\tiny$\pm$0.002} & 0.050{\tiny$\pm$0.006} & 0.088{\tiny$\pm$0.014} & 0.109{\tiny$\pm$0.018} & 0.115{\tiny$\pm$0.019} \\
KPI & 0.045{\tiny$\pm$0.012} & 0.048{\tiny$\pm$0.012} & 0.051{\tiny$\pm$0.012} & 0.052{\tiny$\pm$0.013} & 0.050{\tiny$\pm$0.011} \\
DiffPuter & 0.059{\tiny$\pm$0.026} & 0.062{\tiny$\pm$0.027} & 0.066{\tiny$\pm$0.026} & 0.061{\tiny$\pm$0.019} & 0.060{\tiny$\pm$0.021} \\
O-MIRACLE & 0.044{\tiny$\pm$0.014} & 0.046{\tiny$\pm$0.014} & 0.047{\tiny$\pm$0.013} & 0.048{\tiny$\pm$0.014} & 0.048{\tiny$\pm$0.013} \\
\textbf{MissCNF (ours)} & 0.022{\tiny$\pm$0.003} & \underline{0.025}{\tiny$\pm$0.002} & \textbf{0.027}{\tiny$\pm$0.004} & \textbf{0.028}{\tiny$\pm$0.004} & \textbf{0.030}{\tiny$\pm$0.003} \\
\addlinespace
\multicolumn{6}{l}{\textit{MAR, 60\%}} \\
Listwise deletion & 0.036{\tiny$\pm$0.002} & \underline{0.038}{\tiny$\pm$0.003} & \underline{0.039}{\tiny$\pm$0.006} & \underline{0.039}{\tiny$\pm$0.006} & \underline{0.040}{\tiny$\pm$0.002} \\
MICE & \textbf{0.018}{\tiny$\pm$0.002} & 0.080{\tiny$\pm$0.012} & 0.165{\tiny$\pm$0.027} & 0.212{\tiny$\pm$0.039} & 0.221{\tiny$\pm$0.042} \\
KPI & 0.095{\tiny$\pm$0.028} & 0.099{\tiny$\pm$0.034} & 0.097{\tiny$\pm$0.037} & 0.094{\tiny$\pm$0.038} & 0.096{\tiny$\pm$0.042} \\
DiffPuter & 0.155{\tiny$\pm$0.071} & 0.147{\tiny$\pm$0.053} & 0.139{\tiny$\pm$0.061} & 0.138{\tiny$\pm$0.069} & 0.110{\tiny$\pm$0.027} \\
O-MIRACLE & 0.092{\tiny$\pm$0.050} & 0.095{\tiny$\pm$0.049} & 0.095{\tiny$\pm$0.044} & 0.093{\tiny$\pm$0.041} & 0.093{\tiny$\pm$0.039} \\
\textbf{MissCNF (ours)} & \underline{0.029}{\tiny$\pm$0.006} & \textbf{0.032}{\tiny$\pm$0.005} & \textbf{0.032}{\tiny$\pm$0.005} & \textbf{0.037}{\tiny$\pm$0.004} & \textbf{0.036}{\tiny$\pm$0.003} \\
\addlinespace
\bottomrule
\end{tabular}
\caption{RMSE$_{\text{CF}}$ vs.\ nonlinearity strength $\alpha$ on the $\text{Collider}_\alpha$ graph, both missingness mechanisms and two missing rates. Values are mean $\pm$ std over seeds. Lower is better. Bold = best, underline = second-best, within each row-block (mechanism $\times$ rate).}
\label{tab:alpha_collider_alpha_rmse_cf_mean_mean}
\end{table}

\end{document}